\documentclass[11pt,letterpaper]{article}

\usepackage{amsmath,amssymb,amsthm}
\usepackage{deepthink}
\hypersetup{
  citecolor=brown, 
}
\usepackage{mathtools} 
\usepackage{algorithm}
\usepackage{algpseudocode}
\usepackage{enumitem}
\usepackage{tcolorbox}
\usepackage{subcaption}
\usepackage{comment}
\usepackage{oubraces}
\usepackage{wrapfig}
\usepackage{cancel}

\usepackage[nameinlink]{cleveref}
\usepackage[
  backend=biber,
  style=alphabetic,
  maxbibnames=100,
  minbibnames=100,
  maxcitenames=2,
  mincitenames=2
]{biblatex}
\newcommand{\tr}{\mathrm{Tr}}

\def\mbf{\mathbf}
\def\mbb{\mathbb}
\def\mc{\mathcal}

\newcommand{\ui}{u_{\textup{ind}}}

\newcommand{\nin}{n_{\textup{ind}}}
\newcommand{\nov}{n_{\textup{over}}}

\newcommand{\lin}{\lambda_{\textup{ind}}}
\newcommand{\lov}{\lambda_{\textup{over}}}

\newtheorem{theorem}{Theorem}
\newtheorem{proposition}{Proposition}
\newtheorem{lemma}{Lemma}

\newtheorem{result}{Result}
\newtheorem{definition}{Definition}

\title{\LARGE{The Effect of Training Task Diversity on In-Context Learning through the Lens of Low-Dimensional Subspaces}}

\authorblock{
  \href{https://soominkwon.github.io/}{\textbf{Soo Min Kwon}}$^1$,
  \href{https://alecxu00.github.io/}{\textbf{Alec S. Xu}}$^1$,
  \href{https://canyaras.com/}{\textbf{Can Yaras}}$^1$,
\href{https://dogyoons.github.io/}{\textbf{Dogyoon Song}}$^2$,
  \href{https://web.eecs.umich.edu/~girasole/}{\textbf{Laura Balzano}}$^1$,
  \href{https://qingqu.engin.umich.edu/}{\textbf{Qing Qu}}$^1$
}

\affiliation{
  $^1$Department of Electrical and Computer Engineering, University of Michigan \\
  $^2$Department of Statistics, University of California, Davis
}

\abstracttext{
\noindent 
The transformer's emergent ability to perform in-context learning (ICL) has sparked a wide range of studies designed to understand its underlying mechanisms. Existing works often study how training task diversity, defined either as the number of ICL training task vectors or as the number of function classes from which the task vectors are drawn, shapes both the learning dynamics and generalization capabilities of ICL. While both definitions have uncovered many interesting phenomena, many observations under the latter definition remain theoretically unexplained. This paper presents a minimal analytical model under which these phenomena provably emerge from the properties of the training data. By modeling the training task vectors as a mixture of low-rank Gaussians, we show how training task diversity, defined by the number of non-overlapping columns between subspaces that parameterize the covariance matrices, improves both the generalization and optimization trajectory of ICL with linear attention. In particular, we show that our model can explain (i) why training with task diversity shortens the ICL plateau and (ii) why ICL appears to achieve out-of-distribution generalization. We conclude by empirically demonstrating how our results extend to nonlinear transformers and nonlinear function classes. Overall, our work presents a  tractable framework to unify existing observations.
}

\keywords{}

\date{\today}
\correspondence{\href{kwonsm@umich.edu}{kwonsm@umich.edu}}
\resources{\href{https://github.com/soominkwon/ood-icl-generalize}{https://github.com/soominkwon/ood-icl-generalize}}

\headerlogo{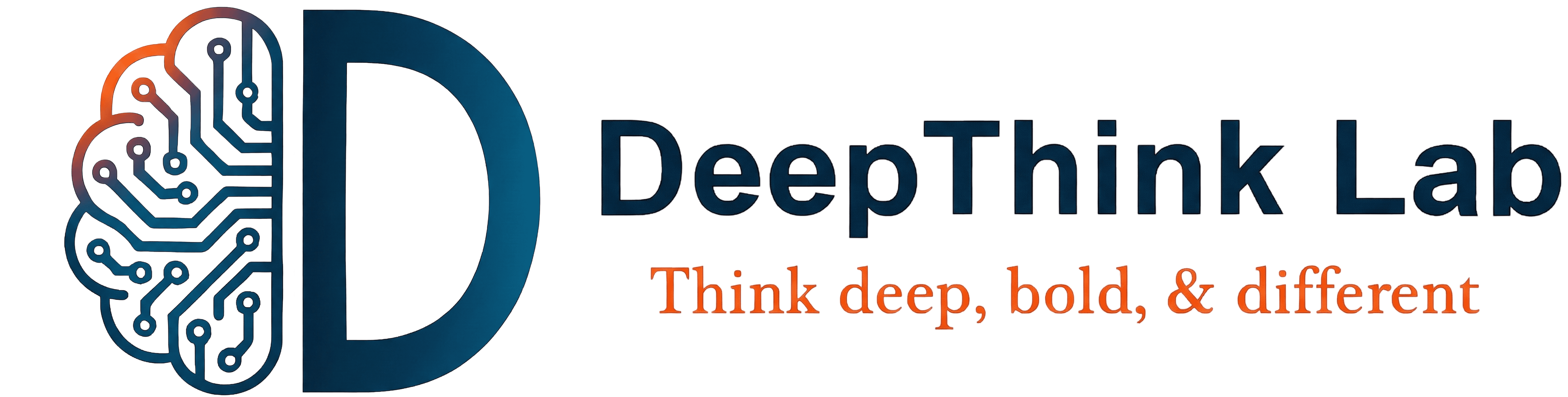}{https://deepthink-umich.github.io}

\begin{document}

\makeDeepthinkHeader
\vspace{-0.285in}
\begin{figure}[h]
    \centering
    \includegraphics[width=0.85\linewidth]{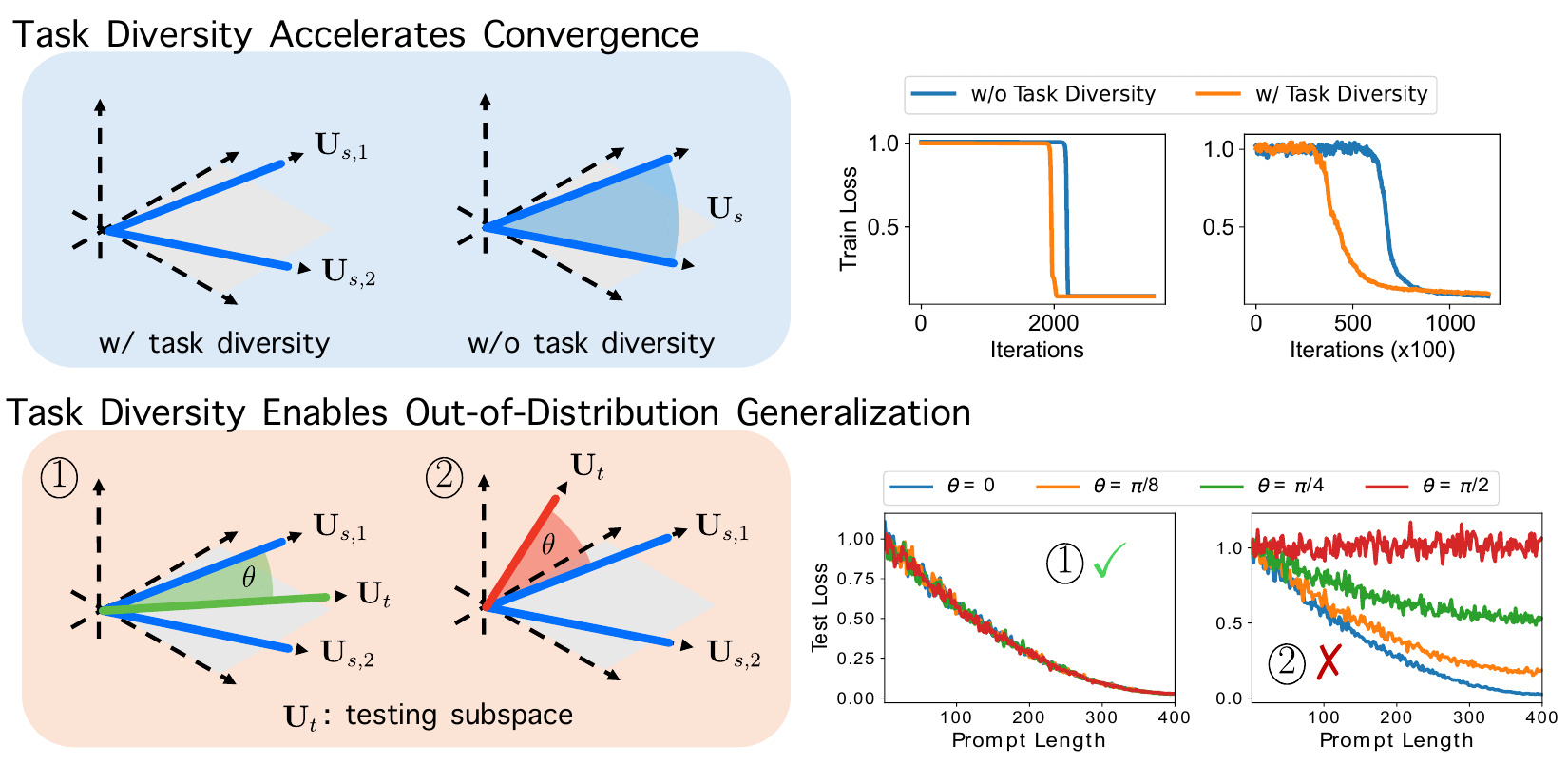}
    \caption{\small{\textbf{An illustrative overview.} We introduce a subspace-based notion of task diversity and prove its benefits for transformer learning dynamics and generalization. \textbf{Top:} task diversity accelerates convergence in both linear attention and GPT-2. \textbf{Bottom:} at the global minimum, a transformer trained with diverse tasks can generalize to all subspaces within the span of training subspaces at principal angle $\theta$, even in zero-density regions, but cannot generalize beyond this span.}}
    
    \label{fig:main_fig}
\end{figure}

\newpage
\tableofcontents


\newpage

\section{Introduction}
\label{sec:intro}

Transformer-based large language models (LLMs) \cite{vaswani2017attention} have revolutionized natural language processing and driven significant progress across a wide range of domains, including logical reasoning~\cite{wei2022chain}, sentiment classification~\cite{chen2024retrieval, wang2024chatgpt, xu2024improving}, machine translation~\cite{vilar2022prompting, agrawal2023context}, and code generation~\cite{li2023large, patel2024evaluating}.
Their success is largely attributed to scaling up model size, which has been shown to improve both performance and sample efficiency~\cite{kaplan2020scalinglawsneurallanguage}. Interestingly, large-scale transformers also exhibit emergent capabilities—abilities that arise only beyond a certain scale~\cite{wei2022emergent}. One striking example is in-context learning (ICL), where a model can perform a task simply by being prompted with a few input–output examples, without any gradient-based updates. 
This sparked both theoretical~\cite{zhang2024trained, huang2024context, li2024nonlinear, akyurek2023learning, von2023transformers, ahn2023transformers, li2024fine} and empirical~\cite{garg2022what, raventos2023pretraining, yadlowsky2023pretraining, wang2025can, ahuja2023closer, zhang2024trained, li2024nonlinear, pan2023context, kossen2024incontext} research aimed at understanding its strengths, limitations, and underlying mechanisms.

To better understand the capabilities of ICL, a growing body of work has focused on the effect of  training data and how it shapes ICL. Specifically, these works aim to identify how task diversity affects ICL, where a ``task'' refers to the function that generates the output corresponding to a given input.
Then, there are actually two different definitions of  training task diversity used in the literature: (i) the number of  training tasks in the training data, where each task is drawn from the \emph{same} function class, and (ii) the number of \emph{different} function classes represented in the training data. The first definition is often used to study the transition from memorization to generalization of ICL, and is better theoretically understood~\cite{raventos2023pretraining, asymptotic_theory}. The second definition has been used to explain a variety of other observations, including its effect on ICL's training dynamics~\cite{kim2025taskdiversityshortensicl} as well as ICL's out-of-distribution (OOD) capabilities~\cite{goddard2025when}.
However, many of these observations under the second definition remain theoretically underexplored, primarily because this notion of task diversity varies across works and is often defined in a way that is not amenable to analysis.

For example, when training transformers to perform ICL, it is often observed that the training loss remains plateaued for a substantial number of iterations before abruptly decreasing to near-optimal values~\cite{chen2024sudden, pulkit_abrupt, kim2025taskdiversityshortensicl}. Interestingly,~\cite{kim2025taskdiversityshortensicl} empirically showed that  training task diversity shortens this plateau, demonstrating that sampling task functions from different classes (e.g., linear regression, quadratic regression, and LeakyReLU regression) during training accelerates learning over training each function class independently. We reproduce this observation in \Cref{fig:icl-short-plateau} (left), where we show that a model trained with task diversity converges to a solution much faster than when training on each task individually.
Their hypothesis is that this plateau arises from the difficulty of learning a common structure across tasks, and that jointly training multiple tasks with shared structure accelerates learning. This in turn supports the use of diverse yet similar task functions in  training data for more efficient transformer training. While~\cite{kim2025taskdiversityshortensicl} provide strong empirical evidence for this hypothesis, a rigorous theoretical justification remains lacking.


Moreover,~\cite{goddard2025when} propose another definition of task diversity with an aim to demystify the OOD  capabilities of ICL.
In the literature, there exist mixed conclusions on whether ICL can generalize OOD: while many works claim that ICL is robust to distribution shifts in a linear regression setting~\cite{garg2022what, zhang2024trained}, \cite{wang2025can} recently challenged these views, empirically demonstrating that ICL generally succeeds only on in-distribution language data.
To unify these views,~\cite{goddard2025when} consider a setting in which  training task vectors are drawn from a subset of the unit hypersphere and define task diversity as the size of this subset. They show that increasing this size (and thus greater task diversity) is key to enabling OOD generalization. 
Unfortunately, their definition of the task vector made theoretical analysis intractable, restricting their study to empirical results.

While the works of~\cite{kim2025taskdiversityshortensicl} and~\cite{goddard2025when} highlight and advocate for task diversity in the  training data for training transformers, there is no clear theoretical justification for when and how task diversity specifically helps training transformers. In this work, we propose a mathematical framework that aims to unify these works and clearly identify how task diversity benefits ICL. We analyze the training dynamics and generalization behavior of ICL in a single-layer linear attention model for linear regression, where the  training task vectors are drawn from a mixture of Gaussians. Specifically, each mixture component is parameterized using a low-dimensional subspace, where subspaces across components share a common subspace. Then, we use this parameterization along with overlapping subspaces to define a notion of task diversity in a more principled way, while still reproducing a shortened plateau as observed by~\cite{kim2025taskdiversityshortensicl} and enabling OOD generalization similar to that of~\cite{goddard2025when} (see \Cref{fig:main_fig} for an illustrative overview).
Overall, our findings can be summarized as follows:

\begin{tcolorbox}[title=Summary of Contributions:]
\begin{itemize}[leftmargin=*]
    
    \item \textbf{Task Diversity Accelerates Convergence (\Cref{sec:acceleration}):} Using gradient flow, we show that having diverse task vectors yields a two-phase learning dynamic: an initial phase in which overlapping components across subspaces are learned, followed by a rapid learning phase of non-overlapping subspace components.  We derive the discrete-time convergence to a $\delta$-stationary point with an ansatz, and show that the convergence time largely scales with the number of shared components. This implies that the ICL plateau corresponds to learning this shared structure; once learned, individual components converge quickly, confirming the hypothesis of \cite{kim2025taskdiversityshortensicl} in our mathematical setting.
 
    \item \textbf{Task Diversity Enables Out-of-Distribution Generalization (\Cref{sec:ood}):} Our mathematical model allows us to define subspaces at specific principal angles relative to the  training subspaces. We show that a transformer trained on tasks drawn from a mixture of Gaussians can generalize to subspaces across all principal angles within the training span, including regions with zero probability density under the training distribution. In contrast, when tested on subspaces outside the training span, the model incurs test risk as a function of the angle, indicating it cannot generalize beyond the training subspaces.
Together, these results provide a principled explanation for the OOD capabilities of ICL observed in the literature: test task vectors must lie within the span of the training task vectors.
\end{itemize}
\end{tcolorbox}

\paragraph{Notation.}
We denote scalars with unbolded letters (e.g., $m, M$), vectors with bold lower-case letters (e.g., $\mbf{x}$)
and matrices with bold upper-case letters (e.g., $\mbf{X}$).
We use $\mbf{I}_n$ to denote an identity matrix of size $n \in \mbb{N}$. We use $\mc{R}(\mbf{X})$ to denote the range or the column space of the matrix $\mbf{X}$. Lastly, given any $n \in \mbb{N}$, we use $[n]$ to denote the index set $\{1, \ldots, n\}$. 

\section{Background and Problem Setup} \label{sec:setup}

To facilitate analysis, we require a tractable model and experimental setup that exhibit ICL. \cite{ahn2024linear} empirically demonstrated that many phenomena observed in vanilla transformers can also be replicated in transformers with linear attention. These findings motivated subsequent works~\cite{ahn2023transformers, zhang2024trained, li2024fine, wu2024how, gozeten2025testtime, chang2025taskspecific} to adopt linear attention together with linear regression as a testbed for studying ICL, an approach we also follow in our analysis.
In the following, we introduce these two components in detail, discuss their relationship to ICL, and explain how we use linear regression to define a new notion of task diversity.

\subsection{In-Context Learning Setup}

Given a sequence of $n$ input-output example pairs $\{\mbf{x}_i, y_i\}_{i=1}^n \subset \mbb{R}^d \times \mbb{R}$, the objective of ICL is to predict the output $y_{n+1} \in \mbb{R}$ corresponding to an unseen query $\mbf{x}_{n+1} \in \mbb{R}^d$. Following prior works  \cite{garg2022what}, we assume each output is generated via $y_i = f(\mbf{x}_i)$ for some function $f:\mbb{R}^d \to \mbb{R}$, where $f \in \mc{F}$ is sampled from a distribution over a function class $\mc{F}$. By convention, a transformer takes in these pairs as a prompt $\mbf{Z} \in \mbb{R}^{(n+1) \times (d+1)}$ constructed in the following form: 
\begin{align*}
    \mbf{Z} &= \begin{bmatrix}
    \mbf{z}_1 & \ldots & \mbf{z}_n & \mbf{z}_{n+1}
    \end{bmatrix}^\top = \begin{bmatrix}
    \mbf{x}_1 & \ldots & \mbf{x}_n & \mbf{x}_{n+1} \\
    y_1 & \ldots & y_n & 0
    \end{bmatrix}^\top,
\end{align*}
where $\mbf{z}_i := \begin{bmatrix}
    \mbf x_i^\top & y_i
\end{bmatrix}^\top$ and $\mbf{z}_{n + 1} := \begin{bmatrix}
    \mbf x_{n + 1}^\top & 0
\end{bmatrix}^\top$. 
Then, a transformer $g_{\mc{W}} : \mbb{R}^{(n+1)\times (d+1)} \to \mbb{R}$, parameterized by weights $\mc{W}$, takes these prompts as input and is trained by minimizing the following expected squared loss with respect to $\mc{W}$:
\begin{align}
\label{eqn:expected_lin_att_objective}
\underset{\mc{W}}{\min} \,\, \mc{L}(\mc{W}) := 
\mbb{E}\left[\left(y_{n+1} - g_{\mc{W}}(\mbf{Z})  \right)^2 \right].
\end{align}
During inference time, we test the trained  model denoted $g^\star_{\mc{W}}$ using $m + 1$ paired examples $\{\mbf{x}_j, \tilde{y}_j\}_{j=1}^{m+1}$. The input prompts are constructed in the same manner:
\begin{align*}
&\widetilde{\mbf{Z}} = \begin{bmatrix}
    \mbf{x}_1 & \ldots & \mbf{x}_m & \mbf{x}_{m+1} \\
    \widetilde{y}_1 & \ldots & \widetilde{y}_m & 0
    \end{bmatrix}^\top 
    \,\, \text{and} \quad \widetilde{\mbf{z}}_{m+1} = \begin{bmatrix}
         \mbf{x}_{m+1} \\ 0
    \end{bmatrix},
\end{align*}
where the labels are generated via $\widetilde{y}_j = \widetilde{f}(\mbf{x}_j)$ for some function $\widetilde{f} \in \mc{F}$.

\subsection{Single-Layer Linear Attention}

For linear attention, given the full prompt $\mbf{Z}$, we consider the following masked prompt~\cite{ahn2023transformers, mahankali2024one}:
\begin{align*}
\mbf{Z}_{\mc{M}} = 
    \begin{bmatrix}
    \mbf{z}_1 & \ldots & \mbf{z}_n & \mbf{0}
    \end{bmatrix}^\top.
\end{align*}
Then, the single-layer linear attention model sets $g_{\mc{W}}$ as follows to make the prediction $\widehat{y}_{n+1}$: 
\begin{align}
\label{eqn:linear_att} 
\widehat{y}_{n+1} &= g_{\mc{W}}(\mbf{Z}) = \frac{1}{n}\left(\mbf{z}_{n+1}^\top\mbf{W}_{Q} \mbf{W}_K^\top \mbf{Z}_\mc{M}^\top \right) \mbf{Z}_\mc{M} \mbf{W}_V \mbf{p},
\end{align}
where $\mbf{p} =[ \mbf{0}_d \,\,\, 1]^\top$
and $\mbf{W}_K, \mbf{W}_{Q}, \mbf{W}_V \in \mbb{R}^{(d+1) \times (d+1)}$ are the key, query, and value weight matrices, respectively. This sets $\mc{W} = \{\mbf{W}_K, \mbf{W}_{Q}, \mbf{W}_V\}$ as the collection of trainable weights corresponding to the linear attention model. 
Then, let $\mc{W}^\star = \{\mbf{W}_K^\star, \mbf{W}_Q^\star, \mbf{W}_V^\star\}$ be the optimal weights obtained by minimizing the loss in \Cref{eqn:expected_lin_att_objective}. During inference time, we test the optimal linear attention model denoted as $g^\star_{\mc{W}}$ using the $m + 1$ paired examples:
\begin{align*}
   \widehat{y}_{m+1} &=  g^\star_{\mc{W}}\left( \widetilde{\mbf{Z}} \right) = \frac{1}{m}\left(\widetilde{\mbf{z}}_{m+1}^\top\mbf{W}_Q^\star \mbf{W}_K^{\star\top} \widetilde{\mbf{Z}}_\mc{M}^\top \right) \widetilde{\mbf{Z}}_\mc{M} \mbf{W}_V^\star \mbf{p},
\end{align*}
where we normalize by a factor of $m$ instead of $n$.

\begin{figure}[t!]
  \centering

  \begin{subfigure}[t]{0.45\linewidth}
    \centering
    \includegraphics[width=\linewidth]{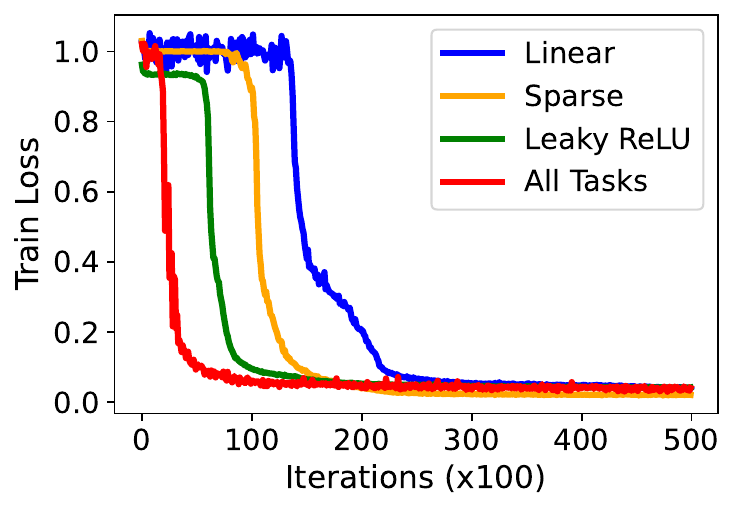}
  \end{subfigure}
  \hfill
  \begin{subfigure}[t]{0.45\linewidth}
    \centering
    \includegraphics[width=\linewidth]{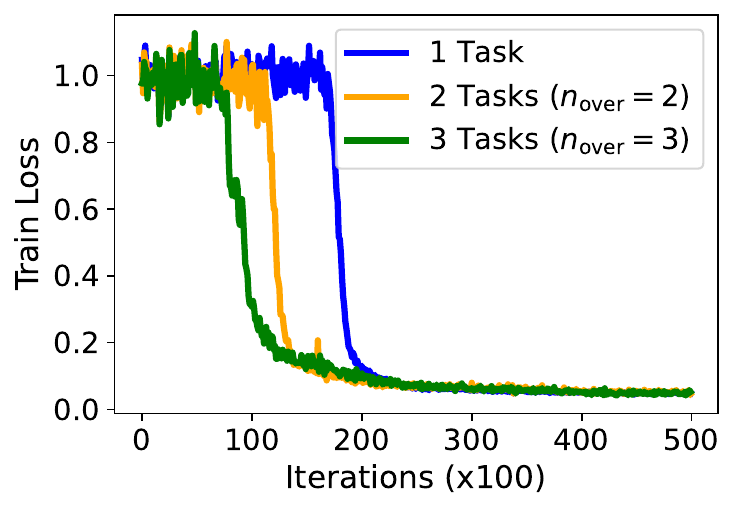}
  \end{subfigure}

  \caption{\textbf{Demonstrating the effects of training task diversity.} We train a GPT-2 model for ICL, and show that task diversity shortens the ICL plateau. \textbf{Left:} Reproducing the observations of~\cite{kim2025taskdiversityshortensicl} with $d=10$; training on all tasks jointly (linear, sparse, Leaky ReLU) drops the training loss faster than that of training on each task individually. \textbf{Right:} Plot of the training loss using our definition of task diversity with $d=15$ and $q=12$. When drawn from multiple tasks with shared directions across all task vectors, the ICL plateau drops more quickly than for a single individual task. In both figures, we draw from each task (or function class) with equal probability.}
  \label{fig:icl-short-plateau}
\end{figure}

\subsection{Linear Regression and Training Task Diversity}
\label{sec:diversity}

With linear attention, the most commonly studied ICL setup is the linear regression setting, which specifies $f(\mbf{x}) = \mbf{w}^\top \mbf{x}$ for some task vector $\mbf{w} \in \mbb{R}^d$.
In this section, our goal is to use linear regression to define a  notion of task diversity similar to \cite{goddard2025when, kim2025taskdiversityshortensicl}, but in a more tractable manner.

Recall that~\cite{kim2025taskdiversityshortensicl} defines training task diversity as drawing from multiple different task functions during training (e.g., linear regression, Leaky ReLU regression), sampled according to some probability distribution, rather than from a single task function. Following this definition, suppose there are a total of $K$ different \underline{s}ource tasks we wish to learn. Instead of using different function classes, we capture task diversity through $K$ Gaussian distributions that differ only in their covariance structure:
\begin{align*}
    \text{Task 1: } \mathcal{N}(\mathbf{0}, \mathbf{\Sigma}_{s, 1}),
    \quad \text{Task 2: } \mathcal{N}(\mathbf{0}, \mathbf{\Sigma}_{s, 2}),
    \quad \ldots, \quad
    \text{Task $K$: } \mathcal{N}(\mathbf{0}, \mathbf{\Sigma}_{s, K}),
\end{align*}
where $\mbf{\Sigma}_{s,k} \in \mbb{R}^{d\times d}$. We parameterize each covariance matrix using a low-rank orthonormal basis $\mbf{U}_{s,k} \in \mathbb{R}^{d \times r_k}$. We require that these task bases share a common subspace:
\begin{align*}
    \nov \coloneqq \mathrm{dim}\left(\,\,\bigcap_{k=1}^K \mathrm{span}\left(\mbf{U}_{s, k} \right)\right) > 0, \quad \text{with} \quad \nov < q,
\end{align*}
where we assume that any two task subspaces intersect only in this common subspace and are otherwise mutually orthogonal. Then, each covariance matrix takes the form:
\begin{align*}
    \mbf{\Sigma}_{s, k} = \mbf{U}_{s, k}\mbf{U}_{s, k}^\top + \epsilon \cdot \mbf{I}_d,
\end{align*}
where $\epsilon > 0$ is a small constant to ensure a non-degenerate distribution.

The intuition for this setup is as follows:~\cite{kim2025taskdiversityshortensicl} hypothesize that task diversity shortens the ICL plateau because jointly training on multiple tasks with shared structure makes that structure easier to learn. This setup allows us to directly test this hypothesis by checking whether having $0 < \nov < q$ accelerates convergence, while the mutual orthogonality constraint ensures that each task contains independent components to be learned alongside the shared structure. Then, we say that a training task vector is \emph{diverse} if it is drawn from a mixture of at least two distributions whose subspaces have a non-trivial intersection. 

We compare this to the non-diverse (or single-task) setting, which draws from a single orthonormal basis $\mbf{U}_s \in \mbb{R}^{d\times q}$ that spans the union of all task subspaces, $\mc{U}_s$, defined as:
\begin{align*}
    \mc{U}_s \coloneqq \mathrm{span}\left(\mbf{U}_{s, 1} \right) + \ldots + \mathrm{span}\left(\mbf{U}_{s, K} \right), \quad \text{where} \quad \mathrm{dim}(\mc{U}_s) = q < d.
\end{align*}
While both setups span the same overall subspace $\mc{U}_s$ and aim to learn all $K$ tasks, this allows us to study task diversity can accelerate learning, as shown in \Cref{fig:icl-short-plateau}. Below, we formalize the data generation process for both task diverse and non-diverse setups.

\begin{tcolorbox}
\begin{definition}[Task Diversity]
\label{def:training_tasks}
The training prompts consist of  pairs $\{(\mbf{x}_i, y_i)\}_{i=1}^{n+1}$, where for each $i \in [n+1]$, the inputs are drawn independently as $\mbf{x}_i \sim \mc{N}(\mbf{0}, \mbf{I}_d)$ and the targets are generated by
\begin{equation}
\label{eq:task_setup}
y_i = \mbf{w}^\top \mbf{x}_i + \xi_i, \quad \text{where} \quad \xi_i \sim \mc{N}(0, \sigma^2),
\end{equation}
and $\sigma \geq 0$ represents the noise level. 
We define task diversity based on the distribution of the task vector $\mbf{w}$:
\begin{enumerate}[label=\textup{(\roman*)}]
    \item \textbf{With Diversity:} $\mbf{w}$ is drawn from a mixture of $K \geq 2$ Gaussians: 
    \begin{equation}
    \label{eq:mog_setup}
    \mbf{w} \sim \frac{1}{K} \sum_{k=1}^K \mc{N}(\mbf{0}, \mbf{\Sigma}_{s, k}) = \begin{cases}
        \mathcal{N}(\mathbf{0}, \mathbf{\Sigma}_{s, 1}) &\text{w.p.  }  1/K, \\
        \quad\quad\vdots \\
        \mathcal{N}(\mathbf{0}, \mathbf{\Sigma}_{s, K}) &\text{w.p.  } 1/K.
    \end{cases}
    \end{equation}
    \item
    \textbf{Without Diversity:} $\mbf{w}$ is drawn from a single Gaussian, $\mbf{w} \sim \mc{N}(\mbf{0}, \mbf{\Sigma}_s)$:
    \begin{equation}
    \label{eq:single_cov}
    \mbf{\Sigma}_s = \mbf{U}_s\mbf{U}_s^\top +\epsilon \cdot \mbf{I}_d, \quad \text{where} \quad \mbf{U}_s \in \mbb{R}^{d\times q} \text{  is an orthonormal basis for $\mc{U}_s$}.
    \end{equation}
\end{enumerate}
\end{definition}
\end{tcolorbox}
\noindent We remark that this formulation for the task vectors employs a union-of-subspaces model, which has recently served as a theoretical testbed across a wide range of domains~\cite{wang2024diffusion,xu2025understanding}.
\section{Task Diversity Accelerates Convergence}
\label{sec:acceleration}

In this section, we study how task diversity, as defined in \Cref{sec:diversity}, benefits transformer learning dynamics. This section is organized as follows: \Cref{sec:gf_dynamics} establishes the general gradient flow (GF) dynamics of a single-layer linear attention model; \Cref{sec:main_speedup} states our main result quantifying the convergence speedup from task diversity; and \Cref{sec:single_subspace} and \Cref{sec:two_stage_paradigm} develop the proof sketch for the single-task and task-diverse cases, respectively.

To simplify the analysis, we consider the $K=2$ case of
\Cref{def:training_tasks} where $r = r_1 = r_2$: for $i \in [n+1]$, we draw
$\mbf{x}_i \sim \mc{N}(\mbf{0}, \mbf{I}_d)$ independently and
\begin{align}
\label{eq:two_setup}
y_i = \mbf{w}^\top \mbf{x}_i + \xi_i,
\quad \xi_i \sim \mc{N}(0, \sigma^2),
\quad \mbf{w} \sim 
\begin{cases}
\mc{N}(\mbf{0}, \mbf{\Sigma}_{s, 1}) & \text{w.p.\ } 1/2, \\
\mc{N}(\mbf{0}, \mbf{\Sigma}_{s, 2}) & \text{w.p.\ } 1/2.
\end{cases}
\end{align}
We compare this to the single-task case, which sets $\mc{U}_s = \mathrm{span}(\mbf{U}_{s, 1}) +  \mathrm{span}(\mbf{U}_{s, 2})$ with $\mathrm{dim}(\mc{U}_s) = q$, $\mathrm{rank}(\mbf{U}_{s, 1}) = \mathrm{rank}(\mbf{U}_{s, 2}) = r$, and $r > q/2$ such that $\nov > 0$. 
Furthermore, we will make the following assumptions throughout this section: (i) the component covariance matrices are normalized, i.e., $\tr(\mbf{\Sigma}_{s, 1}) = \tr(\mbf{\Sigma}_{s, 2}) = 1$ (which implies $\tr(\mbf{\Sigma}_s) = 1$), and (ii) the covariance matrices are exactly low-rank, i.e., $\epsilon = 0$.

\subsection{Gradient Flow Training Trajectory}
\label{sec:gf_dynamics}

Following the work of \cite{zhangtraining}, we first derive the general GF dynamics of linear attention with respect to the expected loss in \Cref{eqn:expected_lin_att_objective}, subject to the following initial conditions:
\begin{align}
\label{eqn:gf_initial}
        \mbf{W}_Q(0) = \mbf{W}_K(0) = \alpha \cdot \mbf{I}_{d+1} \quad \text{and} \quad \mbf{W}_V(0) = \phi \cdot \mbf{I}_{d+1},
\end{align}
where $\alpha > 0$ is a small constant and $\phi = \alpha\sqrt{q}$. 
The small initialization constant along with GF is often used in analyzing learning dynamics, as it has been shown to yield richer behavior compared to operating in the kernel regime~\cite{zhangtraining, jacot2022saddletosaddledynamicsdeeplinear, domine2025from, gissin2020the, stoger2021smallakin, kwon2024compression, li2021towards}. The value matrix is initialized using $\phi$ instead of $\alpha$ to satisfy a conservation law used in our analysis (see \Cref{lemma:conservation}). Then, the following result describes the limiting GF dynamics. 

\begin{tcolorbox}
    \begin{proposition}
\label{prop:gf_dynamics}
Suppose the task vector $\mbf{w} \in \mbb{R}^d$ is a zero-mean random vector with covariance matrix $\mbf{\Sigma}_s \coloneqq \mbb{E}[\mbf{w}\mbf{w}^\top]$, whose eigendecomposition is given by $\mbf{\Sigma}_s = \mbf{U}_s\mbf{\Lambda}_s \mbf{U}_s^\top$.
Given the initial conditions in \Cref{eqn:gf_initial}, the linear attention model in \Cref{eqn:linear_att} is equivalent to
\begin{align*}
     g_{\mc{W}}(\mbf{Z}) = v\mbf{x}_{n+1}^\top \mbf{Q}\mbf{K}^\top \mbf{c}, \quad \text{where} \quad \mbf{c} = \frac{1}{n} \sum_{i=1}^n y_i \mbf{x}_i,
\end{align*}
$v \in \mbb{R}$ is the $(d{+}1, d{+}1)$ entry of $\mbf{W}_V$, and $\mbf{Q}, \mbf{K} \in \mbb{R}^{d\times d}$ are the top-left principal submatrices of $\mbf{W}_{Q}$ and $\mbf{W}_K$, respectively.
Furthermore, they have the following limiting GF dynamics:
\begin{align*}
    \tau \underline{\dot{\mbf{Q}}} \coloneqq\tau \lim_{n\to \infty} \dot{\mbf{Q}} &= \tau \lim_{n\to \infty} \mbf{U}_s \dot{\mbf{\Lambda}}_Q \mbf{U}_s^\top =  \mbf{U}_s \left(v\left(\mbf{\Lambda}_s - v \mbf{\Lambda}_{Q} \mbf{\Lambda}_{K} \mbf{\Lambda}_s \right) \mbf{\Lambda}_{K} \right) \mbf{U}_s^\top,\\ 
    \tau\underline{\dot{\mbf{K}}} \coloneqq \tau \lim_{n\to \infty} \dot{\mbf{K}} &= \tau \lim_{n\to \infty} \mbf{U}_s \dot{\mbf{\Lambda}}_K \mbf{U}_s^\top =  \mbf{U}_s\left(v\mbf{\Lambda}_{Q} \left(\mbf{\Lambda}_s - v \mbf{\Lambda}_{Q} \mbf{\Lambda}_{K} \mbf{\Lambda}_s \right)\right) \mbf{U}_s^\top, \\
    \tau\underline{\dot{v}} \coloneqq \tau \lim_{n\to \infty}\dot{v} &= \tr\left(\mbf{\Lambda}_{Q} \left( \mbf{\Lambda}_s - v \mbf{\Lambda}_{Q} \mbf{\Lambda}_{K} \mbf{\Lambda}_s\right)\mbf{\Lambda}_{K}\right),
\end{align*}
where $\tau$ is a time constant, and
$\mbf{Q}(t)=\mbf{U}_s\mbf{\Lambda}_Q(t)\mbf{U}_s^\top$ and
$\mbf{K}(t)=\mbf{U}_s\mbf{\Lambda}_K(t)\mbf{U}_s^\top$, with
$\mbf{\Lambda}_Q(t)$ and $\mbf{\Lambda}_K(t)$ diagonal.
\end{proposition}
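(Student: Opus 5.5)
We follow the approach of \cite{zhangtraining}. The first step is to show that, under the initialization in \Cref{eqn:gf_initial}, the gradient flow keeps a block structure. Write
\[
\mbf{W}_Q=\begin{bmatrix}\mbf{Q} & \mbf{q}_1\\ \mbf{q}_2^\top & q_3\end{bmatrix},
\]
and partition $\mbf{W}_K$ and $\mbf{W}_V$ in the same way. The prediction $\widehat{y}_{n+1}$ only uses $\mbf{z}_{n+1}^\top\mbf{W}_Q$, and the last coordinate of $\mbf{z}_{n+1}$ is zero. So only the first $d$ rows of $\mbf{W}_Q$ matter; the same holds for the last column of $\mbf{W}_V$.

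We then check that the gradients with respect to the off-diagonal blocks, and with respect to the top-left $d\times d$ block of $\mbf{W}_V$, vanish whenever those blocks are zero. This uses the symmetry $(\mbf{x},y,\mbf{w})\mapsto(-\mbf{x},-y,-\mbf{w})$ and the fact that odd moments of the Gaussian inputs and the zero-mean $\mbf{w}$ vanish. These blocks start at zero (identity initialization restricted to the off-diagonal blocks is zero), or at least are decoupled from the rest. By uniqueness of ODE solutions, the reduced system is invariant: along the flow $\mbf{W}_V\mbf{p}=v\,\mbf{e}_{d+1}$ and only $\mbf{Q},\mbf{K}$ matter. Expanding \Cref{eqn:linear_att} then gives
\[
\widehat{y}_{n+1}=\tfrac1n\,\mbf{x}_{n+1}^\top\mbf{Q}\mbf{K}^\top\sum_i \mbf{x}_i\,(\mbf{x}_i^\top,y_i)\,v\,\mbf{e}_{d+1},
\]
which equals $v\,\mbf{x}_{n+1}^\top\mbf{Q}\mbf{K}^\top\mbf{c}$.

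Next we compute the loss. Set $\mbf{M}=v\mbf{Q}\mbf{K}^\top$ and condition on $\mbf{w}$. Since $\mbf{x}_{n+1}$ is independent of $\mbf{c}$,
\[
\mc{L}=\mbb{E}\,\|\mbf{w}-\mbf{M}^\top\mbf{c}\|^2+\sigma^2 .
\]
We have $\mbb{E}[\mbf{c}\mid\mbf{w}]=\mbf{w}$, and $\mathrm{Cov}(\mbf{c}\mid\mbf{w})=O(1/n)$, involving $\|\mbf{w}\|^2\mbf{I}+\mbf{w}\mbf{w}^\top+\sigma^2\mbf{I}$. Hence
\[
\mc{L}=\tr(\mbf{\Sigma}_s)-2\tr(\mbf{M}\mbf{\Sigma}_s)+\tr(\mbf{M}\mbf{M}^\top\mbf{\Sigma}_s)+O(1/n)+\sigma^2 .
\]
Only the second moment $\mbf{\Sigma}_s$ enters, which is why the statement needs only zero mean and covariance, not Gaussianity of $\mbf{w}$. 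Differentiating with respect to $\mbf{Q},\mbf{K},v$ and sending $n\to\infty$ gives
\[
\dot{\mbf{Q}}\propto v(\mbf{\Sigma}_s-\mbf{M}^\top\mbf{\Sigma}_s)\mbf{K},\qquad
\dot{\mbf{K}}\propto v\,\mbf{Q}^\top(\mbf{\Sigma}_s-\mbf{M}^\top\mbf{\Sigma}_s)^\top,
\]
and $\dot v\propto\tr(\mbf{Q}^\top(\mbf{\Sigma}_s-\mbf{M}^\top\mbf{\Sigma}_s)\mbf{K})$, up to transposes and the constant $\tau$. The constant $\tau$ absorbs the factor $2$ and the learning-rate scaling.

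The last step is the diagonal ansatz. Extend $\mbf{U}_s$ to a full orthonormal basis, and note that $\alpha\mbf{I}=\mbf{U}_s(\alpha\mbf{I})\mbf{U}_s^\top$ on the span, with the complement handled separately. Suppose $\mbf{Q}=\mbf{U}\mbf{\Lambda}_Q\mbf{U}^\top$ and $\mbf{K}=\mbf{U}\mbf{\Lambda}_K\mbf{U}^\top$ are symmetric and simultaneously diagonal with $\mbf{\Sigma}_s$. Then every right-hand side above is of the form $\mbf{U}(\text{diagonal})\mbf{U}^\top$. So the manifold of commuting diagonal matrices is invariant, and by uniqueness the solution stays on it. Substituting reproduces exactly the stated equations for $\dot{\mbf{\Lambda}}_Q,\dot{\mbf{\Lambda}}_K,\dot v$.

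On the orthogonal complement, $\mbf{\Sigma}_s$ has zero eigenvalues, so those directions obey $\dot\lambda_Q=-v^2\lambda_Q\lambda_K\cdot 0=0$ in the limit. They stay at $\alpha$ but do not affect the prediction restricted to the span, which justifies writing everything through $\mbf{U}_s$.

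\textbf{Main obstacle.} I expect the hardest part to be the first step. We must verify rigorously that the off-diagonal blocks of all three weight matrices, and the unused parts of $\mbf{W}_V$, remain exactly zero, or are irrelevant, under the full finite-$n$ flow from the $\alpha\mbf{I}$ and $\phi\mbf{I}$ initialization. This requires handling the fourth-order moments in the $O(1/n)$ terms carefully. It also depends on the exact role of the conservation law in \Cref{lemma:conservation}, which fixes $\phi=\alpha\sqrt q$ so that $v$ and the $\mbf{\Lambda}$'s stay balanced.
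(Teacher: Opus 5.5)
Your overall route is the paper's. You show that the blocks coupling the label coordinate stay zero, compute the gradient via $\mathbb{E}[\mathbf{c}\mathbf{w}^\top]=\mathbf{\Sigma}_s$ and $\mathbb{E}[\mathbf{c}\mathbf{c}^\top]=\frac{n+1}{n}\mathbf{\Sigma}_s+\frac{\operatorname{Tr}(\mathbf{\Sigma}_s)+\sigma^2}{n}\mathbf{I}_d$, let $n\to\infty$, and note that matrices diagonal in the eigenbasis of $\mathbf{\Sigma}_s$ form an invariant set. Differentiating the loss rather than the integrand is cosmetic, and your uniqueness argument for the diagonal reduction is more careful than the paper's. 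The slip $\mathbf{M}^\top\mathbf{c}$ should be $\mathbf{M}\mathbf{c}$, since $g=\mathbf{x}_{n+1}^\top\mathbf{M}\mathbf{c}$. It is harmless: both versions give the same gradients wherever $\mathbf{M}$ is symmetric and commutes with $\mathbf{\Sigma}_s$, and those are the only points the invariance argument evaluates.

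The step that does not go through as proposed is the block invariance, which you rightly flag. The map $(\mathbf{x},y,\mathbf{w})\mapsto(-\mathbf{x},-y,-\mathbf{w})$ does not preserve the data law: $\mathbf{w}^\top\mathbf{x}$ is unchanged, so $y$ should not flip. Even formally, it fixes $\mathbf{c}$, $\widehat{\mathbf{\Lambda}}=\frac1n\sum_i\mathbf{x}_i\mathbf{x}_i^\top$ and $s\coloneqq\frac1n\sum_i y_i^2$, while flipping both $y_{n+1}-g$ and $\mathbf{x}_{n+1}$. So every integrand you need to vanish is even under it.

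Here is what works. Write $\mathbf{W}_K=\begin{bmatrix}\mathbf{K}&\mathbf{k}_1\\ \mathbf{k}_2^\top&k_3\end{bmatrix}$ and let $\mathbf{v}_1$ be the top-right block of $\mathbf{W}_V$. On $\{\mathbf{q}_1=\mathbf{k}_1=\mathbf{k}_2=\mathbf{v}_1=\mathbf{0}\}$, with $\mathbf{M}=v\mathbf{Q}\mathbf{K}^\top$, one gets $\nabla_{\mathbf{k}_1}\mathcal{L}=\mathbf{0}$ and
\begin{align*}
\nabla_{\mathbf{v}_1}\mathcal{L}&=-2\,\mathbb{E}\big[\widehat{\mathbf{\Lambda}}\mathbf{K}\mathbf{Q}^\top(\mathbf{w}-\mathbf{M}\mathbf{c})\big],\\
\nabla_{\mathbf{k}_2}\mathcal{L}&=-2v\,\mathbf{Q}^\top\,\mathbb{E}\big[s\,(\mathbf{w}-\mathbf{M}\mathbf{c})\big],\\
\nabla_{\mathbf{q}_1}\mathcal{L}&=-2vk_3\,\mathbb{E}\big[s\,(\mathbf{w}-\mathbf{M}\mathbf{c})\big].
\end{align*}
The first is linear in $(\mathbf{w},\boldsymbol{\xi})$ given the inputs, so zero mean kills it; this is the paper's argument. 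The other two are cubic and contain $\mathbb{E}[\|\mathbf{w}\|^2\mathbf{w}]$. They are not switched off by their prefactors, because $k_3(0)=\alpha\neq 0$ and $v(0)=\phi\neq 0$.

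They do vanish under the genuine symmetry $(\mathbf{w},\boldsymbol{\xi})\mapsto-(\mathbf{w},\boldsymbol{\xi})$ with all inputs fixed, which flips every $y_i$. That requires $\mathbf{w}$ and $-\mathbf{w}$ to have the same law. This holds for the paper's zero-mean Gaussian mixtures, but the claim that ``odd moments of a zero-mean $\mathbf{w}$ vanish'' is false beyond the first moment. Consequently, your closing remark that only the mean and covariance matter is true of the reduced loss, but not of the reduction itself.

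The paper's own treatment of this step is not airtight either. Its computation of $\dot{\mathbf{k}}$ omits the factor $s$. It also replaces $\mathbb{E}[\mathbf{x}_{n+1}(y_{n+1}-g)]$ by $\mathbb{E}[\mathbf{x}_{n+1}]\,\mathbb{E}[y_{n+1}-g]$. And it assumes the last entry of $\mathbf{k}$ starts at zero, although it equals $\alpha$. The parity argument above is what the step actually needs.

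Two small corrections. First, the top-left block of $\mathbf{W}_V$ starts at $\phi\mathbf{I}_d$, not zero; it is simply absent from $\mathbf{W}_V\mathbf{p}$, so its gradient vanishes identically. Second, neither the conservation law nor the choice $\phi=\alpha\sqrt{q}$ plays any role in this proposition.
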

\end{tcolorbox}

The time constant $\tau$ is set to $\tau = 1 / \eta$ for some learning rate $\eta$, and can be viewed as a conversion factor between the discrete iteration index and the continuous-time derivative used for analysis. \Cref{prop:gf_dynamics} provides the dynamics for a general task vector whose mean is centered at zero and whose covariance matrix is $\mbf{\Sigma}_s \in \mbb{R}^{d \times d}$, under a weight simplification used in related work~\cite{zhangtraining, asymptotic_theory}. The initial identity condition in 
\Cref{eqn:gf_initial} simplifies the dynamics such that $\mbf{Q}, \mbf{K} \in \mbb{R}^{d\times d}$ are simultaneously diagonalizable in the eigenbasis of $\mbf{\Sigma}_s \in \mbb{R}^{d\times d}$, and remain static throughout learning. This allows us to reduce the limiting dynamics into $\mbf{\Lambda}_Q$ and $\mbf{\Lambda}_K$:
\begin{align}
\label{eqn:diag_dynamics}
    \tau \underline{\dot{\mbf{\Lambda}}}_Q = v\left(\mbf{\Lambda}_s - v \mbf{\Lambda}_{Q} \mbf{\Lambda}_{K} \mbf{\Lambda}_s \right) \mbf{\Lambda}_{K} \quad \text{and} \quad \tau \underline{\dot{\mbf{\Lambda}}}_K = v\mbf{\Lambda}_{Q} \left(\mbf{\Lambda}_s - v \mbf{\Lambda}_{Q} \mbf{\Lambda}_{K} \mbf{\Lambda}_s \right).
\end{align}
Intuitively, this implies that in the limit, the linear attention model simply learns to scale the magnitudes of the features along the fixed principal directions defined by $\mbf{U}_s$.
Then, since all components in \Cref{eqn:diag_dynamics} are diagonal, it suffices to track the dynamics of each coordinate separately. Moreover, because $\mbf{\Lambda}_{Q}(0) = \mbf{\Lambda}_{K}(0) = \alpha \cdot \mbf{I}_d$, they follow identical dynamics. Let $\lambda_i$ denote the $i$-th diagonal element of both $\mbf{\Lambda}_{Q}$ and $\mbf{\Lambda}_{K}$, and let $\sigma_i \geq 0$ denote the $i$-th diagonal element of $\mbf{\Lambda}_s$. Then, we have the following:
\begin{align}
\label{eqn:final_gf}
    \tau \underline{\dot{v}} = \sum_{i=1}^d \left(\sigma_i - v\sigma_i\lambda_i^2 \right)\cdot \lambda_i^2 \quad \text{and} \quad 
    \tau \underline{\dot{\lambda}}_i = \left(\sigma_i - v \sigma_i\lambda_i^2 \right) \cdot v  \lambda_i, \quad \forall i \in [d]. 
\end{align}
The dynamics are now determined solely by the initial conditions on the weights and the signal strengths $\sigma_i \geq 0$. This is the primary machinery we will need for this section: the signal strengths $\sigma_i$ are the only quantities in \Cref{eqn:final_gf} that differ between the single-task and task-diverse settings, so any difference in convergence time must trace back to them. In the following subsections, we plug in $\sigma_i$ for each case and derive the number of discrete steps required to reach a $\delta$-stationary point. Due to the nature of gradient flow, the dynamics in \Cref{eqn:final_gf} admit a conservation law that will be useful in the analysis:
\begin{tcolorbox}
    \begin{lemma}
\label{lemma:conservation}
With the initial conditions in \Cref{eqn:gf_initial}, the gradient flow dynamics in \Cref{eqn:final_gf} admit the following conservation law:
\begin{align*}
\sum_{i=1}^q \lambda_i^2(t) - v^2(t) = \sum_{i=1}^q \lambda_i^2(0) - v^2(0) = 0.
\end{align*}
\end{lemma}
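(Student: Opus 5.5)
We differentiate the quantity $E(t) \coloneqq \sum_{i=1}^q \lambda_i^2(t) - v^2(t)$ along the trajectory defined by \Cref{eqn:final_gf} and show that $\tau \dot{E} = 0$. Since everything is a polynomial ODE, the trajectory is smooth and the product rule applies. This gives
\begin{align*}
\tau \dot{E} = 2\sum_{i=1}^q \lambda_i \,\tau\dot{\lambda}_i - 2 v\,\tau\dot{v}.
\end{align*}

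The first step is to justify restricting the sum in the $\dot v$ equation to $i \in [q]$. Under the standing assumption $\epsilon = 0$, the covariance $\mbf{\Sigma}_s$ (single-task case, or the mixture covariance $\tfrac12(\mbf{\Sigma}_{s,1}+\mbf{\Sigma}_{s,2})$ in the diverse case) has rank $q$, with range $\mc{U}_s$. We order the eigenvalues so that $\sigma_i > 0$ for $i \le q$ and $\sigma_i = 0$ for $i > q$. For $i>q$, each summand $(\sigma_i - v\sigma_i\lambda_i^2)\lambda_i^2$ vanishes identically. Hence
\begin{align*}
\tau\dot v = \sum_{i=1}^q (\sigma_i - v\sigma_i\lambda_i^2)\lambda_i^2 .
\end{align*}
(The same observation shows $\dot\lambda_i = 0$ for $i>q$, which is why those coordinates are excluded from the conserved quantity.)

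Substituting $\tau\dot\lambda_i = (\sigma_i - v\sigma_i\lambda_i^2) v\lambda_i$ then gives
\begin{align*}
\tau\dot E = 2\sum_{i=1}^q (\sigma_i - v\sigma_i\lambda_i^2)\, v\lambda_i^2 - 2v\sum_{i=1}^q (\sigma_i - v\sigma_i\lambda_i^2)\lambda_i^2 = 0,
\end{align*}
so $E(t) = E(0)$ for all $t$. This is the standard balancedness identity for gradient flow on factored parameters: both terms equal $2v\lambda_i^2$ times the same residual factor.

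Finally, we evaluate $E(0)$ from \Cref{eqn:gf_initial}. We have $\mbf{\Lambda}_Q(0) = \mbf{\Lambda}_K(0) = \alpha \mbf{I}_d$, so $\lambda_i(0) = \alpha$, and $v(0) = \phi = \alpha\sqrt q$. Therefore
\begin{align*}
E(0) = q\alpha^2 - q\alpha^2 = 0,
\end{align*}
which is precisely why $\phi$ was chosen as $\alpha\sqrt q$.

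The only delicate point is the bookkeeping of which coordinates enter the sum. The cancellation requires the index set in $E$ to match the set of indices contributing to $\dot v$. This holds exactly because $\epsilon=0$ kills the contributions with $i>q$; with $\epsilon>0$ one would instead have to sum over all $d$ coordinates and initialize $\phi=\alpha\sqrt d$. We would also note that the identity holds for the limiting ($n\to\infty$) dynamics, which is the setting of \Cref{eqn:final_gf}.
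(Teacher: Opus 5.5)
Your proposal is correct and follows essentially the same route as the paper: differentiate $\sum_{i=1}^q \lambda_i^2 - v^2$, use $\sigma_i = 0$ for $i > q$ (from $\epsilon = 0$) to restrict the $\dot v$ sum to $[q]$, observe that the two resulting terms cancel identically, and evaluate at $t=0$. Your explicit check that $E(0) = q\alpha^2 - q\alpha^2 = 0$ via $\phi = \alpha\sqrt{q}$, and your remark that $\epsilon > 0$ would force a sum over all $d$ coordinates, are slightly more detailed than the paper's version but do not change the argument.
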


\end{tcolorbox}

\subsection{Main Result: Convergence-Time Decomposition}
\label{sec:main_speedup}

With the GF dynamics in place, we are in a position to present our main result for this section, which establishes the discrete iteration time required to reach a $\delta$-stationary point in both settings and shows that task diversity  accelerates convergence. The following result shows that both settings have a \emph{growth} time, which denotes the time spent escaping the small-initialization regime, and a \emph{saturation} time, which denotes the time spent saturating near the stationary point. However, the task-diverse setting further decomposes the growth time into a stage in which the overlapping components are learned, followed by a stage in which the individual components are learned, yielding a two-stage learning phenomenon.

\begin{result}[Convergence Time Decomposition]
\label{res:time_decomp}
Suppose the linear attention model follows the GF dynamics in \Cref{prop:gf_dynamics} with initialization scale $\alpha > 0$ in \Cref{eqn:gf_initial} and fix a stationarity tolerance $\delta \in (0, q^{-1/6})$. Then the discrete convergence times are as follows:

\noindent\textup{(i)} \textbf{Single-Task Setting:} The convergence time to reach a $\delta$-stationary point satisfies
\begin{align}
\label{eqn:main_sing_decomp}
    t_{\textup{single}}
    =
    \underbrace{\tau \sqrt{q}
    \left(
        \frac{1}{\alpha}
        -
        \frac{1}{q^{-1/6}-\delta}
    \right)}_{\eqqcolon \, t_{\textup{single,growth}}}
    +
    \underbrace{\tau q^{2/3}
    \left(
        h(1-q^{1/6}\delta)
        -
        h(q^{1/6}\alpha)
    \right)}_{\eqqcolon \, t_{\textup{single,sat}}},
\end{align}
where $t_{\textup{single,growth}}$ denotes the time spent escaping the small-initialization regime, $t_{\textup{single,sat}}$ denotes the time spent saturating near the stationary point, and
\begin{align}
\label{eqn:h_x_thm}
     h(x) \coloneqq -\frac{1}{3} \log |x - 1| + \frac{1}{6} \log(x^2 + x + 1) - \frac{\sqrt{3}}{3} \arctan\!\left(\frac{2x + 1}{\sqrt{3}}\right).
\end{align}

\noindent \textup{(ii)} \textbf{Task-Diverse Setting:} Using an ansatz on the GF dynamics, the convergence time $t_{\textup{mix}}$ to reach a $\delta$-stationary point decomposes as $t_{\textup{mix}} = t_{\textup{over}} + t_{\textup{ind}}$, where $t_{\textup{over}}$ is the time during which the shared, overlapping components are learned first, and $t_{\textup{ind}}$ is the time during which the individual components are learned thereafter:
\begin{enumerate}[label=\textup{(\alph*)}]
    \item The shared-stage time $t_{\textup{over}}$ further decomposes into growth and saturation contributions:
    \begin{align}
    \label{eqn:main_mix_decomp}
        t_{\textup{over}} = \underbrace{\frac{2\tau a}{\alpha \nin}
    \left(
        \sqrt{q}
        -
        \sqrt{\nov + \frac{\nin \alpha}{\lov^\star-\delta}}
    \right)}_{\eqqcolon \, t_{\textup{over,growth}}}
    +
    \underbrace{a\tau \left( \lov^{\star} \right)^2\cdot
    \left(
        h\!\left(\frac{\lov^\star-\delta}{\lov^\star}\right)
        -
        h\!\left(\frac{\alpha}{\lov^\star}\right)
    \right)}_{\eqqcolon \, t_{\textup{over,sat}}},
    \end{align}
    where $a \coloneqq (2\nov + \nin)/2$ is a normalization constant, $h(\cdot)$ is defined in \Cref{eqn:h_x_thm}, and $\lov^\star$ is the unique positive root of
    \begin{align}
    \label{eqn:main_lambda_star_eqn}
        \nov \left(\lov^\star\right)^6
        +
        \alpha\nin\left(\lov^\star\right)^5
        -
        1
        =
        0.
    \end{align}
    \item The individual-stage time $t_{\textup{ind}}$ satisfies the bounds
    \begin{align}
    \label{eqn:main_ind_bound}
        \tau L \cdot g(1-\delta)
        \;\leq\;
        t_{\textup{ind}}
        \;\leq\;
        \tau L \cdot g(\alpha/\lov^\star),
    \end{align}
    where we define
    \begin{align*}
        L \coloneqq a \log\!\left(
            \frac{(1-\delta)(1-\alpha/\lov^\star)}{\delta\alpha/\lov^\star}
        \right)
        \quad \text{and} \quad
        g(u) \coloneqq \frac{(\nov + \nin u)^{2/3}}{\nov + 2\nin u}, \quad u \in (0,1).
    \end{align*}
    
\end{enumerate}
\end{result}
We provide proof sketches for $t_{\text{single}}$ and $t_{\text{mix}}$ in \Cref{sec:single_subspace} and \Cref{sec:two_stage_paradigm}, respectively. Compared to the single-task setting, we show in \Cref{sec:two_stage_paradigm} that the task-diverse case yields complicated, coupled GF dynamics. To handle this coupling, we introduce an ansatz: by assuming the
individual components follow a power-law relationship with the shared
components until the shared components reach stationarity, we can
reduce the dynamics to a 1D system. Furthermore, this coupled nature means we can only derive upper and lower bounds for $t_{\text{ind}}$, which consequently bound the total convergence time $t_{\text{mix}}$.
We find empirically that the lower bound provides a tight approximation of $t_{\text{ind}}$ (see \Cref{fig:power_law_pred_time}), but more importantly, $t_{\text{ind}}$ is itself negligible compared to $t_{\text{over}}$: both bounds in \Cref{eqn:main_ind_bound} scale as $\tau a \cdot \log(1/\alpha) \cdot g(u)$ for $u \in (0,1)$, where $g$ is decreasing on this interval with
\begin{align*}
    \sup_{u \in (0,1)} g(u)
    = \lim_{u \to 0^+} g(u)
    = \frac{\nov^{2/3}}{\nov}
    = \nov^{-1/3}.
\end{align*}
Hence $t_{\text{ind}} = \mc{O}(\log(1/\alpha))$, whereas $t_{\text{over}} = \Theta(1/\alpha)$, so the primary bottleneck in $t_{\text{mix}}$ is the growth time of $t_{\text{over}}$. 
We further show in the derivation of \Cref{res:strict_acceleration} that the saturation times are likewise subleading,
and so it suffices to compare $t_{\textup{single,growth}}$ to $t_{\textup{over,growth}}$ to establish the speedup. This leads to the following result:

\begin{result}[Strict Acceleration Approximation via Task Diversity]
\label{res:strict_acceleration}
Define the overlap ratio
\begin{align}
\label{eqn:kappa_def}
    \kappa \;\coloneqq\; \frac{\nov}{q} \;\in\; [0, 1],
\end{align}
which measures the fraction of the training subspace shared by the two tasks: $\kappa = 0$ corresponds to disjoint subspaces, while $\kappa = 1$ corresponds to a single shared subspace. For sufficiently small $\alpha$, the leading growth times satisfy
\begin{align}
\label{eqn:main_ratio}
    \frac{t_{\textup{over,growth}}}{t_{\textup{single,growth}}}
    \;\approx\;
    \rho(\kappa)
    \;\coloneqq\;
    \frac{1 + \kappa}{1 + \sqrt{\kappa}},
\end{align}
where $\rho$ has the following properties:
\begin{enumerate}[label=\textup{(\roman*)}]
    \item \textbf{Strict Acceleration:} $\rho(\kappa) < 1$ for every $\kappa \in (0, 1)$.
    \item \textbf{Optimal Diversity:} $\rho$ attains its minimum at $\kappa^\star = 3 - 2\sqrt{2}$, with $\rho(\kappa^\star) = 2(\sqrt{2} - 1)$, corresponding to roughly a $17\%$ reduction in the leading growth time.
\end{enumerate}
\end{result}

\begin{figure}[t!]
  \centering

  \begin{subfigure}[t]{0.48\linewidth}
    \centering
    \includegraphics[width=\linewidth]{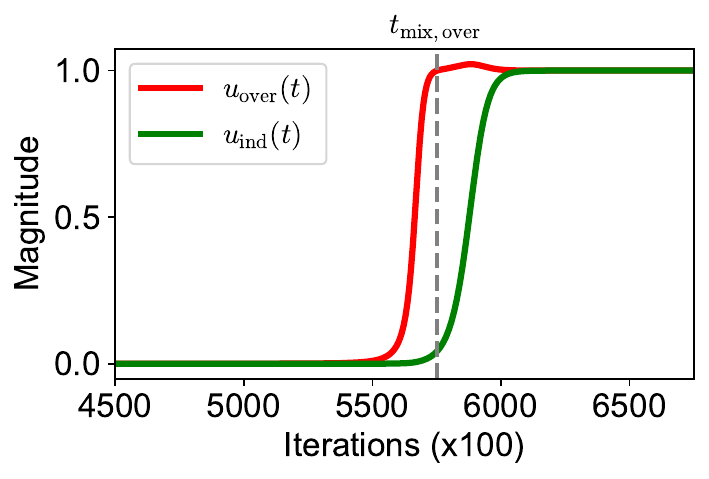}
    \label{fig:pred_time_a}
  \end{subfigure}
  \hfill
  \begin{subfigure}[t]{0.48\linewidth}
    \centering
    \includegraphics[width=\linewidth]{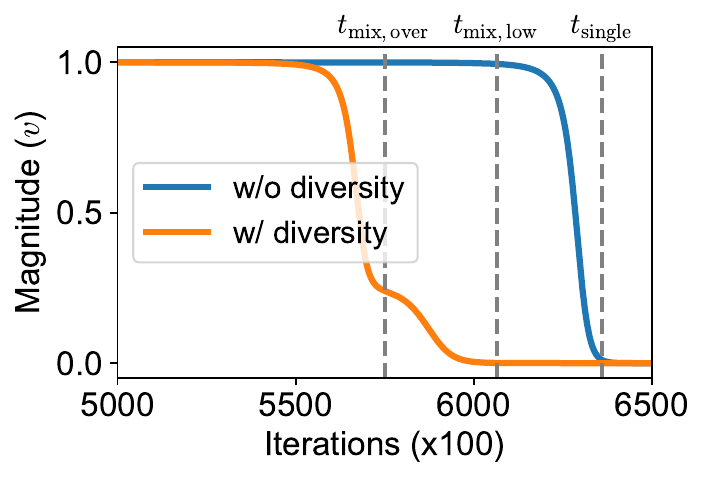}
  \end{subfigure}

  \caption{\textbf{Plots for simulated GF dynamics.} We choose with $q=10$, $\nin = 4$, $\nov = 6$, $\alpha = 0.005$, $\eta = 0.001$, and $\delta = 0.001$. $t_{\text{mix, over}}$ denotes the predicted value of $t_{\text{over}}$ for the task-diverse case. 
  \textbf{Left:} Plot of the learning trajectories of $u_{\text{ind}}(t) = v(t)\lin^2(t)$ and $u_{\text{over}}(t) = v(t)\lov^2(t)$.    $u_{\text{ind}}(t)$ stays close to its initialization until $u_{\text{over}}(t)$ reaches $1$, at which point $u_{\text{ind}}(t)$ then begins to learn. This demonstrates the power-law relationship between $\lin$ and $\lov$, and hence two-phase learning dynamics.
 \textbf{Right:} Plot of the gradient norm of $v$ for simulated GF dynamics. $t_{\text{mix, low}}$ denotes the total time computed using the lower bound on $t_{\text{ind}}$. While the total times with both the upper and lower bounds yield values that are less than $t_{\text{single}}$, $t_{\text{mix, low}}$ serves as an accurate approximation.}
  \label{fig:power_law_pred_time}
\end{figure}

The full derivation is deferred to \Cref{sec:strict_acc_proof}, and a comparison of the leading-order terms without the approximation is provided in \Cref{sec:compare_convergence}.  
The approximation in \Cref{res:strict_acceleration} follows by taking the small-initialization limit $\alpha \to 0$: in this regime, $\lov^\star \approx \nov^{-1/6}$ by \Cref{eqn:main_lambda_star_eqn}, the lower-order terms in the growth times become negligible, and the leading growth-time ratio simplifies to $\rho(\kappa)$. Thus, \Cref{res:strict_acceleration} states that task diversity yields a strict reduction in the leading growth bottleneck whenever the two task subspaces have both shared and non-overlapping components, thereby leading to overall convergence acceleration.

Intuitively, this speedup arises from the two-stage learning phenomenon in the task-diverse setting. As we show in the following sections, the overlapping subspace components carry amplified signal strengths and therefore grow faster than the non-overlapping components, quickly driving $v$ toward its stationary value. Once $v$ is large, the remaining $\nin$ non-overlapping subspaces inherit a warm start from the shared structure rather than growing from near-zero. In contrast, in the single-task case, all $q$ subspace components must grow from the same small initialization simultaneously. This verifies the hypothesis of \cite{kim2025taskdiversityshortensicl}: the ICL plateau arises from learning the common task structures, and once those are learned, the individual task structures follow quickly. 

\subsection{Convergence Time with a Single Task}
\label{sec:single_subspace}

Here, we aim to derive $t_{\text{single}}$. Under normalization, the signal strengths in the single-task case are $\sigma_i = 1/q$ for $i \in [q]$ and $\sigma_i = 0$ for the remaining $d-q$ singular values. Plugging this into the dynamics in \Cref{eqn:final_gf} gives
\begin{align}
\label{eqn:no_task_gf}
    \tau \underline{\dot{v}} =  \left(1 - v\lambda^2 \right)\cdot \lambda^2 \quad \text{and} \quad 
    \tau \underline{\dot{\lambda}} = \frac{1}{q} \left(1- v \lambda^2 \right) \cdot v  \lambda,
\end{align}
where we dropped the subscript since the top-$q$ elements follow the same learning dynamics.
Note that the condition $v\lambda^2 = 1$ is sufficient for the entire system to be stationary, so we can focus on the number of iterations required for $\lambda$ to reach the stationary point. This involves computing the following integral:
\begin{align}
\label{eqn:t_sing_decomp}
    t_{\text{single}} = \tau q \int_{\alpha}^{\lambda^\star - \delta} \frac{d\lambda}{(1-v\lambda^2) \cdot v\lambda}  = \underbrace{\tau q \int_{\alpha}^{\lambda^\star - \delta} \frac{d\lambda}{v\lambda}}_{\text{growth time}} + \underbrace{\tau q \int_{\alpha}^{\lambda^\star - \delta} \frac{\lambda}{1 - v\lambda^2} d\lambda}_{\text{saturation time}},
\end{align}
where $\lambda^\star$ denotes the optimal solution. \Cref{eqn:t_sing_decomp} shows that the convergence time can be split into growth and saturation times: the growth time captures the rapid learning phase where the weights grow exponentially from near-zero initialization, and the saturation time captures the time for the system to slow down to a stationary point without overshooting. Using the conservation law in  \Cref{lemma:conservation},
both of these times admit closed-form integrals, which yields the following overall convergence time (see \Cref{sec:sing_subspace_time} for the full derivation):
\begin{align}
\label{eqn:t_sing_total}
t_{\text{single}} = \underbrace{\tau \sqrt{q} \left(\frac{1}{\alpha} - \frac{1}{q^{-1/6} - \delta}\right)}_{= t_{\text{single, growth}}} + \underbrace{\tau q^{2/3} \left(h(1-q^{1/6}\delta) - h(q^{1/6}\alpha) \right)}_{= t_{\text{single, sat}}},
\end{align}
where we plugged in the optimal solution $\lambda^\star = q^{-1/6}$ and $h(x)$ is the function 
\begin{align}
\label{eqn:h_x}
    h(x) \coloneqq -\frac{1}{3} \log |x - 1| + \frac{1}{6} \log(x^2 + x + 1) - \frac{\sqrt{3}}{3} \arctan\left(\frac{2x + 1}{\sqrt{3}}\right).
\end{align}
This provides the convergence time as shown in \Cref{eqn:main_sing_decomp}. 
Again, the primary bottleneck in \Cref{eqn:t_sing_total} is the growth-time term, since $\alpha$ is small and $q \in \mbb{N}$ can be arbitrarily large (assuming $d$ is also large). This term governs the ``ICL plateau'' often observed during transformer training~\cite{zhangtraining, kim2025taskdiversityshortensicl}, and scales with $q$ since all $q$ components of the training subspace $\mbf{U}_q \in \mbb{R}^{d\times q}$ must be learned from scratch.

\subsection{Convergence Time with Task Diversity: A Two-Stage Learning Paradigm}
\label{sec:two_stage_paradigm}

Following the same strategy as in \Cref{sec:single_subspace}, we determine the signal strengths $\sigma_i$ for the task-diverse case to derive $t_{\text{mix}}$. The mixture of two low-rank Gaussians in \Cref{eq:two_setup} yields the covariance matrix $\mbf{\Sigma} = \frac{1}{2}\cdot \mbf{\Sigma}_{s, 1} + \frac{1}{2} \cdot \mbf{\Sigma}_{s, 2}$. With $\nov$ overlapping components and $\nin \coloneqq q - \nov$ non-overlapping components, the spectrum and induced dynamics are
\[
\vcenter{\hbox{$
\mbf{\Lambda}_s =
  \begin{bmatrix}
    a^{-1}\cdot\mbf{I}_{n_{\text{over}}} & & \\
    & (2a)^{-1}\cdot \mbf{I}_{n_{\text{ind}}} & \\
    & & \mbf{0}_{d-q}
  \end{bmatrix}
$}}
\implies
\vcenter{\hbox{$
\begin{aligned}
&\tau \dot{\underline{\lambda}}_i = \frac{1}{a}(1 - v\lambda_i^2)\,v\lambda_i, \quad i \in [n_{\text{over}}],\\
&\tau\dot{\underline{\lambda}}_j = \frac{1}{2a}(1 - v\lambda_j^2)\,v\lambda_j, \quad j \in [n_{\text{ind}}],\\
&\tau\dot{\underline{v}} = \frac{1}{a}\sum_{i=1}^{n_{\text{over}}}(1 - v\lambda_i^2)\lambda_i^2
  + \frac{1}{2a}\sum_{j=1}^{n_{\text{ind}}}(1 - v\lambda_j^2)\lambda_j^2,
\end{aligned}
$}}
\]
where $a = \frac{2n_{\text{over}} + n_{\text{ind}}}{2}$ is the normalization constant. Since all the $\lambda_i$ start from the same initial condition, we can drop subscripts and track the following dynamics:
\begin{align*}
\tau \underline{\dot{\lambda}}_{\text{over}} &= \frac{1}{a} \left( 1 - v\lov^2 \right) v\lov \\
\tau \underline{\dot{\lambda}}_{\text{ind}} &= \frac{1}{2a} \left( 1 - v\lin^2 \right) v\lin \\
\tau \underline{\dot{v}} &= \frac{\nov}{a} \left( 1-v\lov^2  \right) v\lov + \frac{\nin}{2a} \left( 1-v\lin^2  \right) v\lin.
\end{align*}
These differential equations reveal interesting insights into the learning dynamics: the growth rate of $\lov$ is twice that of $\lin$, while $v$ couples the dynamics together. The faster growth rate of $\lov$ arises from the fact that the shared directions have their signal strengths amplified.
The coupled nature makes the difference in growth rates pivotal: the dynamics of $v$ are dominated by $\lov$ early on, with $\lin$ only beginning to contribute once $\lov$ saturates. To see this more clearly, consider the following ratio in the early stages of training:
\begin{align*}
    \frac{\underline{\dot{\lambda}}_{\text{over}}}{\dot{\underline{\lambda}}_{\text{ind}}} = \frac{\frac{1}{a} v\lambda_{\text{over}} + \mc{O}(\alpha^5)}{\frac{1}{2a} v\lambda_{\text{ind}} + \mc{O}(\alpha^5)} \implies \frac{\underline{\dot{\lambda}}_{\text{over}}}{\dot{\underline{\lambda}}_{\text{ind}}} \approx \frac{2\lambda_{\text{over}}}{\lambda_{\text{ind}}} \implies \frac{\underline{\dot{\lambda}}_{\text{over}}}{2\lambda_{\text{over}}} = \frac{\underline{\dot{\lambda}}_{\text{ind}}}{\lambda_{\text{ind}}}.
\end{align*}
Then, integrating both sides with some algebraic manipulation gives the following expression:
\begin{align*}
    \int\frac{\underline{\dot{\lambda}}_{\text{over}}}{2\lambda_{\text{over}}} = \int\frac{\underline{\dot{\lambda}}_{\text{ind}}}{\lambda_{\text{ind}}} \implies \ln\left( \lambda_{\text{ind}} \right) = \frac{1}{2} \ln\left( \lambda_{\text{over}} \right) + C \implies \lambda_{\text{ind}} = e^C \cdot \sqrt{\lambda_{\text{over}}}
\end{align*}
for some constant $C$. To determine the constant, we can plug in the initial conditions to obtain $\lambda_{\text{ind}} = \sqrt{\alpha \lambda_{\text{over}}}$.
This reveals an approximate power-law relationship between $\lin$ and $\lov$. For the system to be stationary, we require both $u_{\text{over}} \coloneqq v\lambda_{\text{over}}^2 = 1$ and $u_{\text{ind}} \coloneqq v\lambda_{\text{ind}}^2 = 1$. Since $\lambda_{\text{over}}$ has a faster growth rate, consider the moment at which $u_{\text{over}} = 1$. Then, the power-law relation gives $u_{\text{ind}} = v\lambda_{\text{ind}}^2 \approx v\alpha \lambda_{\text{over}} = \alpha / \lambda_{\text{over}}$.
This implies that $u_{\text{ind}}$ (and hence $\lambda_{\text{ind}}$) remains close to its initial condition until $\lambda_{\text{over}}$ saturates, yielding a two-stage learning phenomenon: (i) an initial phase in which the shared (or common) subspace components are learned, followed by (ii) a second phase in which the individual components are learned (see \Cref{fig:power_law_pred_time}). 

Consequently, this gives the decomposition of $t_{\text{mix}}$ into the two parts: $t_{\text{mix}} = t_{\text{over}} + t_{\text{ind}}$. These times are computed in closed-form using the power-law relation together with an ansatz on the dynamics:
\begin{align}
\label{eqn:common_time}
    t_{\text{over}}  
    &= \underbrace{\frac{2\tau a}{\alpha \nin} \left(\sqrt{q} - \sqrt{\nov + \frac{\nin \alpha}{\lov^\star-\delta}}\right)}_{= t_{\textup{over, growth}}} + \underbrace{a \tau \left( \lov^{\star} \right)^2\cdot \left(h\left( \frac{\lov^\star - \delta}{\lov^\star}\right) - h\left(\frac{\alpha}{\lov^\star} \right)\right)}_{= t_{\textup{over, sat}}},
\end{align}
\begin{align}
\label{eqn:t_mix_ind}
    \tau a \cdot \log\left( \frac{(1-\delta)(1 - \alpha/\lov^\star)}{\delta \alpha/\lov^\star}\right)g(1-\delta)  \;\leq\; t_{\text{ind}} \;\leq\; \tau a \cdot \log\left( \frac{(1-\delta)(1 - \alpha/\lov^\star)}{\delta \alpha/\lov^\star}\right)g(\alpha/\lov^\star),
\end{align}
where $h(\cdot)$ is defined in \Cref{eqn:h_x}, $\lov^\star$ is the positive root of 
\begin{align}
\label{eqn:lambda_star_eqn}
     \nov \left(\lov^\star\right)^6 + \alpha \nin \left(\lov^\star\right)^5 - 1 = 0 \quad \text{and} \quad     g(u) = \frac{\left( n_{\text{over}} + n_{\text{ind}} u\right)^{2/3}}{n_{\text{over}} + 2n_{\text{ind}} u} \quad \text{for} \quad u\in(0, 1),
\end{align}
which gives us the times stated in \Cref{res:time_decomp}. For brevity, we defer the full derivation to \Cref{sec:mix_subspace_time}.

Overall, \Cref{res:time_decomp} demonstrates the benefit of task diversity, as the two-stage learning dynamics accelerate convergence. However, notice that when there is no shared structure (i.e., $\kappa = 0$), there is no signal amplification, and the training time matches the single-task case. This raises the question: what conclusions, if any, can we draw about the converged solution? In the following section, we show that for any $\nov \geq 0$, the converged solution enables OOD capabilities for ICL.

\section{Task Diversity Enables Out-of-Distribution Generalization}
\label{sec:ood}

Following our previous discussion, in this section we study how training-task diversity affects the converged transformer weights. In particular, we demonstrate how training with diverse task vectors can enable transformers to exhibit OOD ICL capabilities. For ICL, there are several types of distribution shifts: covariate shifts (i.e., shifts in the inputs), task-function shifts (i.e., shifts in the function that generates outputs), and query shifts (i.e., the query input differs from the inputs in the test prompt). Since we focus on the effects of training with diverse task vectors, we restrict our attention to distribution shifts in the task vector while keeping other components fixed. To show this, we construct a task vector with zero density under the training distribution and prove that the test risk on this constructed task vector is negligible, indicating that ICL can generalize OOD.

\subsection{Training and Testing Data Distributions}
\label{sec:ood_train_test_dist}

For the training task vectors, we again consider the task-diverse
setup from \Cref{def:training_tasks} with $K=2$ to facilitate
discussion, and then generalize our results to the $K>2$ case. Let $\mbf{U}_{s, 1}, \mbf{U}_{s, 2} \in \mbb{R}^{d\times r}$ be the two (training) orthonormal bases such that $\mc{U}_s = \mathrm{span}(\mbf{U}_{s, 1}) +  \mathrm{span}(\mbf{U}_{s, 2})$,  where $\mathrm{dim}(\mc{U}_s) = q$ and $r = q/2$.\footnote{Note that this assumes that $\nov = 0$. For the case in which $\nov > 0$ (i.e., $\mathrm{rank}(\mbf{U}_{s, 1}) = \mathrm{rank}(\mbf{U}_{s, 2}) = w$ for some $w > q/2$), we can WLOG always construct orthonormal bases $\widehat{\mbf{U}}_{s, 1}, \widehat{\mbf{U}}_{s, 2} \in \mbb{R}^{d\times r}$ such that  $\widehat{\mbf{U}}_{s, 1} \perp \widehat{\mbf{U}}_{s, 2}$ and $\widehat{\mc{U}_s} = \mathrm{span}(\widehat{\mbf{U}}_{s, 1}) +  \mathrm{span}(\widehat{\mbf{U}}_{s, 2})$ with $\mc{U}_s=\widehat{\mc{U}_s}$.} Our goal is to define a testing subspace $\mbf{U}_t \in \mbb{R}^{d \times r}$ such that the train task vector $\mbf{w} \in \mbb{R}^d$ is sampled from $\mbf{U}_t$ with probability zero. Then, if we sample a testing task vector $\widetilde{\mathbf{w}}$ from $\mathbf{U}_t$, and the trained transformer achieves negligible test risk with respect to $\widetilde{\mathbf{w}}$, we can conclude that transformers are capable of OOD generalization via ICL.
To this end, we parameterize $\mbf{U}_t$ as such \cite[Section~3.8]{absil2004riemannian}: 
\begin{align}
\label{eq:task_test_subspace}
\mbf{U}_t \coloneqq \mbf{U}_t(\mbf{\Theta}) = \mbf{U}_{s, 1}\cdot \mathrm{cos}\big(\mbf{\Theta}\big) + \mbf{U}_{s,2} \cdot \mathrm{sin}\big( \mbf{\Theta} \big) \in \mbb{R}^{d\times r}.
\end{align}
Here, $\mbf{\Theta} \in \mbb{R}^{r \times r}$ is a diagonal matrix
with entries $\theta_i \in [0, \pi/2]$, and $\cos(\cdot)$ and $\sin(\cdot)$
are applied entrywise to the diagonal of $\mbf{\Theta}$. For simplicity, we will assume all principal angles are equal, i.e., for all $i \in [r]$, $\theta_i = \theta$ for some $\theta \in \left[ 0, \frac{\pi}{2} \right]$ so that
$\mbf{\Theta} = \theta \cdot \mbf{I}_r$. 
Notice when $\theta = 0$, $\mbf{U}_t = \mbf{U}_{s, 1}$, and when $\theta = \frac{\pi}{2}$, $\mbf{U}_t = \mbf{U}_{s,2}$. Intuitively, $\mathbf{U}_t$ defines an ``interpolating path'' between the training subspaces through $\theta$. When $\theta \in (0, \pi/2)$, $\mathrm{span}(\mathbf{U}_t)$ is distinct from both $\mathrm{span}(\mathbf{U}_{s, 1})$ and $\mathrm{span}(\mathbf{U}_{s, 2})$, ensuring that the test-task vector has zero probability of being sampled from the train subspace.
Now, we can set the testing covariance matrix as follows:
\begin{align}
\label{eqn:covariance_t}
    \mbf{\Sigma}_t &= \mbf{U}_t \mbf{U}_t^\top + \epsilon \cdot \mbf{I}_d, 
\end{align}
and each testing pair $(\mbf{x}_j, \widetilde{y}_j)$ is generated
independently of the training data: for all $j \in [m+1]$,
$\mbf{x}_j \sim \mc{N}(\mbf{0}, \mbf{I}_d)$ and
\begin{align}
\label{eq:vanilla_setup_test}
\widetilde{y}_j = \widetilde{\mbf{w}}^\top \mbf{x}_j + \xi_j,
\quad \text{where} \quad\widetilde{\mbf{w}} \sim \mc{N}(\mbf{0}, \mbf{\Sigma}_t),
\end{align}
with $\xi_j \sim \mc{N}(0, \sigma^2)$.

\subsection{ICL Can Generalize to the Span of Training Subspaces}

In this section, we consider the optimal single-layer linear attention model trained using the setup in \Cref{sec:ood_train_test_dist} and derive the test risk when the test prompts are drawn from \Cref{eq:vanilla_setup_test} for any angle $\theta \in [0, \tfrac{\pi}{2}]$. The following result shows that for sufficiently large prompt lengths, the test risk is independent of $\theta$ and achieves a value that is only a function of the irreducible noise variance.

\begin{tcolorbox}
    \begin{theorem}
\label{thm:pos_result}
    Let $g_{\mc{W}}^\star$ denote the optimal linear attention model corresponding to the independent training task diverse data setting in \Cref{def:training_tasks} with $K=2$. For all $j \in [m+1]$, suppose that the test prompts are constructed as in \Cref{eq:vanilla_setup_test}. Then, for any $\theta \in [0, \frac{\pi}{2}]$, we have
    \begin{equation*}
        \lim\limits_{m \to \infty}
        \lim\limits_{n \to \infty}
        \lim\limits_{\epsilon \to 0} \mathbb{E}\left[ \left( \widetilde{y}_{m+1} - g^\star_{\mc{W}}\left(\widetilde{\mathbf{Z}}  \right) \right)^2 \right] = \sigma^2.
    \end{equation*}
\end{theorem}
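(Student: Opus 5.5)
The plan is to compute the global minimizer of the training loss in closed form, substitute it into the test risk, and then take the three limits in the stated order.

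\textbf{Step 1: the optimal predictor.} By \Cref{prop:gf_dynamics}, or more generally by the standard reduction for single-layer linear attention, the prediction has the form $g_{\mc{W}}(\mbf{Z}) = \mbf{x}_{n+1}^\top \mbf{\Gamma}\, \mbf{c}$ with $\mbf{c} = \frac{1}{n}\sum_{i=1}^n y_i \mbf{x}_i$ and $\mbf{\Gamma} = v\mbf{Q}\mbf{K}^\top$. Minimizing over the weights is then equivalent to minimizing over $\mbf{\Gamma}$, since any $\mbf{\Gamma}$ is realizable. Conditioning on $\mbf{w}$, and using $\mbb{E}[\mbf{c}\mid\mbf{w}] = \mbf{w}$ together with Gaussian fourth moments,
\[
\mbb{E}[\mbf{c}\mbf{c}^\top \mid \mbf{w}] = \tfrac{n+1}{n}\mbf{w}\mbf{w}^\top + \tfrac{1}{n}\big(\|\mbf{w}\|^2+\sigma^2\big)\mbf{I}_d .
\]
The training loss depends on the task distribution only through its second moment $\mbf{\Sigma} = \frac12(\mbf{\Sigma}_{s,1}+\mbf{\Sigma}_{s,2})$, so the mixture structure drops out. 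The first-order condition gives
\[
\mbf{\Gamma}^\star = \mbf{\Sigma}\Big(\tfrac{n+1}{n}\mbf{\Sigma} + \tfrac{\tr(\mbf{\Sigma})+\sigma^2}{n}\mbf{I}_d\Big)^{-1}.
\]
Since $\mbf{U}_{s,1}\perp\mbf{U}_{s,2}$ with $r = q/2$, we have $\mbf{\Sigma} = \frac12 \mbf{U}_s\mbf{U}_s^\top + \epsilon\mbf{I}_d$. As $\epsilon\to0$ and then $n\to\infty$, $\mbf{\Gamma}^\star \to \mbf{U}_s\mbf{U}_s^\top$, the orthogonal projector onto the training span $\mc{U}_s$. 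The ridge term pins down the $\epsilon\to 0$ behaviour on $\mc{U}_s^\perp$: there $\mbf{\Gamma}^\star$ has eigenvalue $\epsilon/((n+1)\epsilon/n + (\tr\mbf{\Sigma}+\sigma^2)/n)\to 0$.

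\textbf{Step 2: the test risk.} For the test prompt, write $\widetilde{\mbf{c}} = \frac1m\sum_{j=1}^m \widetilde{y}_j\mbf{x}_j$. Since $\mbf{x}_{m+1}$ and $\xi_{m+1}$ are independent of everything else, the risk equals
\[
\sigma^2 + \mbb{E}\,\|\widetilde{\mbf{w}} - \mbf{\Gamma}^{\star\top}\widetilde{\mbf{c}}\|^2 .
\]
Expanding with the same moment identity (now with $m$ samples and $\mbf{\Sigma}_t$) gives
\[
\tr(\mbf{\Sigma}_t) - 2\tr(\mbf{\Gamma}^{\star\top}\mbf{\Sigma}_t) + \tfrac{m+1}{m}\tr(\mbf{\Gamma}^{\star}\mbf{\Gamma}^{\star\top}\mbf{\Sigma}_t) + \tfrac{\tr(\mbf{\Sigma}_t)+\sigma^2}{m}\tr(\mbf{\Gamma}^\star\mbf{\Gamma}^{\star\top}).
\]
All quantities are finite matrices depending continuously on $\epsilon$, so the limit $\epsilon\to0$ passes inside, replacing $\mbf{\Sigma}_t$ by $\mbf{U}_t\mbf{U}_t^\top$.

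\textbf{Step 3: the geometric input.} By \Cref{eq:task_test_subspace}, $\mbf{U}_t = \cos\theta\,\mbf{U}_{s,1} + \sin\theta\,\mbf{U}_{s,2}$, so $\mathcal{R}(\mbf{U}_t)\subseteq\mc{U}_s$ for every $\theta$. Hence $\mbf{U}_s\mbf{U}_s^\top\mbf{U}_t = \mbf{U}_t$, and $\tr(\mbf{U}_t\mbf{U}_t^\top)=r$ independently of $\theta$. After $n\to\infty$, the excess risk becomes
\[
r - 2r + \tfrac{m+1}{m}r + \tfrac{r+\sigma^2}{m}q = \tfrac{r + (r+\sigma^2)q}{m},
\]
which vanishes as $m\to\infty$. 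This yields the risk $\sigma^2$ for every $\theta\in[0,\pi/2]$.

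\textbf{Main obstacle.} The delicate step is the order of limits in Step 1. With $\epsilon>0$ fixed, $\mbf{\Gamma}^\star$ is a full-rank ridge-type matrix. Only after $\epsilon\to0$ at fixed $n$ (giving $\frac{n}{n+1+2n(\sigma^2+q/2)/q}$-type scaling on $\mc{U}_s$ and exactly $0$ off it) and then $n\to\infty$ does it become the projector. I would handle this by writing the eigenvalues of $\mbf{\Gamma}^\star$ explicitly in the eigenbasis of $\mbf{\Sigma}$ and tracking each limit separately. A further point to check is the justification that the optimal linear attention weights realize this unconstrained $\mbf{\Gamma}^\star$, for which I would invoke the parameterization in \Cref{prop:gf_dynamics}, possibly noting that other blocks of the weights vanish at the optimum, as in \cite{ahn2023transformers}.
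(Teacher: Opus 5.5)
Your proposal is correct and follows the paper's argument. The paper's $\mbf{A} = \left(\frac{n+1}{n}\mbf{I}_d + \frac{M_s}{n}\mbf{\Sigma}^{-1}\right)^{-1}$ is exactly your $\mbf{\Gamma}^\star$, and its risk formula, which the paper cites rather than derives, is your Step~2 expansion. From there, $\mc{R}(\mbf{U}_t)\subseteq\mc{U}_s$ makes every trace $\theta$-independent, and taking $\epsilon\to0$ then $n\to\infty$ gives $\sigma^2+\frac{r+(r+\sigma^2)q}{m}$, which tends to $\sigma^2$. The paper reaches this through explicit eigenvalue bookkeeping for general $K$, while you pass to the limiting projector.

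One slip to fix: after $\epsilon\to 0$, the eigenvalue of $\mbf{\Gamma}^\star$ on $\mc{U}_s$ is $\frac{n}{n+1+q+2\sigma^2}$. The expression in your ``main obstacle'' paragraph is wrong, since it would tend to $\frac{q}{2q+2\sigma^2}$ rather than $1$.
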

\end{tcolorbox}
Remarkably, \Cref{thm:pos_result} states that task diversity enables OOD generalization, as the optimal linear attention model trained using task vectors drawn from a mixture of Gaussians can generalize to all angles $\theta \in [0, \frac{\pi}{2}]$, or in other words, generalize to any test task vector drawn from the span of the training subspaces. In \Cref{fig:pos_result}, we corroborate \Cref{thm:pos_result} on both linear attention and GPT-2, demonstrating that test loss approaches zero for larger prompt lengths and that the result generalizes beyond linear attention. Hence, we hypothesize that this explains why ICL achieves OOD generalization: the test data actually lies within the span of the training data. 
We remark that in \Cref{thm:pos_result}, we take $\epsilon \to 0$ for two reasons: (i) to eliminate any dependence on $\epsilon$ and isolate its effect on test risk as it is assumed to be a small constant, and (ii) to analyze the test risk when the covariance matrices are exactly low-rank.

\begin{figure*}[t!]
    \centering
    \includegraphics[width=0.85\linewidth]{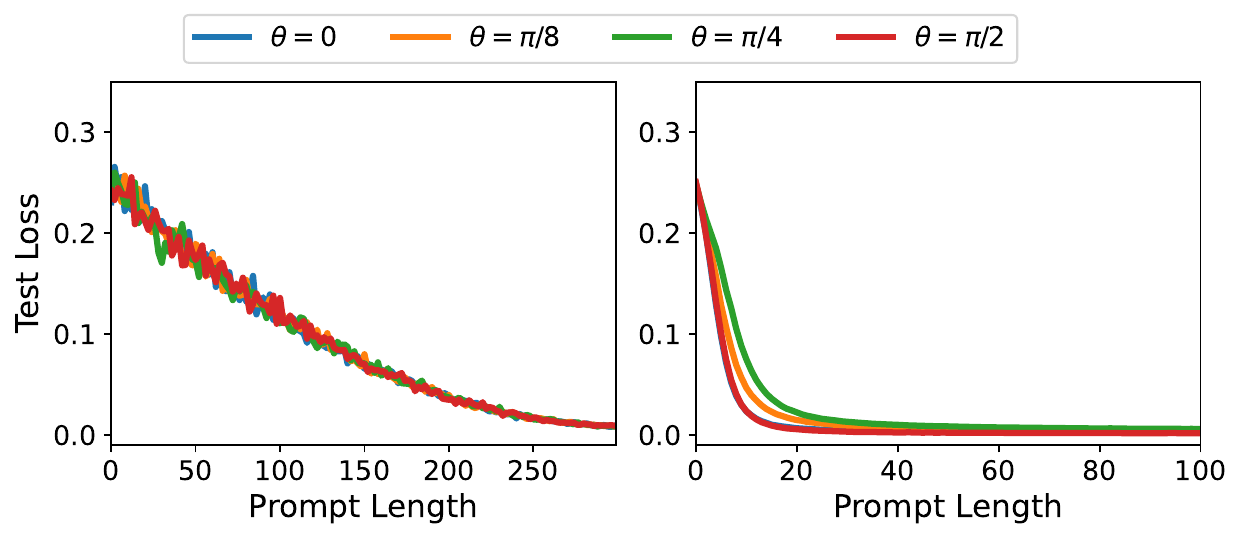}
    \caption{\textbf{Plot of the test risk as a function of the prompt length when trained using diverse task vectors with test subspace drawn from \Cref{eq:task_test_subspace}.} \textbf{Left:} Plot of the risk for linear attention. \textbf{Right:} Plot of the risk for GPT-2.  For both plots, when the prompt length at test time is large enough, the test risk goes nearly to zero for all $\theta \in \left[0, \frac{\pi}{2} \right]$, corroborating Theorem~\ref{thm:pos_result}. This shows that both  transformers can generalize to the span of the training task vectors at test-time. }
    \label{fig:pos_result}
\end{figure*}

Next, we generalize the above result to a mixture of $K > 2$
Gaussians. To this end, let us assume $d \geq Kr \eqqcolon q$. Then, for all $k \in [K]$, we define $\mbf{U}_{s, k} \in \mbb{R}^{d \times r}$ such that  $\mbf{U}_{s, k}^\top \mbf{U}_{s, \ell} = \mbf{0}$ for all $k \neq \ell$. Then, we assume the training task $\mbf{w} \in \mbb{R}^d$ is sampled as such:
\begin{align}
    \label{eqn:task_multiple_mog}
    &\mbf{w} \sim \sum\limits_{k=1}^K \gamma_k \cdot \mathcal{N}(\mbf{0}, \mbf{\Sigma}_{s, k}), \quad \text{where} \quad \mbf{\Sigma}_{s, k} = \mbf{U}_{s, k}\mbf{U}_{s, k}^\top + \epsilon \cdot \mbf{I}_d \quad \text{and} \quad \sum\limits_{k=1}^K \gamma_k = 1.
\end{align}
Similarly, we define an orthonormal testing basis $\overline{\mbf{U}}_t \in \mbb{R}^{d \times r}$ that lies within the span of $\{\mbf{U}_{s, k}\}_{k=1}^K$:
\begin{equation} \label{eq:U_t_bar}
    \overline{\mbf{U}}_t = \sum\limits_{k=1}^K \alpha_k \mbf{U}_{s, k}, \quad \text{for} \quad  \{\alpha_k\}_{k=1}^K \quad \text{such that} \quad  \sum\limits_{k=1}^K \alpha_k^2 = 1.
\end{equation}
The constraint on $\{\alpha_k\}_{k=1}^K$ ensures $\overline{\mbf{U}}_t \in \mbb{R}^{d\times r}$ is an orthonormal basis. Then,  we consider testing on task vectors 
\begin{align*}
    \widetilde{\mbf{w}} \sim \mathcal{N}\left(\mbf{0}, \overline{\mbf{\Sigma}}_t \right), \quad \text{where} \quad \overline{\mbf{\Sigma}}_t = \overline{\mbf{U}}_t\overline{\mbf{U}}_t^\top + \epsilon \cdot \mbf{I}_d.
\end{align*}
Again, we emphasize $\overline{\mbf{U}}_t$ is unseen during training, but lies within the span of the training subspaces.

\begin{tcolorbox}
    \begin{theorem}
\label{thm:mixture_k_subspaces}
Let $g^\star_{\mc{W}}$ denote the optimal linear attention model
corresponding to the independent training task diverse data setting in \Cref{def:training_tasks}, where the task vector is drawn from
\Cref{eqn:task_multiple_mog} with $\gamma_k = \tfrac{1}{K}$ for all
$k \in [K]$. For all $j \in [m+1]$, suppose the test prompts are
constructed with features $\mbf{x}_j \sim \mc{N}(\mbf{0}, \mbf{I}_d)$
and labels whose task vectors are parameterized using the basis $\overline{\mbf{U}}_t$ defined in  \Cref{eq:U_t_bar}. For any
$\{\alpha_k\}_{k=1}^{K}$ with $\sum_{k=1}^{K} \alpha_k^2 = 1$, we have
\begin{equation*}
    \lim_{m \to \infty} \lim_{n \to \infty} \lim_{\epsilon \to 0}
    \mbb{E}\!\left[ \left( \widetilde{y}_{m+1}
    - g^\star_{\mc{W}}\left(\widetilde{\mbf{Z}} \right) \right)^2 \right]
    = \sigma^2.
\end{equation*}
\end{theorem}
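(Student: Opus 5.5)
The plan is to reduce everything to the optimal effective matrix $\mbf{\Gamma}^\star$ of the linear attention model. Up to the parameterization of \Cref{prop:gf_dynamics}, the prediction is $\widehat{y} = \mbf{x}_{m+1}^\top \mbf{\Gamma}\, \mbf{c}$, with $\mbf{\Gamma} \coloneqq v\mbf{Q}\mbf{K}^\top$ (more generally the top-left block of $\mbf{W}_Q\mbf{W}_K^\top$ times the last entry of $\mbf{W}_V$) and $\mbf{c} = \frac1n\sum_i y_i\mbf{x}_i$. The first step is to write the training loss as a quadratic in $\mbf{\Gamma}$. With $\mbf{x}_i \sim \mc{N}(\mbf{0},\mbf{I}_d)$, conditional on $\mbf{w}$ we have
\[
\mbb{E}[\mbf{c}\mbf{c}^\top \mid \mbf{w}] = \tfrac{n+1}{n}\mbf{w}\mbf{w}^\top + \tfrac{1}{n}\big(\|\mbf{w}\|^2 + \sigma^2\big)\mbf{I}_d .
\]
Hence the loss depends on the task distribution only through $\mbf{\Sigma} = \mbb{E}[\mbf{w}\mbf{w}^\top]$ and $\mbb{E}\|\mbf{w}\|^2 = \tr(\mbf{\Sigma})$. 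This observation is the key: the mixture in \Cref{eqn:task_multiple_mog} enters the optimizer only through its second moment
\[
\mbf{\Sigma} = \tfrac{1}{K}\sum_k \mbf{U}_{s,k}\mbf{U}_{s,k}^\top + \epsilon\mbf{I}_d = \tfrac{1}{K}\mbf{U}_s\mbf{U}_s^\top + \epsilon\mbf{I}_d ,
\]
where $\mbf{U}_s = [\mbf{U}_{s,1},\dots,\mbf{U}_{s,K}]$ is orthonormal. Minimizing the strictly convex quadratic (for $\epsilon>0$) then gives
\[
\mbf{\Gamma}^\star = \mbf{\Sigma}\Big(\tfrac{n+1}{n}\mbf{\Sigma} + \tfrac{\tr(\mbf{\Sigma})+\sigma^2}{n}\mbf{I}_d\Big)^{-1}.
\]
Since the training optimum is characterized in terms of $\mbf{\Gamma}$, I will take this as the optimal model, following the same reduction as \Cref{prop:gf_dynamics} and \cite{zhangtraining}.

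Next I take the limits in order. As $\epsilon\to0$, $\mbf{\Gamma}^\star$ converges to $\mbf{U}_s\,\mathrm{diag}(\beta_n)\,\mbf{U}_s^\top$ with
\[
\beta_n = \frac{1/K}{\frac{n+1}{n}\cdot\frac{1}{K} + \frac{r+\sigma^2}{n}} ,
\]
using $\tr = Kr/K = r$; the directions orthogonal to $\mc{U}_s$ get coefficient $0$. As $n\to\infty$, $\beta_n\to1$, so $\mbf{\Gamma}^\star \to \mbf{P} \coloneqq \mbf{U}_s\mbf{U}_s^\top$, the projector onto the training span. All these quantities are finite-dimensional and continuous, so the limits may be passed inside the test expectation, which is a polynomial in $\mbf{\Gamma}$ for fixed $m$. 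I will check that $\epsilon$ appears in the test covariance too, but there its limit is also continuous.

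The final step is to compute the test risk for a fixed $\mbf{\Gamma}$ with test prompt length $m$:
\[
\mbb{E}\big[(\widetilde{y}_{m+1} - \mbf{x}_{m+1}^\top\mbf{\Gamma}\widetilde{\mbf{c}})^2\big] = \sigma^2 + \mbb{E}\|\widetilde{\mbf{w}} - \mbf{\Gamma}\widetilde{\mbf{c}}\|^2 .
\]
Using the same conditional moments with $m$ in place of $n$, the second term equals
\[
\tr\big((\mbf{I}-\mbf{\Gamma})\,\overline{\mbf{\Sigma}}_t\,(\mbf{I}-\mbf{\Gamma})^\top\big) + \mc{O}(1/m)\cdot\big(\tr(\mbf{\Gamma}\overline{\mbf{\Sigma}}_t\mbf{\Gamma}^\top) + (\tr\overline{\mbf{\Sigma}}_t+\sigma^2)\|\mbf{\Gamma}\|_F^2\big).
\]
With $\mbf{\Gamma}=\mbf{P}$ and $\epsilon\to0$, the bias term is $\|(\mbf{I}-\mbf{P})\overline{\mbf{U}}_t\|_F^2 = 0$, because $\overline{\mbf{U}}_t = \sum_k\alpha_k\mbf{U}_{s,k}$ lies in $\mc{U}_s$ for any $\{\alpha_k\}$. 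The same holds for $\mbf{U}_t(\theta)$ in \Cref{thm:pos_result}, which is the special case $K=2$. The variance term is $\mc{O}(q/m)\to0$ as $m\to\infty$, so the risk tends to $\sigma^2$.

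The main obstacle is justifying that the "optimal model" is indeed the $\mbf{\Gamma}^\star$ above. Two issues arise. First, the full parameterization $\mbf{W}_Q\mbf{W}_K^\top$, $\mbf{W}_V$ contains extra blocks, for example the $y$-row contributions, and one must argue that these vanish at the optimum. For Gaussian zero-mean inputs this follows because odd moments vanish and cross terms contribute only nonnegative extra loss; I would argue this as in \cite{ahn2023transformers, zhangtraining}. Second, the optimum must be shown to be realizable by the product parameterization, which is immediate since any $\mbf{\Gamma}$ is representable. A secondary subtlety is the $\epsilon\to0$ limit. For $\epsilon=0$ the minimizer is non-unique off $\mc{U}_s$, but taking $\epsilon\to0$ after optimizing selects the minimum-norm solution, which has zero component outside the span. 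This is exactly why the theorem orders the limits as stated.
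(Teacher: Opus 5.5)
Your proposal is correct and has the same skeleton as the paper: identify the optimal model, substitute it into the test risk, then take the limits in the stated order. The differences are that you prove what the paper imports, and you organize the final computation differently.

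The paper takes two ingredients from an external reference. The first is the optimal model $\mathbf{A}=\bigl(\frac{n+1}{n}\mathbf{I}_d+\frac{M_s}{n}\mathbf{\Sigma}^{-1}\bigr)^{-1}$, with $M_s=\operatorname{Tr}(\mathbf{\Sigma})+\sigma^2$. The second is the risk formula $\bigl(\frac1m\operatorname{Tr}(\mathbf{A}^\top\mathbf{A})+1\bigr)\bigl(\operatorname{Tr}(\overline{\mathbf{\Sigma}}_t)+\sigma^2\bigr)-2\operatorname{Tr}(\overline{\mathbf{\Sigma}}_t\mathbf{A})+\frac{m+1}{m}\operatorname{Tr}(\mathbf{A}\overline{\mathbf{\Sigma}}_t\mathbf{A}^\top)$. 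It then diagonalizes $\mathbf{A}$ in the basis $[\mathbf{U}_{s,1},\dots,\mathbf{U}_{s,K},\mathbf{U}_\perp]$ and evaluates every trace blockwise, tracking the weights $\alpha_k^2\nu_k$. This gives an exact finite-$(m,n)$ risk before any limit is taken.

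Your $\mathbf{\Gamma}^\star$ coincides with $\mathbf{A}$, because $\mathbf{\Sigma}$ commutes with the bracket. Your bias--variance expression expands to exactly the paper's formula, so the two computations agree term by term. Your symmetry argument for discarding the extra blocks of $\mathbf{W}_Q\mathbf{W}_K^\top$ and $\mathbf{W}_V$ supplies what the paper outsources.

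Your projector finish, with bias $\|(\mathbf{I}_d-\mathbf{U}_s\mathbf{U}_s^\top)\overline{\mathbf{U}}_t\|_F^2=0$, is more conceptual. It shows the conclusion holds for any orthonormal test basis with range in $\mathcal{U}_s$ and any positive weights $\gamma_k$, since each eigenvalue $\nu_k$ of $\mathbf{\Gamma}^\star$ on $\mathrm{span}(\mathbf{U}_{s,k})$ tends to $1$. It is not limited to $\overline{\mathbf{U}}_t=\sum_k\alpha_k\mathbf{U}_{s,k}$ with $\gamma_k=1/K$. In exchange, the paper's route gives the explicit finite-sample risk.

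Two small corrections:
\begin{itemize}
\item \textbf{The symmetry.} What kills the extra blocks is $(\mathbf{w},\xi)\mapsto(-\mathbf{w},-\xi)$. Under it, $\mathbf{c}$ and the target $\mathbf{w}$ are odd, while $\widehat{\mathbf{\Lambda}}$ and $\frac1n\sum_i y_i^2$ are even, so the cross terms vanish. Zero-mean inputs alone do not separate these terms: flipping all inputs together with the noise leaves $\mathbf{c}$ invariant.
\item \textbf{Uniqueness at $\epsilon=0$.} For finite $n$ the minimizer is \emph{not} non-unique at $\epsilon=0$. Since $\mathbb{E}[\mathbf{c}\mathbf{c}^\top]=\frac{n+1}{n}\mathbf{\Sigma}+\frac{\operatorname{Tr}(\mathbf{\Sigma})+\sigma^2}{n}\mathbf{I}_d\succ 0$ whenever $\operatorname{Tr}(\mathbf{\Sigma})=r>0$, the quadratic is strictly convex even at $\epsilon=0$. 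The minimizer $\mathbf{\Gamma}^\star=\mathbf{\Sigma}\,\mathbb{E}[\mathbf{c}\mathbf{c}^\top]^{-1}$ is therefore unique and already vanishes on $\mathcal{U}_s^\perp$. Non-uniqueness off $\mathcal{U}_s$ would arise only if $n\to\infty$ were taken before optimizing. The paper's $\epsilon$ is needed only to write $\mathbf{\Sigma}^{-1}$. Drop the minimum-norm-selection explanation of the limit order; nothing else in your argument depends on it.
\end{itemize}
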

\end{tcolorbox}
Similar to \Cref{thm:pos_result}, if the linear attention model is trained on task vectors that lie in a union of $K$ subspaces, it can generalize well to any region within the span of the $K$ subspaces, even if those regions have zero probability density during training. 
Lastly, note that by setting $K = 2$, $\alpha_1 = \cos(\theta)$, and $\alpha_2 = \sin(\theta)$, we exactly recover \Cref{thm:pos_result}.

\begin{figure*}[t!]
    \centering
    \includegraphics[width=0.925\linewidth]{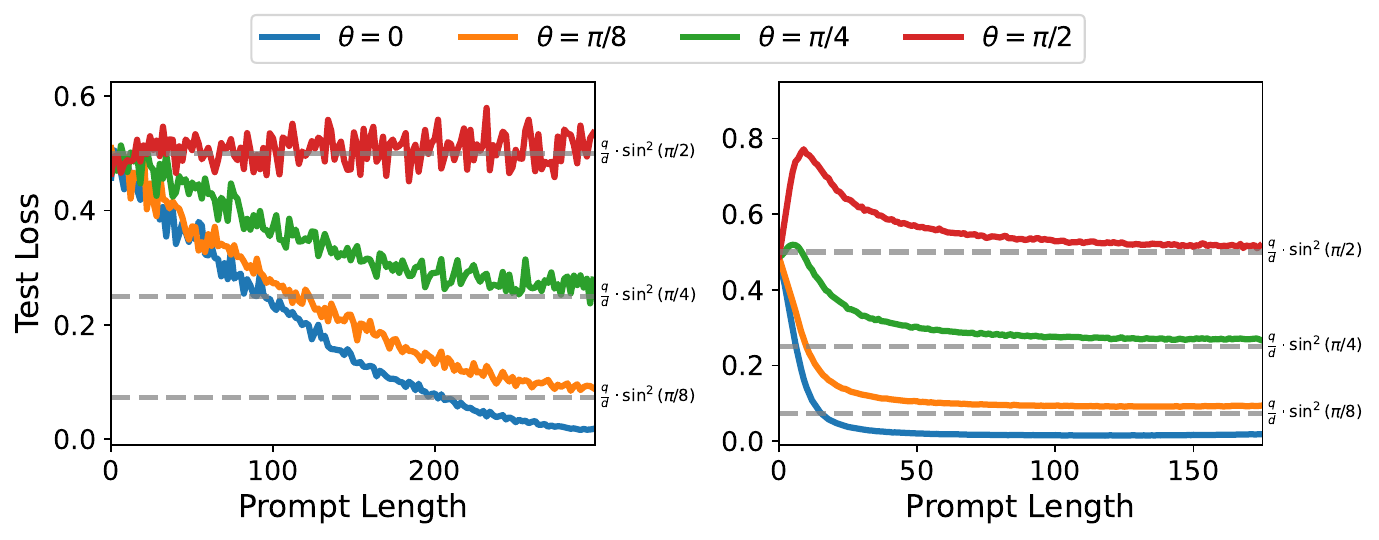}
    \caption{\textbf{Plot of the test risk as a function of prompt length when trained using diverse task vectors, but with the test subspace drawn from \Cref{eq:outside_task_subspace}.} \textbf{Left:} Plot of the risk for linear attention. \textbf{Right:} Plot of the risk for GPT-2. The test loss does not approach zero even with large prompt lengths, and  instead converges to the (normalized) test risk in \Cref{thm:neg_result} as the test subspace shifts away from the training subspaces at angle $\theta$ for both architectures.}
    \label{fig:neg_result}
\end{figure*}

\subsection{ICL Cannot Generalize Outside of the Training Subspaces}

Previously, we saw that when a transformer is trained with diverse task vectors, transformers can generalize to any test task vector drawn from the span of the training subspaces via ICL, despite not being present in the training data. This leaves us with the following question: what happens if we test the model with a subspace outside of the training subspaces, and what is its effect on the test risk? To investigate this, we can similarly define a test subspace at an angle outside of the training subspaces, and consequently compute the test risk with respect to this new test subspace.

To this end, with a slight abuse of notation, let us go back to the $K=2$ case and define a new testing subspace
\begin{align}
\label{eq:outside_task_subspace}
\mbf{U}_t = \mbf{U}_{s}\cdot \mathrm{cos}\big(\mbf{\Theta}\big) + \mbf{U}_{s,\perp} \cdot \mathrm{sin}\big( \mbf{\Theta} \big) \in \mbb{R}^{d\times q},
\end{align}
where $\mbf{U}_s \in \mbb{R}^{d\times q}$ is an orthonormal basis for the span of training subspaces $\mc{U}_s = \mathrm{span}(\mbf{U}_{s, 1}) +  \mathrm{span}(\mbf{U}_{s, 2})$ and $\mbf{U}_{s, \perp}\in \mbb{R}^{d\times q}$ is any arbitrary $q$-dimensional subspace orthogonal to $\mbf{U}_s$, i.e., $\mbf{U}_s^\top \mbf{U}_{s,\perp} = \mbf{0}$. The following result presents the test risk
with respect to $\mbf{U}_t$.

\begin{tcolorbox}
    \begin{theorem}
\label{thm:neg_result}
Let $g^\star_{\mc{W}}$ denote the optimal linear attention model
corresponding to the independent training task diverse data setting in \Cref{def:training_tasks} with $K=2$.
For all $j \in [m+1]$, suppose that the test prompts are constructed as in \Cref{eq:vanilla_setup_test}, where the testing subspace $\mbf{U}_t$ takes the form in \Cref{eq:outside_task_subspace}. Then, for some $\theta \in [0, \frac{\pi}{2}]$, we have
\begin{equation*}
    \lim_{m \to \infty} \lim_{n \to \infty} \lim_{\epsilon \to 0}
    \mbb{E}\!\left[ \left( \widetilde{y}_{m+1}
    - g^\star_{\mc{W}}(\widetilde{\mbf{Z}}) \right)^2 \right]
    = q \sin^2(\theta) + \sigma^2.
\end{equation*}
\end{theorem}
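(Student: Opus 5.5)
The plan is to use the fact that the optimal single-layer linear attention model collapses to a preconditioned one-step estimator: $g_{\mc{W}}(\mbf{Z}) = \mbf{x}_{n+1}^\top \mbf{\Gamma}\, \mbf{c}$ with $\mbf{c} = \frac{1}{n}\sum_{i=1}^n y_i \mbf{x}_i$. This is the reduction in \Cref{prop:gf_dynamics} with $\mbf{\Gamma} = v\mbf{Q}\mbf{K}^\top$; for the global optimum I take it in the standard way from the cited literature rather than re-deriving it. Minimizing the training loss over $\mbf{\Gamma}$ is then a linear least-squares problem. I will (i) solve for the optimal $\mbf{\Gamma}^\star_n$ at finite $n$, (ii) take $\epsilon\to 0$ and then $n\to\infty$ to identify its limit, and (iii) plug the limit into the test risk and send $m\to\infty$. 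The same skeleton also proves \Cref{thm:pos_result} and \Cref{thm:mixture_k_subspaces}; only the final trace differs.

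\textbf{Step 1 (optimal preconditioner).} The normal equations read $\mbf{\Gamma}\,\mbb{E}[\mbf{c}\mbf{c}^\top] = \mbb{E}[\mbf{w}\mbf{c}^\top]$. I will check that these expectations depend on the law of $\mbf{w}$ only through $\mbf{\Sigma} \coloneqq \mbb{E}[\mbf{w}\mbf{w}^\top]$, so the mixture structure is irrelevant at this stage. This follows by conditioning on $\mbf{w}$ and using Gaussian fourth moments (Isserlis), which give $\mbb{E}[\mbf{x}\mbf{x}^\top\mbf{w}\mbf{w}^\top\mbf{x}\mbf{x}^\top] = 2\mbf{w}\mbf{w}^\top + \|\mbf{w}\|^2\mbf{I}$; this is linear in $\mbf{w}\mbf{w}^\top$. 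Hence
\begin{align*}
\mbb{E}[\mbf{w}\mbf{c}^\top]=\mbf{\Sigma}, \qquad \mbb{E}[\mbf{c}\mbf{c}^\top] = \tfrac{n+1}{n}\mbf{\Sigma} + \tfrac{\tr(\mbf{\Sigma})+\sigma^2}{n}\mbf{I}_d,
\end{align*}
and $\mbf{\Gamma}^\star_n = \mbf{\Sigma}\bigl(\tfrac{n+1}{n}\mbf{\Sigma} + \tfrac{\tr(\mbf{\Sigma})+\sigma^2}{n}\mbf{I}_d\bigr)^{-1}$, which is well-defined because the second factor is positive definite.

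\textbf{Step 2 (limits $\epsilon\to0$, then $n\to\infty$).} With $\mbf{\Sigma} = \tfrac12(\mbf{\Sigma}_{s,1}+\mbf{\Sigma}_{s,2}) \to \tfrac12 \mbf{U}_s\mbf{U}_s^\top$, the matrix $\mbf{\Gamma}^\star_n$ is a spectral function of $\mbf{U}_s\mbf{U}_s^\top$. It vanishes on $\mc{U}_s^\perp$ and equals $\frac{1/2}{(n+1)/(2n) + (q/2+\sigma^2)/n}$ on $\mc{U}_s$, using $\tr(\mbf{\Sigma}) = q/2$ when $\epsilon=0$. Therefore $\mbf{\Gamma}^\star_n \to \mbf{P} \coloneqq \mbf{U}_s\mbf{U}_s^\top$. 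The order of limits matters here: at fixed $\epsilon>0$ one would instead get $\mbf{\Gamma}\to\mbf{I}_d$, and the theorem would fail. So I will keep $\epsilon\to0$ innermost and note that the test-side $\epsilon$ in $\mbf{\Sigma}_t$ is harmless by continuity.

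\textbf{Step 3 (test risk).} With $\tilde{\mbf{c}} = \frac1m\sum_{j\le m}\widetilde y_j\mbf{x}_j$ and $\mbf{x}_{m+1}$ independent of the prompt, the risk equals $\sigma^2 + \mbb{E}\,\|\widetilde{\mbf{w}} - \mbf{\Gamma}^{\star\top}_n\tilde{\mbf{c}}\|^2$. Applying Step 1's moment formulas with $m$ in place of $n$ and $\mbf{\Sigma}_t$ in place of $\mbf{\Sigma}$, this becomes a closed-form expression in $\mbf{\Gamma}^\star_n$, $\mbf{\Sigma}_t$ and $m$. Its $1/m$ terms vanish as $m\to\infty$, and in the limit it is $\tr\bigl((\mbf{I}-\mbf{P})\mbf{\Sigma}_t(\mbf{I}-\mbf{P})\bigr) + \sigma^2$. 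Finally, for $\mbf{U}_t$ as in \Cref{eq:outside_task_subspace} we have $(\mbf{I}-\mbf{P})\mbf{U}_t = \sin(\theta)\,\mbf{U}_{s,\perp}$, so the trace equals $\sin^2(\theta)\,\tr(\mbf{U}_{s,\perp}^\top\mbf{U}_{s,\perp}) = q\sin^2\theta$. This gives the claim for every $\theta$, and in particular for some $\theta$ as stated.

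The main obstacle is the bookkeeping in Step 3. One has to expand $\mbb{E}[\tilde{\mbf{c}}\tilde{\mbf{c}}^\top]$ exactly, including the $\tfrac{1}{m}(\tr(\mbf{\Sigma}_t)+\sigma^2)\mbf{I}$ and $\tfrac1m\mbf{\Sigma}_t$ terms, and confirm that they vanish in the iterated limit. One also has to confirm that $\mbf{\Gamma}^\star_n$ stays bounded so the $n$ and $m$ limits can be taken in the stated order. I would first carry out the fully explicit computation with $\mbf{\Gamma}^\star_n = c_n\mbf{P}$ to sidestep any uniformity issues.
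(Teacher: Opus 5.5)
Your proposal is correct and follows essentially the same route as the paper. In both, the optimal weights reduce to the preconditioner $\mbf{A}=\mbf{\Gamma}^\star_n=\mbf{\Sigma}\bigl(\tfrac{n+1}{n}\mbf{\Sigma}+\tfrac{\tr(\mbf{\Sigma})+\sigma^2}{n}\mbf{I}_d\bigr)^{-1}$, the test risk is the same closed-form trace expansion (the paper cites it from prior work; you re-derive it via Isserlis), and the limits are taken in the same order. Your projector form of the $n\to\infty$ limit, $\sigma^2+\tr\bigl((\mbf{I}-\mbf{P})\mbf{\Sigma}_t(\mbf{I}-\mbf{P})\bigr)$, is a cleaner organization of the paper's computation of $\tr(\mbf{\Sigma}_t\mbf{A})$ and $\tr(\mbf{\Sigma}_t\mbf{A}^2)$. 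One small point in your favor: you use the actual $K=2$ mixture covariance $\tfrac12\mbf{U}_s\mbf{U}_s^\top+\epsilon\mbf{I}_d$, whereas the paper's proof plugs in $\mbf{U}_s\mbf{U}_s^\top+\epsilon\mbf{I}_d$. That changes $\nu_1$ at finite $n$ but not its limit $1$, so the conclusion is unaffected.
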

\end{tcolorbox}

In contrast to \Cref{thm:pos_result} and \Cref{thm:mixture_k_subspaces}, \Cref{thm:neg_result} states that when the testing subspace is at an angle $\theta$ away from the training subspace, the test risk is a function of $\theta$ scaled by the subspace rank $q$. Of course, when $\theta = 0$, we have $\mbf{U}_t = \mbf{U}_s$, and the test risk reduces to that of \Cref{thm:pos_result}, but for any $\theta \in (0, \frac{\pi}{2}]$, the test risk has a non-negligible dependence on $\theta$. This result further implies that OOD generalization in this setup can only occur when the test subspace lies within the span of the training subspaces (and hence when the train task vectors are diverse), and so the apparent OOD capabilities arise precisely from that condition. In \Cref{fig:neg_result}, we corroborate our result on both linear attention and GPT-2, showing that this limitation does not arise from analyzing linear attention but is a fundamental limitation of ICL.
\section{Experimental Results}
\label{sec:experiments}

This section is organized as follows: in \Cref{sec:exp_setup}, we describe the experimental setup for both linear attention and GPT-2; in \Cref{sec:exp_convergence}, we provide results related to the acceleration analysis in \Cref{sec:acceleration}; and finally, in \Cref{sec:exp_ood}, we present our findings regarding OOD generalization as in \Cref{sec:ood}.

\begin{figure}[t!]
    \centering
    \includegraphics[width=0.8\linewidth]{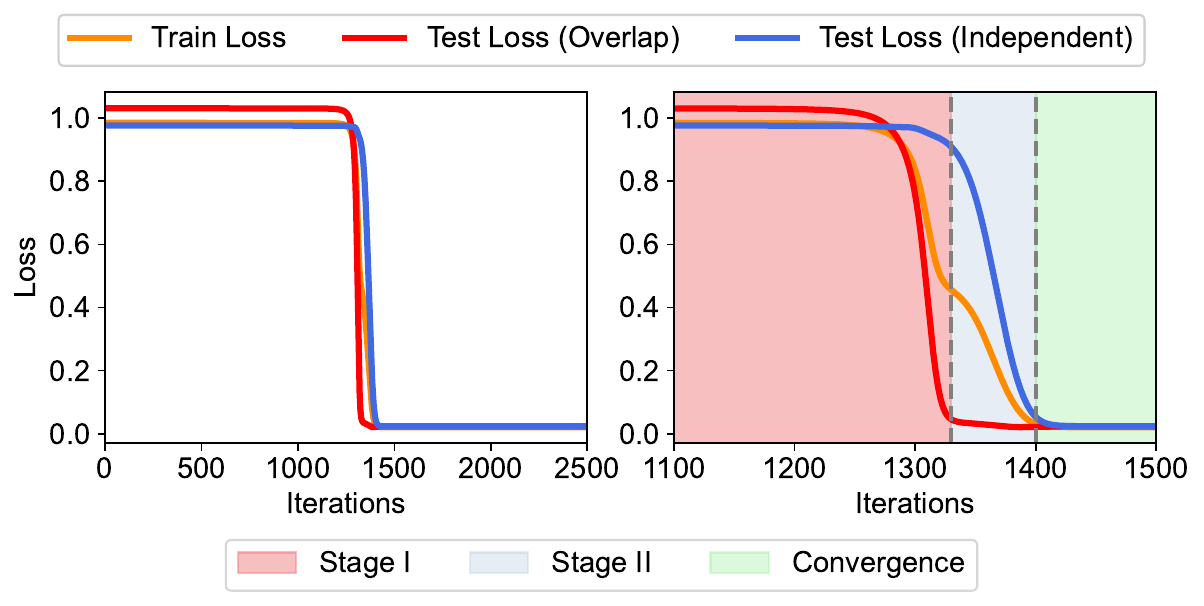}
    \caption{\textbf{Depiction of the two-stage learning phenomenon on a single-layer linear attention model.} We verify the two stages by testing on two different test-task vectors: one drawn from the overlapping subspace, and another drawn from the independent components (i.e., the remaining components orthogonal to the overlapping subspace). The right figure shows a zoomed-in version of the test losses, showing that the test loss corresponding to the overlapping subspace drops first, followed by a rapid drop of the test loss of the independent components, indicating the two-stage learning dynamics.}
    \label{fig:two_phase_learning}
\end{figure}

\subsection{Experimental Setup}
\label{sec:exp_setup}

Unless otherwise stated, the setup is as follows. For experiments with GPT-2, we follow Garg et al.~\cite{garg2022what} and use a model with 6 layers, 4 heads, and a 128-dimensional embedding space. We append a learnable linear transformation to map the vector predicted by the model to a scalar. We use a learning rate of $\eta = 10^{-4}$, a batch size of 128, and prompt lengths of $m = n = 150$, and we train for a total of 100K iterations. We train the model using a single A100 GPU.

For experiments with linear attention, the setup differs between \Cref{sec:exp_convergence} and \Cref{sec:exp_ood}. For \Cref{sec:exp_ood}, no training is performed, as we test the model using optimal linear attention weights that can be derived in closed form (see \cite[Appendix~B]{kwon2026outofdistribution}). For \Cref{sec:exp_convergence}, we train a single-layer transformer with a fixed training dataset size of 6000 and a test dataset size of 3000, where applicable. We use a learning rate of $\eta = 2.5 \times 10^{-3}$ and $\alpha = 0.001$, and we train for a total of 2500 iterations. We train the model on a MacBook Pro with an M2 chip.

\subsection{Experiments for Accelerating Convergence}
\label{sec:exp_convergence}

\paragraph{Linear Attention.} First, we provide experiments corroborating the two-stage learning dynamics discussed in \Cref{sec:acceleration}, where common subspace components are learned first, followed by the independent components. To this end, we train a single-layer linear attention model with $d=15$ and $q=12$. We construct training orthonormal bases $\mbf{U}_{s, 1}, \mbf{U}_{s, 2} \in \mathbb{R}^{15 \times 8}$ such that $\nov = 4$ and $\nin = 8$. Recall that for the single-task setting, we then use $\mbf{U}_s$, which is defined as an orthonormal basis for the subspace $\mathrm{span}(\mbf{U}_{s, 1}) + \mathrm{span}(\mbf{U}_{s, 2})$.
Let $\mbf{U}_{\text{over}} \in \mbb{R}^{d \times \nov}$ and $\mbf{U}_{\text{ind}} \in \mbb{R}^{d \times \nin}$ denote the subspaces spanned by the overlapping and independent components of  $\mbf{U}_{s, 1}$ and $\mbf{U}_{s, 2}$, respectively. To verify the two-stage phenomenon, we draw test task vectors from the subspaces $\mbf{U}_{\text{over}}$ and $\mbf{U}_{\text{ind}}$ and record the test loss during training. As shown in \Cref{fig:two_phase_learning}, the test loss corresponding to the overlapping subspace drops first, followed by the test loss corresponding to the independent subspace.

\begin{figure}[t!]
    \centering
     \begin{subfigure}[b]{0.49\textwidth}
         \centering
        \includegraphics[width=0.85\textwidth]{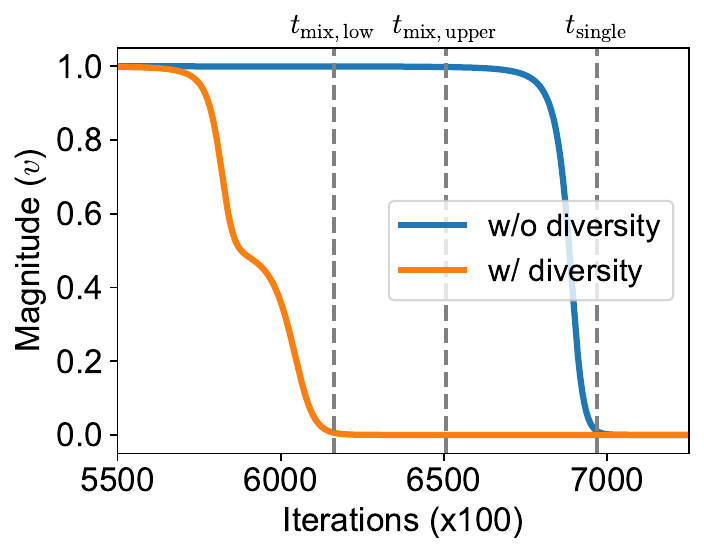}
     \end{subfigure}
     \hfill
     \begin{subfigure}[b]{0.49\textwidth}
         \centering
         \includegraphics[width=0.925\textwidth]{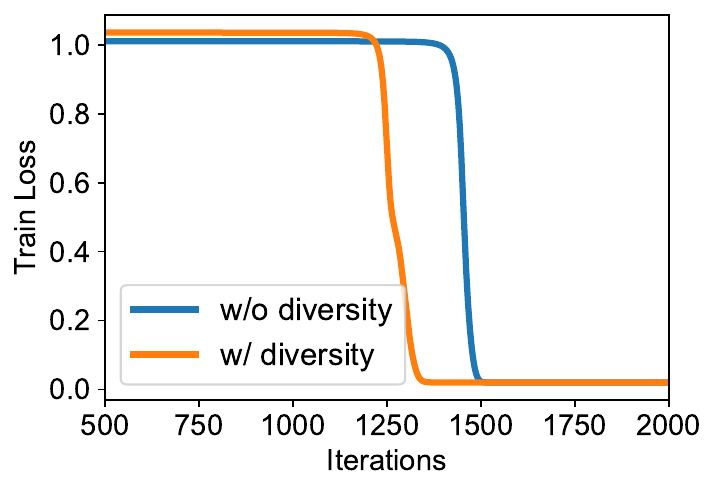}
     \end{subfigure}   
    \caption{\textbf{Demonstrating that task diversity shortens the ICL plateau and accelerates convergence.} We consider the case in which $\nov = 4$ and $\nin = 8$ (which sets $\nov=12$ without diversity). \textbf{Left:} Simulated GF dynamics demonstrating that $t_{\text{mix, low}}$, defined as $t_{\text{mix}}$ evaluated at the lower bound of $t_{\text{ind}}$, is strictly less than $t_{\text{single}}$ and serves as an accurate approximation of the convergence time. \textbf{Right:} Plot of the training loss for a single-layer linear attention model, showing that task diversity accelerates convergence.}
    \label{fig:gf_and_linear}
\end{figure}

\begin{figure}[t!]
    \centering
    \includegraphics[width=0.8\linewidth]{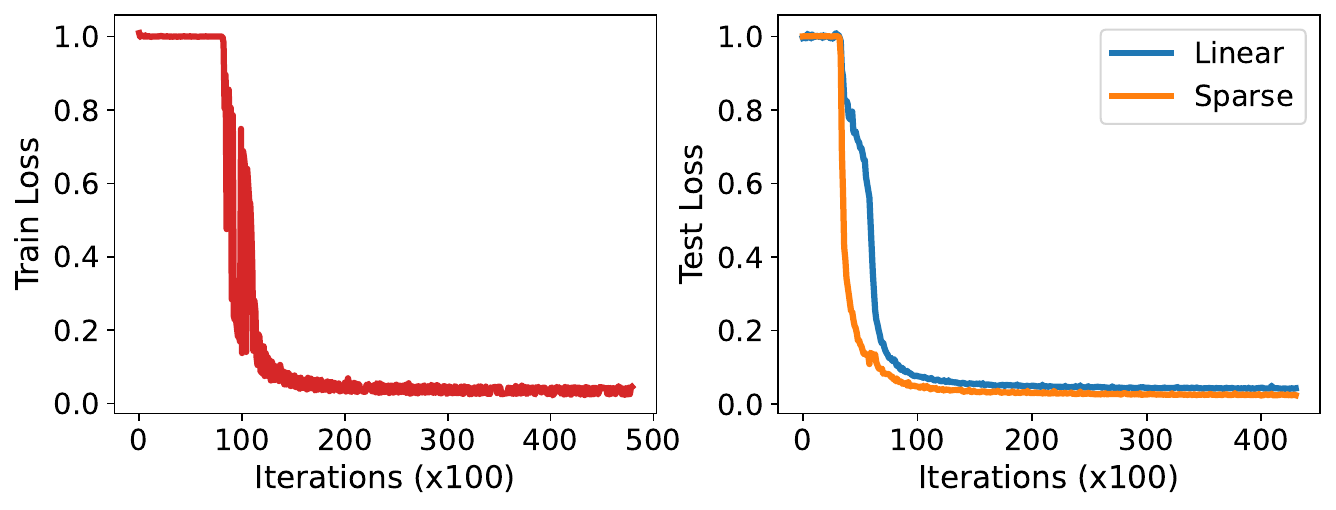}
    \caption{\textbf{Depiction of the two-stage learning phenomenon on GPT-2, where we train the model using two function classes, linear regression and sparse regression, sampled with equal probability.} For sparse regression, we mask out $5$ components such that the unmasked components can be viewed as the overlapping components. The ICL plateau drops once the sparse regression components (i.e., the common components) are learned, after which the test loss for linear regression drops shortly thereafter.}
    \label{fig:splinreg}
\end{figure}

Second, we consider the same setup as above to demonstrate that training with diverse task vectors speeds up convergence. In \Cref{fig:gf_and_linear} (left), we present results using simulated GF dynamics, similar to the approach in \Cref{fig:power_law_pred_time}. Here, $t_{\text{mix, low}}$ and $t_{\text{mix, upper}}$ denote the estimated convergence times for the task-diverse setting, evaluated using the lower and upper bounds of $t_{\text{ind}}$, respectively. We demonstrate that while both bounds are smaller than $t_{\text{single}}$, $t_{\text{mix, low}}$ serves as a tight approximation of the actual convergence time. In \Cref{fig:gf_and_linear} (right), we show the training loss of a linear attention model, where the loss decreases faster in the task-diverse case.

\paragraph{Softmax Attention.} Here, we aim to show that the two-stage learning phenomenon holds more generally, specifically for GPT-2. We consider a setting with $d=10$ and $n=150$, training the model on task vectors drawn with equal probability from two function classes: linear regression, where $\mbf{w}, \mbf{x}_i \sim \mc{N}(\mbf{0}, \mbf{I}_d)$, and sparse regression, where 5 components are masked. By viewing the sparse regression components as a subset of the full linear regression space (the overlapping components), one would expect the test loss for sparse regression to drop first, followed by the loss for linear regression. As shown in \Cref{fig:splinreg}, the test loss for linear regression exhibits a delayed drop, while the total ICL plateau begins to descend only after the sparse regression task is learned, i.e., the common task is learned.

\begin{figure}[t!]
    \centering
     \begin{subfigure}[t!]{0.49\textwidth}
         \centering
        \includegraphics[width=0.9\textwidth]{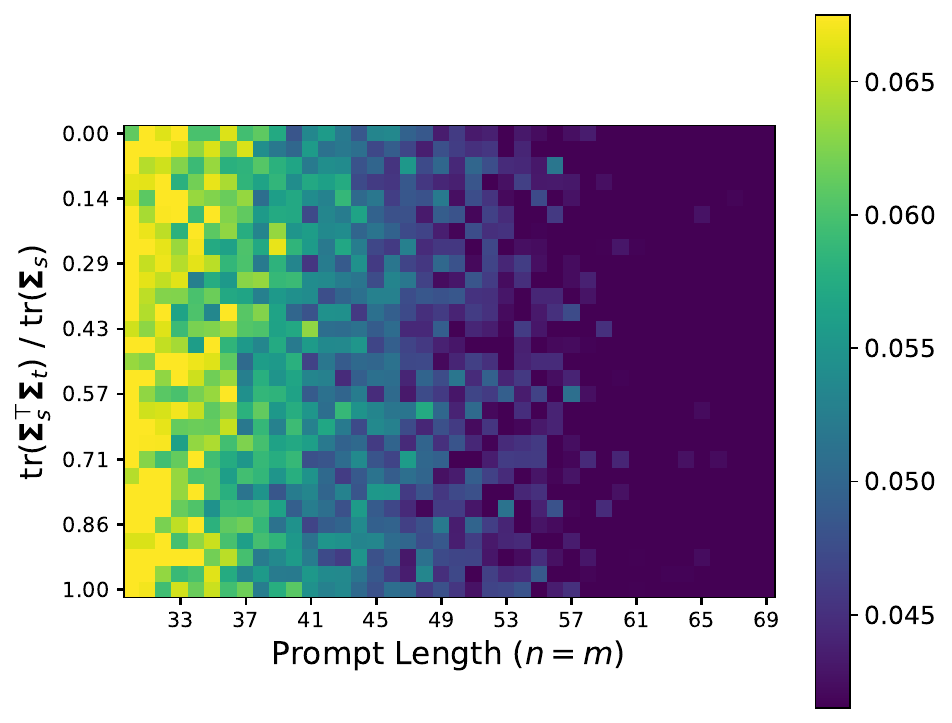}
     \end{subfigure}
     \hfill
     \begin{subfigure}[t!]{0.49\textwidth}
         \centering
         \includegraphics[width=0.8\textwidth]{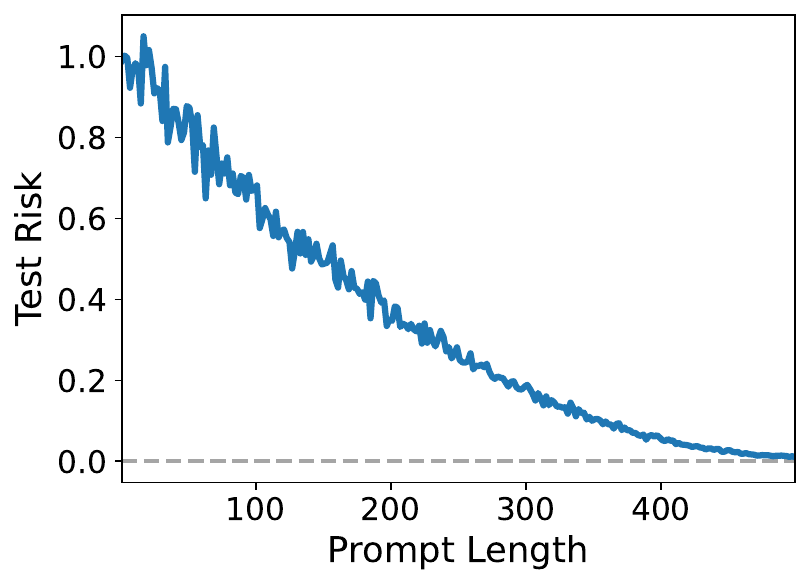}
     \end{subfigure}   
    \caption{\textbf{Left:} Phase plot of the test risk as we vary the angle between $\mbf{\Sigma}_s$ and $\mbf{\Sigma}_t$ and the prompt length with $m=n$ for a linear attention model trained with a mixture of Gaussians. The test risk is low across all angle shifts, and decreases further as the prompt length increases.  \textbf{Right:} Plot of the test risk as a function of the prompt length for a case in which $\mbf{\Sigma}_s \neq \mbf{\Sigma}_t$ but with $\theta = 0$, following the OOD example in~\cite{gatmiry2024looped}. This serves to explain why ICL can seemingly do OOD generalization as observed in the literature.}
    \label{fig:phase_and_unify}
\end{figure}

\subsection{Experiments for Out-of-Distribution Generalization}
\label{sec:exp_ood}

For experiments on both linear and softmax attention models (including \Cref{fig:pos_result} and \Cref{fig:neg_result}), we use $K=2$ with $d=20$, $q=10$, $r=5$, and $\sigma^2 = 0$. To construct the subspaces, we sample an orthogonal matrix $\mathbf{U} \in \mathbb{R}^{d \times d}$ uniformly at random, then set $\mathbf{U}_s$ as the first $q$ columns of $\mathbf{U}$ and $\mathbf{U}_{s, \perp}$ as the second $q$ columns. Consequently, $\mathbf{U}_{s, 1}$ and $\mathbf{U}_{s, 2}$ are formed by taking the first and last $r$ columns of $\mathbf{U}_s$, respectively.

\paragraph{Linear Attention.} 

To supplement \Cref{fig:pos_result}, in Figure~\ref{fig:phase_and_unify} (left), we present a phase plot of the test risk as a function of $ \tr(\mbf{\Sigma}_s^\top \mbf{\Sigma}_t) \, / \, \tr(\mbf{\Sigma}_s)$ (which measures the angle between two covariance matrices) and the prompt length on linear attention with task vectors drawn from a mixture of two Gaussians. Similar to Figure~\ref{fig:pos_result}, the test risk is low for all values of $m = n$, and it decreases further as the prompt length increases. Note that the largest possible normalized test risk in this setting is $r/d = 0.25$, so the test risk is still considered low even when the prompt length is small.

Next, we present an additional experiment supporting our primary message in \Cref{sec:ood} by adopting the setting from \cite{gatmiry2024looped}, with $\mathbf{\Sigma}_s = \mathbf{I}_5$ and $\mathbf{\Sigma}_t = \mathbf{V}\mathbf{\Lambda}_t \mathbf{V}^\top$, where $\mathbf{V} \in \mathbb{R}^{5\times 5}$ is a random orthogonal matrix and $\mathbf{\Lambda}_t = \mathrm{Diag}(1, 1, 1/2, 1/4, 1)$. As shown in Figure~\ref{fig:phase_and_unify} (right), the test risk approaches zero given a sufficient number of samples. This suggests that our results may help explain various observations of OOD generalization in ICL; specifically, because the testing covariance matrix is a subset of the training covariance matrix, our framework offers a unifying perspective on findings reported in the literature.

\begin{figure}[t!]
    \centering
     \begin{subfigure}[b]{0.49\textwidth}
         \centering
        \includegraphics[width=0.95\textwidth]{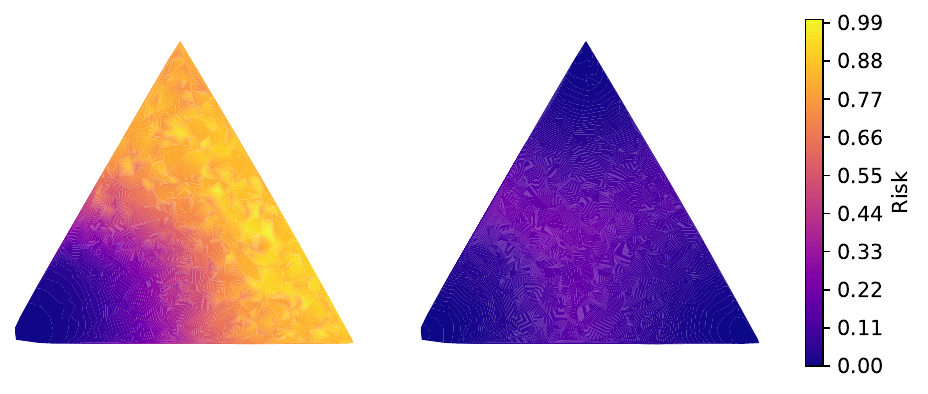}
     \end{subfigure}
     \hfill
     \begin{subfigure}[b]{0.49\textwidth}
         \centering
         \includegraphics[width=0.95\textwidth]{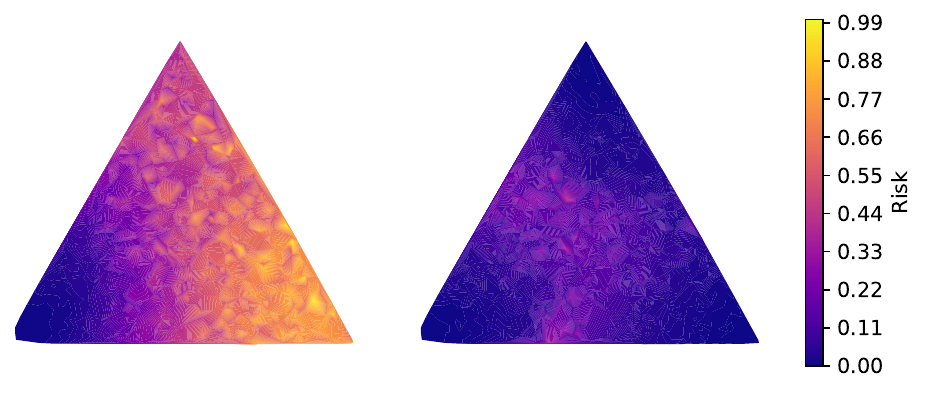}
     \end{subfigure}   
    \caption{\textbf{Visualization of the generalization behavior of transformers for learning nonlinear function classes in-context.} Each corner of a triangle represents a one-dimensional subspace spanned by $\psi_1$ (bottom left), $\psi_2$ (bottom right), or $\psi_3$ (top), with all possible convex combinations given by the interior. In all cases, we show the risk when evaluated at different points in $\mbox{span}(\{\psi_1, \psi_2, \psi_3\})$ for the appropriate function space. \textbf{Left:} Experiments using cosine bases. The first trains on prompts drawn from $\mbox{span}(\{\psi_1^C\})$, while the second trains on prompts drawn from $\mbox{span}(\{\psi_1^C\}) \cup \mbox{span}(\{\psi_2^C\}) \cup \mbox{span}(\{\psi_3^C\})$. \textbf{Right:} Experiments with Hermite polynomials. The third trains on prompts drawn from $\mbox{span}(\{\psi_1^H\})$. The fourth trains on prompts drawn from $\mbox{span}(\{\psi_1^H\}) \cup \mbox{span}(\{\psi_2^H\}) \cup \mbox{span}(\{\psi_3^H\})$. 
    }
    \label{fig:beyond-linear}
\end{figure}

\paragraph{Softmax Attention.} We use the GPT-2 model to extend our observations beyond linear function classes. Specifically, we look at two function spaces, namely $L^2([0, 1])$ and $L^2(\mathbb{R}, e^{-x^2/2} / \sqrt{2\pi} \,dx)$, i.e., square-integrable functions under the uniform and Gaussian measures respectively, which model rich sets of signals observed in real-world data.
For the former, we construct an orthonormal basis via cosines, i.e., $\psi_n^C(x) = (1/\sqrt{2})\cos(n\pi x)$ for $n \in \mathbb{N}$. For the latter, we construct an orthonormal basis via Hermite polynomials: 
\begin{equation*}
    \psi_n^H(x) = \frac{(-1)^n}{\sqrt{n!}} e^{x^2/2} \frac{d^n(e^{-x^2/2})}{dx^n} \quad \mbox{for} \; n \in \mathbb{N}.
\end{equation*}
As described in previous sections, we consider two settings: observing instances of a single (one-dimensional) subspace, as well as for a union of three (one-dimensional) subspaces. As before, we draw the function coefficients from standard multivariate Gaussian. We draw the inputs from the distribution appropriate to the function space measure, i.e., $x \sim \mathcal{U}([0, 1])$ for $L^2([0, 1])$ and $x \sim \mathcal{N}(0, 1)$ for $L^2(\mathbb{R}, e^{-x^2/2} / \sqrt{2\pi} \,dx)$. All other details are identical to previous (nonlinear) transformer experiments. The results are shown in \Cref{fig:beyond-linear}.

As shown in panels (a) and (c) of \Cref{fig:beyond-linear}, we see that transformers are not robust to subspace shifts for either function class, with increasing test risk with respect to the subspace angle from the train subspace, in accordance with \Cref{thm:neg_result}. On the other hand, as shown in panels (b) and (d) of \Cref{fig:beyond-linear}, we have the generalization behavior described by \Cref{thm:pos_result}, where training on the mixture of subspaces results in low risk in the space spanned by the basis vectors.

\section{Conclusion}
\label{sec:conclusion}

In this work, we proposed a new notion of task diversity, distinct from existing definitions in the literature, using low-dimensional subspaces. We showed that this provides a tractable framework for analysis and demonstrated how task diversity can both accelerate convergence to a stationary point and enable OOD generalization in ICL. For convergence, we showed that task diversity induces a two-stage learning paradigm: common structures are learned first, and once learned, the distinct components are learned much more quickly, thereby accelerating convergence. For OOD generalization, we showed that task diversity enables generalization to regions with zero probability density under the training distribution. Furthermore, we demonstrated empirically that our theoretical results from both parts extend to transformer models such as GPT-2.


\section*{Acknowledgement}
 QQ, SK, AX, and CY acknowledge NSF CAREER CCF-2143904, NSF IIS 2312842, NSF IIS 2402950, and DARPA HR00112520042. QQ also acknowledges the Google Research Scholar and Google TPU Award.  LB, CY, and SK acknowledge NSF CAREER CCF-1845076 and NSF CCF-2331590. We would like to thank Emrullah Ildiz (University of Michigan), Samet Oymak (University of Michigan), and Daniel Hsu (Columbia University) for fruitful discussions.

\newpage

\printbibliography

\newpage

\appendix

\onecolumn
\par\noindent\rule{\textwidth}{1pt}
\begin{center}
{\Large \bf Appendix}
\end{center}
\vspace{-0.1in}
\par\noindent\rule{\textwidth}{1pt}
\appendix


\section{Deferred Proofs from \Cref{sec:acceleration}}

\subsection{Proof of \Cref{prop:gf_dynamics}}

\begin{proof}
We begin by simplifying the linear attention model. Let us define the following:
\begin{align*}
    \mbf{W}_Q = \begin{bmatrix}
        \mbf{Q} & \mbf{q}_1 \\
        \mbf{q}_2^\top & q
    \end{bmatrix},
    \quad
    \mbf{W}_K = \begin{bmatrix}
        \mbf{K} & \mbf{k}_1 \\
        \mbf{k}_2^\top & k
    \end{bmatrix},
    \quad
    \mbf{W}_V = \begin{bmatrix}
        \mbf{V} & \mbf{v}_1 \\
        \mbf{v}_2^\top & v
    \end{bmatrix},
    \quad\text{and} \quad
    \mbf{p} = \begin{bmatrix}
    \mbf{0}_d \\ 1
\end{bmatrix},
\end{align*}
where $\mbf{Q}, \mbf{K}, \mbf{V} \in \mbb{R}^{d\times d}$. Then, note that the initialization conditions in \Cref{eqn:gf_initial} sets $\mbf{q}_i, \mbf{k}_i, \mbf{v}_i = \mbf{0}_d$ for $i \in \{1, 2\}$. Some algebraic manipulations allow us to simplify the model as such:
\begin{align*}
g_\mc{W}(\mbf{Z}) &= \frac{1}{n}\left(\mbf{z}_{q}^\top\mbf{W}_Q \mbf{W}_K^\top \mbf{Z}_\mc{M}^\top \right) \mbf{Z}_\mc{M} \mbf{W}_V \mbf{p} \tag{$\mbf{z}_q$ denotes query}\\
&=\frac{1}{n} \underbrace{\begin{bmatrix}
    \mbf{x}_q^\top & 0
\end{bmatrix}}_{\eqqcolon \mbf{z}_q^\top}
\mbf{W}_Q\mbf{W}_K^\top \mbf{Z}_\mc{M}^\top \mbf{Z}_\mc{M} \underbrace{\begin{bmatrix}
    \mbf{v}_1 \\ v
\end{bmatrix}}_{\coloneqq \mbf{v}} \tag{Due to $\mbf{0}_d$ in $\mbf{p}$} \\
&= \begin{bmatrix}
    \mbf{x}_q^\top & 0
\end{bmatrix}
\begin{bmatrix}
    \mbf{Q} & \mbf{q}_1 \\ \mbf{0}_d^\top & 0
\end{bmatrix}
\begin{bmatrix}
        \mbf{K}^\top & \mbf{k}_2 \\
        \mbf{k}_1^\top & k
    \end{bmatrix}
\underbrace{\begin{bmatrix}
\frac{1}{n}\sum_{i=1}^{n} \mathbf{x}_{i}\,\mathbf{x}_{i}^{\top} &
 \frac{1}{n}\sum_{i=1}^{n} y_{i}\,\mathbf{x}_{i} \\
\frac{1}{n}\sum_{i=1}^{n} y_{i}\,\mathbf{x}_{i}^{\top} &
\frac{1}{n}\sum_{i=1}^{n} y_{i}^{2}
\end{bmatrix}}_{= \mbf{Z}_{\mc{M}}^\top\mbf{Z}_{\mc{M}}} \mbf{v} \tag{Due to $0$ in $\mbf{z}_q$}\\
&= \begin{bmatrix}
    \mbf{x}_q^\top \widehat{\mbf{Q}}\widehat{\mbf{K}}^\top & \mbf{x}_q^\top \widehat{\mbf{Q}} \mbf{k}
\end{bmatrix} 
\begin{bmatrix}
\frac{1}{n}\sum_{i=1}^{n} \mathbf{x}_{i}\,\mathbf{x}_{i}^{\top} &
 \frac{1}{n}\sum_{i=1}^{n} y_{i}\,\mathbf{x}_{i} \\
\frac{1}{n}\sum_{i=1}^{n} y_{i}\,\mathbf{x}_{i}^{\top} &
\frac{1}{n}\sum_{i=1}^{n} y_{i}^{2}
\end{bmatrix} \mbf{v},
\end{align*}
where we have defined $\widehat{\mbf{Q}} = \begin{bmatrix}
    \mbf{Q} & \mbf{q}_1
\end{bmatrix} \in \mbb{R}^{d\times (d+1)}$, $\widehat{\mbf{K}} = \begin{bmatrix}
    \mbf{K} & \mbf{k}_1
\end{bmatrix} \in \mbb{R}^{d\times (d+1)}$ and $\mbf{k}^\top = \begin{bmatrix}
    \mbf{k}_2^\top & k
\end{bmatrix}$. Consider the following notation:
\begin{align*}
    \widehat{\mbf{\Lambda}} = \frac{1}{n}\sum_{i=1}^{n} \mathbf{x}_{i}\,\mathbf{x}_{i}^{\top}, \quad \mbf{c} = \frac{1}{n}\sum_{i=1}^{n} y_{i}\,\mathbf{x}_{i}, \quad \text{and} \quad \alpha =  \frac{1}{n}\sum_{i=1}^{n} y_{i}^{2}.
\end{align*}
We can then express linear attention as follows:
\begin{align*}
    g_{\mc{W}}\left( \mbf{Z} \right) &= \begin{bmatrix}
        \mbf{x}_q^\top \widehat{\mbf{Q}}\widehat{\mbf{K}}^\top \widehat{\mbf{\Lambda}} +  \mbf{x}_q^\top \widehat{\mbf{Q}}\mbf{k}\mbf{c}^\top & \mbf{x}_q^\top \widehat{\mbf{Q}}\widehat{\mbf{K}}^\top\mbf{c} + \alpha\mbf{x}_q^\top \widehat{\mbf{Q}}\mbf{k}
    \end{bmatrix} \mbf{v} \\
    &= \left( \mbf{x}_q^\top \widehat{\mbf{Q}}\widehat{\mbf{K}}^\top \widehat{\mbf{\Lambda}} +  \mbf{x}_q^\top \widehat{\mbf{Q}}\mbf{k}\mbf{c}^\top\right)\mbf{v}_1 + \left(\mbf{x}_q^\top \widehat{\mbf{Q}}\widehat{\mbf{K}}^\top\mbf{c} + \alpha\mbf{x}_q^\top \widehat{\mbf{Q}}\mbf{k} \right) v.
\end{align*}
Now, recall that we initialize $\mbf{v}_1 = \mbf{0}_d$ and $\mbf{k} = \mbf{0}_{d+1}$. Then, the model simplifies to 
\begin{align*}
    g_{\mc{W}}\left(\mbf{Z}\right) = v\mbf{x}_q^\top \widehat{\mbf{Q}}\widehat{\mbf{K}}^\top \mbf{c}.
\end{align*}
We show that if they are initialized such that $\mbf{v}_1 = \mbf{0}_d$ and $\mbf{k} = \mbf{0}_{d+1}$, they will remain zero: 
    \begin{align*}
        \tau \dot{\mbf{v}}_1 &= \mbb{E} \left[\left(y_q -g_{\mc{W}}\left( \mbf{Z} \right)\right) \cdot \left( \widehat{\mbf{\Lambda}}^\top \widehat{\mbf{K}} \widehat{\mbf{Q}}^\top \mbf{x}_q + \mbf{c} \mbf{k}^\top \widehat{\mbf{Q}}^\top \mbf{x}_q \right) \right] \\
        &= \mbb{E} \left[\left(\mbf{w}^\top \mbf{x}_q - v \mbf{c}^\top \widehat{\mbf{K}} \widehat{\mbf{Q}}^\top \mbf{x}_q \right) \cdot \left( \widehat{\mbf{\Lambda}}^\top \widehat{\mbf{K}} \widehat{\mbf{Q}}^\top \mbf{x}_q \right) \right]  \tag{$\mbf{k} = \mbf{0}_{d+1}$} \\
        &= \mbb{E} \left[\left(\mbf{w}^\top \mbf{x}_q - v \mbf{w}^\top \widehat{\mbf{\Lambda}} \widehat{\mbf{K}} \widehat{\mbf{Q}}^\top \mbf{x}_q \right) \cdot \left( \widehat{\mbf{\Lambda}}^\top \widehat{\mbf{K}} \widehat{\mbf{Q}}^\top \mbf{x}_q \right) \right]  \tag{$\mbf{c}^\top = \frac{1}{n} \sum\limits_{i=1}^n \mbf{w}^\top \mbf{x}_i \mbf{x}_i^\top = \mbf{w}^\top \widehat{\mbf{\Lambda}}$} \\
        &= \mbb{E}[\mbf{w}]^\top \mbb{E}\left[\left(\mbf{x}_q - v\widehat{\mbf{\Lambda}} \widehat{\mbf{K}} \widehat{\mbf{Q}}^\top \mbf{x}_q \right) \cdot \left( \widehat{\mbf{\Lambda}}^\top \widehat{\mbf{K}} \widehat{\mbf{Q}}^\top \mbf{x}_q \right)\right] = \mbf{0}_d. \\
    \tau \dot{\mbf{k}} &= \mbb{E}\left[v\widehat{\mbf{Q}}^\top\mbf{x}_q \cdot \left(\widehat{y}_q -g_{\mc{W}}\left( \mbf{Z} \right)\right) \right] \\
    &= v\widehat{\mbf{Q}}^\top\mbb{E}\left[\mbf{x}_q \cdot \left(\widehat{y}_q -g_{\mc{W}}\left( \mbf{Z} \right)\right) \right] \\
    &= v\widehat{\mbf{Q}}^\top\mbb{E}\left[\mbf{x}_q\right] \cdot\mbb{E} \left[ \left(\widehat{y}_q -g_{\mc{W}}\left( \mbf{Z} \right)\right) \right] \\
    &= \mbf{0}_{d+1}.
    \end{align*}
This gives us the following gradients:
\begin{align*}
    \tau \dot{\widehat{\mbf{Q}}} &= v \mbb{E}\left[ \mbf{x}_q(\widehat{y}_q - v \mbf{x}_q^\top \widehat{\mbf{Q}}\widehat{\mbf{K}}^\top \mbf{c})\mbf{c}^\top\right]  \widehat{\mbf{K}}\\
    \tau \dot{\widehat{\mbf{K}}} &= v \widehat{\mbf{Q}}^\top\mbb{E}\left[ \mbf{x}_q(\widehat{y}_q - v \mbf{x}_q^\top \widehat{\mbf{Q}}\widehat{\mbf{K}}^\top \mbf{c})\mbf{c}^\top\right].
\end{align*}
Note that we also initialize $\mbf{q}_1 = \mbf{k}_1 = \mbf{0}_d$. We show that these also remain zero:
\begin{align*}
    \tau \dot{\mbf{q}}_1 =  \tau \dot{\widehat{\mbf{Q}}} \cdot \mbf{e}_{d+1} &= v \mbb{E}\left[ \mbf{x}_q(\widehat{y}_q - v \mbf{x}_q^\top \widehat{\mbf{Q}}\widehat{\mbf{K}}^\top \mbf{c})\mbf{c}^\top\right]  \widehat{\mbf{K}}  \cdot \mbf{e}_{d+1} \\
    &= v \mbb{E}\left[ \mbf{x}_q(\widehat{y}_q - v \mbf{x}_q^\top \widehat{\mbf{Q}}\widehat{\mbf{K}}^\top \mbf{c})\mbf{c}^\top\right]  \mbf{k}_{1}\\
    &= \mbf{0}.\tag{$\mbf{k}_{1}(0) = \mbf{0}$}
\end{align*}
The proof for $\mbf{k}_1$ follows verbatim. Hence, we have the dynamics
\begin{align*}
    \tau \dot{v} &= \mbb{E}\left[ (\widehat{y}_q - v \mbf{x}_q^\top \mbf{QK}^\top \mbf{c})\cdot \mbf{x}_q^\top \mbf{QK}^\top \mbf{c}  \right]  \\
    &= \mbb{E}\left[ \tr\left((\widehat{y}_q - v \mbf{x}_q^\top \mbf{QK}^\top \mbf{c})\cdot \mbf{x}_q^\top \mbf{QK}^\top \mbf{c} \right) \right] \\
 &= \mbb{E}\left[ \tr\left((\widehat{y}_q - v \mbf{x}_q^\top \mbf{QK}^\top \mbf{c})\cdot \mbf{c}^\top \mbf{K}\mbf{Q}^\top \mbf{x}_q \right) \right] \\
    &= \mbb{E}\left[ \tr\left(\mbf{Q}^\top\mbf{x}_q \cdot (\widehat{y}_q - v \mbf{x}_q^\top \mbf{QK}^\top \mbf{c})\cdot \mbf{c}^\top \mbf{K} \right) \right] \\
      &=  \tr\left(\mbf{Q}^\top\mbb{E}\left[\mbf{x}_q \cdot (\widehat{y}_q - v \mbf{x}_q^\top \mbf{QK}^\top \mbf{c})\cdot \mbf{c}^\top \right]\mbf{K} \right)  \\
    \tau \dot{\mbf{Q}} &= v \mbb{E}\left[ \mbf{x}_q(\widehat{y}_q - v \mbf{x}_q^\top \mbf{QK}^\top \mbf{c})\mbf{c}^\top\right]  \mbf{K} \\
    \tau \dot{\mbf{K}} &= v \mbf{Q}^\top\mbb{E}\left[ \mbf{x}_q(\widehat{y}_q - v \mbf{x}_q^\top \mbf{QK}^\top \mbf{c})\mbf{c}^\top\right].
\end{align*}
Note that 
\begin{align*}
    \mbf{c} = \frac{1}{n} \sum_{i=1}^n y_i \mbf{x}_i = \frac{1}{n} \sum_{i=1}^n \mbf{x}_i \mbf{x}_i^\top \mbf{w} = \widehat{\mbf{\Lambda}} \mbf{w}. 
\end{align*}
With this in mind, let us simplify the common term:
\begin{align*}
    \mbb{E}\left[ \mbf{x}_q(\widehat{y}_q - v \mbf{x}_q^\top \mbf{QK}^\top \mbf{c})\mbf{c}^\top\right] &= \mbb{E}\left[ \mbf{x}_q(\mbf{x}_q^\top\mbf{w} - v \mbf{x}_q^\top \mbf{QK}^\top \mbf{c})\mbf{c}^\top\right] \\
    &= \mbb{E}\left[ \mbf{x}_q\mbf{x}_q^\top \cdot (\mbf{w} - v \mbf{QK}^\top \mbf{c})\mbf{c}^\top\right]  \\
     &= \underbrace{\mbb{E}\left[ \mbf{x}_q\mbf{x}_q^\top \right]}_{= \mbf{I}_d}\cdot\mbb{E}\left[ (\mbf{w} - v \mbf{QK}^\top \mbf{c})\mbf{c}^\top\right] \\
     &= \mbb{E}\left[ (\mbf{I}_d - v \mbf{QK}^\top \widehat{\mbf{\Lambda}}) \cdot \mbf{w}\mbf{w}^\top \widehat{\mbf{\Lambda}}^\top\right] \\
     &= \mbb{E}\left[ \mbf{w} \mbf{w}^\top \widehat{\mbf{\Lambda}} \right] - v \mbf{Q} \mbf{K}^\top \mbb{E}\left[ \widehat{\mbf{\Lambda}} \mbf{w} \mbf{w}^\top \mbf{\Lambda} \right] \\
    &= \mbb{E}\left[ \mbf{w}\mbf{w}^\top \right] \mbb{E}\left[ \widehat{\mbf{\Lambda}} \right] - v \mbf{Q} \mbf{K}^\top \mbb{E}_{\widehat{\mbf{\Lambda}}}\left[ \widehat{\mbf{\Lambda}} \mbb{E}_{\mbf{w}} \left[ \mbf{w} \mbf{w}^\top \right] \widehat{\mbf{\Lambda}} \right] \\
    &= \mbf{\Sigma}_s - v \mbf{Q} \mbf{K}^\top \mbb{E}_{\widehat{\mbf{\Lambda}}} \left[ \widehat{\mbf{\Lambda}} \mbf{\Sigma}_s \widehat{\mbf{\Lambda}} \right].
\end{align*}
Focusing on the $\mbb{E}_{\widehat{\mbf{\Lambda}}} = \left[\widehat{\mbf{\Lambda}}\mbf{\Sigma}_s \widehat{\mbf{\Lambda}} \right]$:
\begin{align*}
    \mbb{E}_{\widehat{\mbf{\Lambda}}} \left[ \widehat{\mbf{\Lambda}} \mbf{\Sigma}_s \widehat{\mbf{\Lambda}} \right] &= \frac{1}{n^2} \mbb{E}\left[ \sum\limits_{i=1}^n \mbf{x}_i \mbf{x}_i^\top \mbf{\Sigma}_s \sum\limits_{j=1}^n \mbf{x}_j \mbf{x}_j^\top  \right] \\
    &= \frac{1}{n^2} \sum\limits_{i=1}^n \sum\limits_{j=1}^n \mbb{E}\left[ \mbf{x}_i \mbf{x}_i^\top \mbf{\Sigma}_s \mbf{x}_j \mbf{x}_j^\top \right] \\
    &= \frac{1}{n^2} \left( \sum\limits_{i=1}^n \sum\limits_{j \neq i} \underbrace{\mbb{E}\left[ \mbf{x}_i \mbf{x}_i^\top \mbf{\Sigma}_s \mbf{x}_j \mbf{x}_j^\top \right]}_{= \mbf{\Sigma}_s} + \sum\limits_{i=1}^n \mbb{E}\left[ \mbf{x}_i \mbf{x}_i^\top \mbf{\Sigma}_s \mbf{x}_i \mbf{x}_i^\top \right] \right) \\
    &= \frac{1}{n^2} \left( n(n - 1) \mbf{\Sigma}_s + \sum\limits_{i=1}^n \left( 2 \mbf{\Sigma_s} + \operatorname{Tr}\left( \mbf{\Sigma}_s \right) \mbf{I}_d  \right) \right) \\
    &= \frac{n + 1}{n} \mbf{\Sigma}_s + \frac{\operatorname{Tr}(\mbf{\Sigma}_s)}{n} \mbf{I}_d.
\end{align*}
Therefore,
\begin{align*}
    &\mbb{E}\left[ \mbf{x}_q (y_q - v \mbf{x}_q^\top \mbf{Q} \mbf{K}^\top \mbf{c}) \mbf{c}^\top \right] = \mbf{\Sigma}_s - v \mbf{Q} \mbf{K}^\top \left( \frac{n + 1}{n} \mbf{\Sigma}_s + \frac{\operatorname{Tr}(\mbf{\Sigma}_s)}{n} \mbf{I}_d \right), 
\end{align*}
which imply the dynamics:
\begin{align*}
    &\tau \dot{v} = \operatorname{Tr} \left( \mbf{Q}^\top \left( \mbf{\Sigma}_s - v \mbf{Q} \mbf{K}^\top \left( \frac{n + 1}{n} \mbf{\Sigma}_s + \frac{\operatorname{Tr}(\mbf{\Sigma}_s)}{n} \mbf{I}_d \right)  \right) \mbf{K} \right), \\
    &\tau\dot{\mbf{Q}} = v \left( \mbf{\Sigma}_s - v \mbf{Q} \mbf{K}^\top \left( \frac{n + 1}{n} \mbf{\Sigma}_s + \frac{\operatorname{Tr}(\mbf{\Sigma}_s)}{n} \mbf{I}_d \right) \right) \mbf{K}, \\
    &\tau \dot{\mbf{K}} = v \mbf{Q}^\top \left(  \mbf{\Sigma}_s - v \mbf{Q} \mbf{K}^\top \left( \frac{n + 1}{n} \mbf{\Sigma}_s + \frac{\operatorname{Tr}(\mbf{\Sigma}_s)}{n} \mbf{I}_d \right) \right).
\end{align*}
By taking the limit $n\to \infty$, we obtain the following:
This gives us the dynamics
\begin{align*}
    \tau \underline{\dot{v}} \coloneqq\tau \lim_{n\to \infty} \dot{v} &= \tr\left(\mbf{Q}^\top \left(\mbf{\Sigma}_s - v \mbf{QK}^\top \mbf{\Sigma}_s \right) \mbf{K}\right), \\
    \tau \underline{\dot{\mbf{Q}}} \coloneqq\tau \lim_{n\to \infty} \dot{\mbf{Q}} &= v \left(\mbf{\Sigma}_s - v \mbf{QK}^\top \mbf{\Sigma}_s \right)  \mbf{K}, \\
    \tau \underline{\dot{\mbf{K}}} \coloneqq\tau \lim_{n\to \infty} \dot{\mbf{K}} &= v \mbf{Q}^\top\left(\mbf{\Sigma}_s - v \mbf{QK}^\top \mbf{\Sigma}_s \right).
\end{align*}
Now it remains to show that the updates of $\mbf{Q} \in \mbb{R}^{d\times d}$ and $\mbf{K} \in \mbb{R}^{d\times d}$ occur in invariant subspaces. Let $\mbf{\Sigma}_s = \mbf{U}_s \mbf{\Lambda}_s \mbf{U}_s^\top$ denote the eigendecomposition of the task covariance matrix $\mbf{\Sigma}_s \in \mbb{R}^{d\times d}$. Consider the following change of variables:
\begin{align*}
    \mbf{\Lambda}_Q = \mbf{U}_s^\top \mbf{Q} \mbf{U}_s \quad \text{and} \quad \mbf{\Lambda}_K = \mbf{U}_s^\top \mbf{K} \mbf{U}_s
\end{align*}
By plugging in the dynamics for $\dot{\underline{\mbf{Q}}}$ and $\dot{\underline{\mbf{K}}}$, we obtain
\begin{align*}
    \tau \underline{\dot{\mbf{\Lambda}}}_Q &= v \left(\mbf{\Lambda}_s - v \mbf{\Lambda}_{Q} \mbf{\Lambda}_{K} \mbf{\Lambda}_s \right) \mbf{\Lambda}_{K} \\ 
     \tau \underline{\dot{\mbf{\Lambda}}}_K &= v\mbf{\Lambda}_{Q} \left(\mbf{\Lambda}_s - v \mbf{\Lambda}_{Q} \mbf{\Lambda}_{K} \mbf{\Lambda}_s \right) 
\end{align*}
By initialization, note that
\begin{align*}
    \mbf{Q}(0) = \alpha \mbf{I}_d = \alpha \mbf{U}_s \mbf{U}_s^\top \implies \mbf{\Lambda}_Q(0) = \alpha\mbf{I}_d \quad \text{and} \quad \mbf{K}(0) = \alpha \mbf{I}_d = \alpha \mbf{U}_s \mbf{U}_s^\top \implies \mbf{\Lambda}_K(0) = \alpha\mbf{I}_d.
\end{align*}
Since they are diagonal matrices at initialization, the GF dynamics are also diagonal, and remain diagonal for all $t\geq 0$. Returning to the original coordinates, this implies that both $\mbf{Q}(t)$ and $\mbf{K}(t)$ are jointly diagonalizable in the $\mbf{\Sigma}_s$ basis for all $t\geq 0$, and so we have the following dynamics:
\begin{align*}
    \tau \underline{\dot{\mbf{Q}}} \coloneqq\tau \lim_{n\to \infty} \dot{\mbf{Q}} &= \tau \lim_{n\to \infty} \mbf{U}_s \dot{\mbf{\Lambda}}_Q \mbf{U}_s^\top =  \mbf{U}_s \left(v\left(\mbf{\Lambda}_s - v \mbf{\Lambda}_{Q} \mbf{\Lambda}_{K} \mbf{\Lambda}_s \right) \mbf{\Lambda}_{K} \right) \mbf{U}_s^\top,\\ 
    \tau\underline{\dot{\mbf{K}}} \coloneqq \tau \lim_{n\to \infty} \dot{\mbf{K}} &= \tau \lim_{n\to \infty} \mbf{U}_s \dot{\mbf{\Lambda}}_K \mbf{U}_s^\top =  \mbf{U}_s\left(v\mbf{\Lambda}_{Q} \left(\mbf{\Lambda}_s - v \mbf{\Lambda}_{Q} \mbf{\Lambda}_{K} \mbf{\Lambda}_s \right)\right) \mbf{U}_s^\top, \\
    \tau\underline{\dot{v}} \coloneqq \tau \lim_{n\to \infty}\dot{v} &= \tr\left(\mbf{\Lambda}_{Q} \left( \mbf{\Lambda}_s - v \mbf{\Lambda}_{Q} \mbf{\Lambda}_{K} \mbf{\Lambda}_s\right)\mbf{\Lambda}_{K}\right).
\end{align*}
This completes the proof.
\end{proof}

\subsection{Proof of \Cref{lemma:conservation}}

\begin{proof}
It suffices to show that $\frac{d}{dt}\left[\sum_{i=1}^q \lambda_i^2(t) - v^2(t)\right] = 0$. Recall from the gradient flow dynamics that $\tau\underline{\dot{v}} = \sum_{i=1}^d \sigma_i(1 - v\lambda_i^2)\lambda_i^2$. Because the task covariance matrices are exactly rank $q$, the signal strength $\sigma_i = 0$ for all $i > q$. Therefore, the tail sum vanishes, and we can write the derivative of the invariant as:
\begin{align*}
\tau \frac{d}{dt}\left[\sum_{i=1}^q \lambda_i^2 - v^2\right] 
&= 2\sum_{i=1}^q \lambda_i \cdot \tau\underline{\dot{\lambda}}_i - 2v \cdot \tau\underline{\dot{v}} \\
&= 2\sum_{i=1}^q \lambda_i \cdot \sigma_i(1 - v\lambda_i^2) v\lambda_i - 2v \sum_{i=1}^q \sigma_i(1 - v\lambda_i^2) \lambda_i^2 \\
&= 2v \sum_{i=1}^q \sigma_i(1 - v\lambda_i^2)\lambda_i^2 - 2v \sum_{i=1}^q \sigma_i(1 - v\lambda_i^2)\lambda_i^2 = 0.
\end{align*}
Since the derivative is zero, the quantity $\sum_{i=1}^q \lambda_i^2(t) - v^2(t)$ is constant for all $t \geq 0$. By the initial conditions in \Cref{eqn:gf_initial}, $\sum_{i=1}^q \lambda_i^2(0) - v^2(0) = 0$, which completes the proof.
\end{proof}

\subsection{Derivation for Single Task Case}
\label{sec:sing_subspace_time}

\begin{proof}
    For the case in which the training task vectors are drawn from a single subspace (i.e., $\nin = 0$), we need to compute
\begin{align*}
    t_{\text{single}} = \tau q \int_{\alpha}^{\lambda^\star - \delta} \frac{d\lambda}{(1-v\lambda^2) \cdot v\lambda}  = \underbrace{\tau q \int_{\alpha}^{\lambda^\star - \delta} \frac{d\lambda}{v\lambda}}_{= t_{\textup{single, growth}}} + \underbrace{\tau q \int_{\alpha}^{\lambda^\star - \delta} \frac{\lambda}{1 - v\lambda^2} d\lambda}_{= t_{\textup{single, sat}}},
\end{align*}
as stated in \Cref{eqn:t_sing_decomp}. By \Cref{lemma:conservation}, the conservation law gives us
\begin{align*}
    v^2(t) = q\lambda^2(t) \implies v(t) = \sqrt{q} \lambda(t).
\end{align*}
By plugging this into $t_{\text{single}}$, we obtain
\begin{align*}
    t_{\text{single}} &= \tau \sqrt{q} \int_{\alpha}^{\lambda^\star - \delta} \frac{d\lambda}{\lambda^2} + \tau q \int_{\alpha}^{\lambda^\star - \delta} \frac{\lambda}{1 - \sqrt{q}\lambda^3} d\lambda \\
    &= \tau \sqrt{q} \left(\frac{1}{\alpha} - \frac{1}{\lambda^\star - \delta} \right)+ \tau q \int_{\alpha}^{\lambda^\star - \delta} \frac{\lambda}{1 - \sqrt{q}\lambda^3} d\lambda.
\end{align*}
We focus on the saturation term: it is straightforward to show that
\begin{align*}
    h(x) \coloneqq -\frac{1}{3} \log |x - 1| + \frac{1}{6} \log(x^2 + x + 1) - \frac{\sqrt{3}}{3} \arctan\left(\frac{2x + 1}{\sqrt{3}}\right) \implies h'(x) = \frac{x}{1-x^3},
\end{align*}
and so 
\begin{align*}
    \tau q \int_{\alpha}^{\lambda^\star - \delta} \frac{\lambda}{1 - v\lambda^2} d\lambda = \tau q \cdot q^{-1/3} \left[h\left(q^{1/6}\left(\lambda^\star - \delta \right) \right) - h\left(q^{1/6}\alpha \right)\right].
\end{align*}
Recall that in order for the system to be stationary, we require $v\lambda^{\star 2} = 1$, and with the conservation law, this gives us $\lambda^\star = q^{-1/6}$. Plugging this in, we obtain the following:
\begin{align*}
    t_{\text{single}} = \tau \sqrt{q} \left(\frac{1}{\alpha} - \frac{1}{\lambda^\star - \delta} \right)+ \tau q^{2/3} \cdot \left(h\left(1 - q^{1/6}\delta \right) - h\left(q^{1/6}\alpha \right)\right),
\end{align*}
which is exactly the expression in \Cref{eqn:t_sing_total}. This completes the derivation.
\end{proof}

\subsection{Derivation for Task Diverse Case}
\label{sec:mix_subspace_time}

\begin{proof}
Recall that our GF dynamics are given by
\begin{align*}
\tau \underline{\dot{\lambda}}_{\text{over}} &= \frac{1}{a} \left( 1 - v\lov^2 \right) v\lov \\
\tau \underline{\dot{\lambda}}_{\text{ind}} &= \frac{1}{2a} \left( 1 - v\lin^2 \right) v\lin \\
\tau \underline{\dot{v}} &= \frac{\nov}{a} \left( 1-v\lov^2  \right) v\lov + \frac{\nin}{2a} \left( 1-v\lin^2  \right) v\lin.
\end{align*}
Our goal is to derive $t_{\text{mix}}$, which can be decomposed into $t_{\text{over}} + t_{\text{ind}}$ under the power-law assumption, which states that $\lambda_{\text{ind}}$ can be approximated using $\lambda_{\text{ind}} = \sqrt{\alpha \lambda_{\text{over}}}$ during $t_{\text{over}}$. Then, at $t_{\text{ind}}$, we solve for the remaining time it takes for $\lambda_{\text{over}}$ to reach stationarity.

\paragraph{Stage I: Learning the Common Subspace Components.}

Following the derivation strategy in \Cref{sec:sing_subspace_time}, we have that 
\begin{align}
\label{eqn:t_over_init}
    t_{\text{over}} = a\tau \int_{\alpha}^{\lov^\star - \delta} \frac{d\lov}{\left( 1 - v\lov^2\right) v\lov},
\end{align}
which admits the following decomposition:
\begin{align}
    t_{\text{over}} = a\tau \int_{\alpha}^{\lov^\star - \delta} \frac{d\lov}{\left( 1 - v\lov^2\right) v\lov} = \underbrace{a\tau \int_{\alpha}^{\lov^\star - \delta} \frac{1}{v\lov}}_{= t_{\textup{over, growth}}} + \underbrace{a\tau \int_{\alpha}^{\lov^\star - \delta} \frac{v\lov^2}{1-v\lov^2}\cdot \frac{1}{v\lov}}_{= t_{\textup{over, sat}}}.
\end{align}
For this setting, we have the following conservation law by \Cref{lemma:conservation}:
\begin{align*}
    v^2(t) = \nov \lov^2(t) + \nin \lin^2(t).
\end{align*}
However, due to the coupled nature of $\lin$ and $\lov$, we cannot straightforwardly plug in the conservation law to derive the time as done in \Cref{sec:sing_subspace_time}. Instead, note that under the power-law assumption, $\lambda_{\text{ind}} = \sqrt{\alpha \lambda_{\text{over}}}$, and so 
\begin{align*}
    v^2(t) = \nov \lov^2 + \alpha \nin \lov.
\end{align*}
By plugging this into $t_{\textup{over, growth}}$, we obtain 
\begin{align*}
    t_{\textup{over, growth}} &= \tau a \int_{\alpha}^{\lov^{\star}-\delta} \frac{d\lov}{\lov^{3/2} \sqrt{\nov \lov + \alpha\nin}} \\ &= \frac{2\tau a}{\alpha \nin} \left(\frac{\sqrt{\nin \alpha + \nov \alpha}}{\sqrt{\alpha}} - \frac{\sqrt{\nin \alpha + \nov (\lov^\star - \delta)}}{\sqrt{\lov^\star - \delta}} \right) \\
    &=\frac{2\tau a}{\alpha \nin} \left(\sqrt{q} - \frac{\sqrt{\nin \alpha + \nov (\lov^\star - \delta)}}{\sqrt{\lov^\star - \delta}} \right).
\end{align*}
The challenging part is the saturation time, which does not admit a closed-form integral, even with the (approximate) conservation law. To handle this, we can make an ansatz on the GF dynamics to simplify into a 1D system. We highlight that this is  commonly done in the literature \cite{zhangtraining, varre2023on}. Let us define $u_{\text{over}} \coloneqq v \lov^2$, which gives us 
\begin{align*}
    t_{\textup{over, sat}} = a\tau \int_{\alpha}^{\lov^\star - \delta} \frac{v\lov^2}{1-v\lov^2}\cdot \frac{1}{v\lov} = a\tau \int_{\alpha}^{\lov^\star - \delta} \frac{u_{\text{over}}}{1-u_{\text{over}}}\cdot \frac{1}{v\lov}
\end{align*}
By the conservation law:
\begin{align*}
    v^2(t) = \nov \lov^2 + \alpha \nin \lov \implies u_{\text{over}}(t) = \lov^2\sqrt{\nov \lov^2 + \alpha \nin \lov},
\end{align*}
which implies that $u_{\text{over}} \propto \sqrt{\nov }\lov^3$  for sufficiently small $\alpha$. 
However, using the ansatz that $u_{\text{over}} = \sqrt{\nov }\lov^3$ may be inaccurate, as $u_{\text{over}} \neq 1$ when $\lov = \lov^\star$. To this end, we make the ansatz that $u_{\text{over}}$ can be approximated using the following calibrated cubic:
\begin{align*}
    u_{\text{over}} = \left(\frac{\lov}{\lov^\star}\right)^3.
\end{align*}
Then we have 
\begin{align*}
    \frac{u_{\text{over}}}{1-u_{\text{over}}}\cdot \frac{1}{v\lov} = \frac{u_{\text{over}}}{1-u_{\text{over}}}\cdot \frac{\lov}{u_{\text{over}}} = \frac{\lov}{1 - (\lov / \lov^\star)^3}.
\end{align*}
This gives us the following saturation time:
\begin{align*}
    t_{\textup{over, sat}} = a\tau \int_{\alpha}^{\lov^\star - \delta} \frac{\lov}{1 - (\lov / \lov^\star)^3} d\lov = a \tau \lov^{\star 2} \left(h\left( \frac{\lov^\star - \delta}{\lov^\star}\right) - h\left(\frac{\alpha}{\lov^\star} \right)\right),
\end{align*}
where 
\begin{align*}
    h(x) = -\frac{1}{3} \log |x - 1| + \frac{1}{6} \log(x^2 + x + 1) - \frac{\sqrt{3}}{3} \arctan\left(\frac{2x + 1}{\sqrt{3}}\right).
\end{align*}
is the same function as seen in \Cref{sec:sing_subspace_time}. Furthermore, note that if $\lov^\star = q^{-1/6}$ as in \Cref{sec:sing_subspace_time}, the two saturation times are equivalent, corroborating the use of the ansatz. Now it remains to compute $\lov^\star$. 
By plugging in the approximate conservation law into the dynamics of $\lov$, the stationarity condition for $\lov$ gives us that that optimal solution $\lov^\star$ should satisfy
\begin{align*}
    v\lov^2 = 1 &\implies \lov^2\sqrt{\nov \lov + \nin \alpha^{1/2} \lov}  = 1 \\
    &\implies \nov \lov^{\star 6} + \alpha \nin \lov^{\star 5} - 1 = 0.
\end{align*}
Putting it all together, we have that the time for Stage I is given by:
\begin{align*}
    t_{\text{over}}  
    = \underbrace{\frac{2\tau a}{\alpha \nin} \left(\frac{\sqrt{\nin \alpha + \nov \alpha}}{\sqrt{\alpha}} - \frac{\sqrt{\nin \alpha + \nov (\lov^\star - \delta)}}{\sqrt{\lov^\star - \delta}} \right)}_{t_{\text{over, growth}}} + \underbrace{a \tau \lov^{\star 2} \left(h\left( \frac{\lov^\star - \delta}{\lov^\star}\right) - h\left(\frac{\alpha}{\lov^\star} \right)\right)}_{t_{\text{over, sat}}}
\end{align*}
where $\lov^\star$ is the positive root of 
\begin{align*}
     \nov \lov^{*6} + \alpha \nin \lov^{*5} - 1 = 0.
\end{align*}
This completes the derivation.

\paragraph{Stage II: Learning the Independent Subspace Components.}

Once the shared subspace has been learned and saturated, we are left with the dynamics in terms of $\lin$ and $v$. We can set $u_{\text{over}} = v\lov^{\star 2} = 1$, which also gives us the conservation law
\begin{align*}
    v^2(t) = \nov \lov^{\star 2}(t) + \nin \lin^2(t).
\end{align*}
Now, we can take the latter approach  in \Cref{sec:sing_subspace_time}, and write the dynamics in terms of $\dot{u}_{\text{ind}} \coloneqq v\lin^2$:
\begin{align*}
    \tau \dot{u}_{\text{ind}} &= n_{\text{ind}}   \frac{u^2_{\text{ind}}(1 - u_{\text{ind}})}{(n_{\text{over}} + n_{\text{ind}} u_{\text{ind}})^{2/3}} + (n_{\text{over}} + n_{\text{ind}} u_{\text{ind}})^{1/3} (1 - u_{\text{ind}}) u_{\text{ind}} \\
    &= \frac{u_{\text{ind}}(1 - u_{\text{ind}})(n_{\text{over}} + 2n_{\text{ind}} u_{\text{ind}})}{ \left( n_{\text{over}} + n_{\text{ind}} u_{\text{ind}}\right)^{2/3} }
\end{align*}
The time it takes for $u_{\text{ind}}$ to reach $1-\delta$ from its initial value $u_{\text{ind}}(0)>0$  involves solving the integral
\begin{align*}
    t_{\text{ind}} = \tau a \int_{u_{\text{ind}}(0)}^{1-\delta} \frac{\left( n_{\text{over}} + n_{\text{ind}} u_{\text{ind}}\right)^{2/3}}{u_{\text{ind}}(1 - u_{\text{ind}})(n_{\text{over}} + 2n_{\text{ind}} u_{\text{ind}})}
\end{align*}
Unfortunately, this integral does not have a closed-form solution. Instead, we can derive upper and lower bounds on $t_{\text{ind}}$. Consider the function
\begin{align*}
    g(u_{\text{ind}}) = \frac{\left( n_{\text{over}} + n_{\text{ind}} u_{\text{ind}}\right)^{2/3}}{n_{\text{over}} + 2n_{\text{ind}} u_{\text{ind}}}.
\end{align*}
We are interested in the case where $u_{\text{ind}} \in (0, 1)$. On this interval, notice that $g(u_{\text{ind}})$ is decreasing:
\begin{align*}
    g'(\ui) = - \frac{2\nin (2\nov + \nin \ui)}{3(\nov + \nin \ui)^{1/3}(\nov + 2\nin \ui)^2} < 0 \implies g(1-\delta)\leq g(\ui) \leq g(u_{\text{ind}}(0)).
\end{align*}
This gives us the following bounds:
\begin{align*}
     g(1-\delta) \int_{u_{\text{ind}}(0)}^{1-\delta} \frac{d\ui}{\ui (1-\ui)} &\leq \, \frac{t_{\text{ind}} }{\tau a} \,\leq  g(u_{\text{ind}}(0)) \int_{u_{\text{ind}}(0)}^{1-\delta} \frac{d\ui}{\ui (1-\ui)} \\
     \tau a \cdot \log\left( \frac{(1-\delta)(1 - u_{\text{ind}}(0))}{\delta u_{\text{ind}}(0)}\right)g(1-\delta)  &\leq \, t_{\text{ind}} \, \leq \tau a \cdot \log\left( \frac{(1-\delta)(1 - u_{\text{ind}}(0))}{\delta u_{\text{ind}}(0)}\right)g(u_{\text{ind}}(0)).
\end{align*}
It remains to plug in a value for $u_{\text{ind}}(0)$. With the power law and the fact that $v\lov^{\star 2} = 1$ at the end of Stage I, we have 
\begin{align*}
    u_{\text{ind}}(0) = \left(1/ \lov^{\star 2}\right) \cdot \alpha \lov^\star = \alpha / \lov^\star.
\end{align*} 
This completes the derivation.
    
\end{proof}

\subsection{Derivation for \Cref{res:strict_acceleration}}
\label{sec:strict_acc_proof}

\begin{proof}
    Recall that the primary bottleneck in $t_{\textup{mix}}$ is $t_{\textup{over}}$, i.e., the time it takes to learn the common subspace components. We first show that the saturation terms, namely $t_{\textup{single,sat}}$ and $t_{\textup{over,sat}}$, are subleading. Indeed, $h$ is finite at $x = 0$:
\begin{align*}
    h(0)
    =
    -\frac{\sqrt{3}}{3}\arctan\!\left(\frac{1}{\sqrt{3}}\right)
    =
    -\frac{\pi\sqrt{3}}{18}
    =
    \mc{O}(1),
\end{align*}
and has only a logarithmic singularity as $x \to 1$. Consequently, both $t_{\textup{single,sat}}$ and $t_{\textup{over,sat}}$ scale as $\mc{O}(\log(1/\delta))$. In contrast, their growth counterparts diverge hyperbolically as $\alpha \to 0$. Therefore, for fixed $\delta$ and sufficiently small $\alpha$, the growth time is the dominant bottleneck in both the single-task and task-diverse settings.

We now derive the leading-order growth-time approximations. From \Cref{eqn:main_sing_decomp}, the single-task growth time is
\begin{align*}
    t_{\textup{single,growth}}
    =
    \tau \sqrt{q}
    \left(
        \frac{1}{\alpha}
        -
        \frac{1}{q^{-1/6}-\delta}
    \right).
\end{align*}
Since $\delta \in (0,q^{-1/6})$ is fixed, the second term is independent of $\alpha$. Therefore, as $\alpha \to 0$,
\begin{align*}
    t_{\textup{single,growth}}
    =
    \frac{\tau \sqrt{q}}{\alpha}
    -
    \frac{\tau \sqrt{q}}{q^{-1/6}-\delta}
    \approx
    \frac{\tau \sqrt{q}}{\alpha}.
\end{align*}
Similarly, from \Cref{eqn:main_mix_decomp}, the shared-stage growth time is
\begin{align*}
    t_{\textup{over,growth}}
    =
    \frac{2\tau a}{\alpha\nin}
    \left(
        \sqrt{q}
        -
        \sqrt{\nov + \frac{\nin \alpha}{\lov^\star-\delta}}
    \right), \quad \text{where} \quad a \coloneqq \frac{2\nov+\nin}{2}.
\end{align*}
Since $q = \nov+\nin$, we have
\begin{align*}
    2a
    =
    2\nov+\nin
    =
    q+\nov.
\end{align*}
Thus,
\begin{align*}
    \frac{2\tau a}{\alpha\nin}
    =
    \frac{\tau(q+\nov)}{\alpha\nin}.
\end{align*}
For sufficiently small $\alpha$, we have $\lov^\star \approx \nov^{-1/6}$ from \Cref{eqn:main_lambda_star_eqn}. Since $\delta < q^{-1/6} \leq \nov^{-1/6}$, the denominator $\lov^\star-\delta$ remains positive and bounded away from zero as $\alpha \to 0$. Therefore,
\begin{align*}
    \frac{\nin\alpha}{\lov^\star-\delta}
    =
    \mc{O}(\alpha),
\end{align*}
and hence
\begin{align*}
    \sqrt{\nov + \frac{\nin\alpha}{\lov^\star-\delta}}
    \approx
    \sqrt{\nov}.
\end{align*}
It follows that
\begin{align*}
    t_{\textup{over,growth}}
    &\approx
    \frac{\tau(q+\nov)}{\alpha\nin}
    \left(
        \sqrt{q}
        -
        \sqrt{\nov}
    \right).
\end{align*}
Using $\nin = q-\nov$, we simplify
\begin{align*}
    \frac{\sqrt{q}-\sqrt{\nov}}{\nin}
    =
    \frac{\sqrt{q}-\sqrt{\nov}}{q-\nov}
    =
    \frac{1}{\sqrt{q}+\sqrt{\nov}},
\end{align*}
where the last equality follows from
\begin{align*}
    q-\nov
    =
    \left(\sqrt{q}-\sqrt{\nov}\right)
    \left(\sqrt{q}+\sqrt{\nov}\right).
\end{align*}
Therefore,
\begin{align*}
    t_{\textup{over,growth}}
    \approx
    \frac{\tau(q+\nov)}{\alpha(\sqrt{q}+\sqrt{\nov})}.
\end{align*}
Combining the two leading-order approximations gives
\begin{align*}
    \frac{t_{\textup{over,growth}}}{t_{\textup{single,growth}}}
    \approx
    \frac{q+\nov}{\sqrt{q}(\sqrt{q}+\sqrt{\nov})} = \frac{q+\nov}{q+\sqrt{q\nov}} =
    \frac{1+\nov/q}{1+\sqrt{\nov/q}}.
\end{align*}
Recalling that $\kappa \coloneqq \nov/q$, we obtain
\begin{align*}
    \frac{t_{\textup{over,growth}}}{t_{\textup{single,growth}}}
    \approx
    \rho(\kappa)
    \coloneqq
    \frac{1+\kappa}{1+\sqrt{\kappa}}.
\end{align*}

Now, it remains to verify the claimed properties of $\rho$. First, for every $\kappa \in (0,1)$,
\begin{align*}
    \rho(\kappa) < 1
    \quad
    &\Longleftrightarrow
    \quad
    1+\kappa < 1+\sqrt{\kappa} \\
    &\Longleftrightarrow
    \quad
    \kappa < \sqrt{\kappa},
\end{align*}
which holds exactly when $\kappa \in (0,1)$. Hence, task diversity gives a strict reduction in the leading growth time whenever the two task subspaces have both shared and non-overlapping components.
To find the optimal overlap ratio, set $s = \sqrt{\kappa} \in [0,1]$. Then
\begin{align*}
    \rho(\kappa)
    =
    \frac{1+\kappa}{1+\sqrt{\kappa}}
    =
    \frac{1+s^2}{1+s}
    \eqqcolon
    \widetilde{\rho}(s).
\end{align*}
Differentiating with respect to $s$ gives
\begin{align*}
    \widetilde{\rho}'(s)
    =
    \frac{2s(1+s) - (1+s^2)}{(1+s)^2}
    =
    \frac{s^2 + 2s - 1}{(1+s)^2}.
\end{align*}
Thus, the unique critical point in $[0,1]$ satisfies
\begin{align*}
    s^2 + 2s - 1 = 0.
\end{align*}
Solving this quadratic gives
\begin{align*}
    s^\star = \sqrt{2} - 1.
\end{align*}
Therefore,
\begin{align*}
    \kappa^\star
    =
    (s^\star)^2
    =
    (\sqrt{2}-1)^2
    =
    3 - 2\sqrt{2}.
\end{align*}
At this value,
\begin{align*}
    \rho(\kappa^\star)
    =
    \frac{1+(\sqrt{2}-1)^2}{1+(\sqrt{2}-1)} =
    \frac{4 - 2\sqrt{2}}{\sqrt{2}} =
    2(\sqrt{2}-1).
\end{align*}
Since $2(\sqrt{2}-1) \approx 0.828$, the optimal overlap ratio yields roughly a $17\%$ reduction in the leading growth time. This completes the derivation.
\end{proof}

\subsection{Comparing Convergence Times Directly}
\label{sec:compare_convergence}

In this section, rather than using approximations for $t_{\text{single, growth}}$ and $t_{\text{over, growth}}$ as done in \Cref{sec:strict_acc_proof}, we prove the result directly, which reduces to showing that
\begin{align*}
    \underbrace{\frac{2\tau a}{\alpha \nin} \left(\sqrt{q} - \frac{\sqrt{\nin \alpha + \nov (\lov^\star - \delta)}}{\sqrt{\lov^\star - \delta}} \right)}_{\eqqcolon f(\lov^\star)} < \tau \sqrt{q} \left(\frac{1}{\alpha} - \frac{1}{q^{-1/6} - \delta}\right).
\end{align*}
We first aim to upper bound $f(\lov^\star)$. Note that
\begin{align*}
    f'(\lov^\star) = \frac{2\tau a}{2(\lov^\star - \delta)^2 \sqrt{\nov + \frac{\nin \alpha}{\lov^\star - \delta}}} > 0,
\end{align*}
and so $f$ is increasing for $\lov^\star > \delta$. Recall that  $\lov^\star$ is the positive root of the polynomial
\begin{align*}
    g(\lov^\star) \coloneqq \nov \lov^{*6} + \alpha \nin \lov^{*5} - 1.
\end{align*}
It is straightforward to show that $g(\cdot)$ is also an increasing function, and so by monotonicity:
\begin{align*}
    g(\nov^{-1/6}) = \alpha \nov^{-5/6} \geq 0 \implies \lov^\star \leq \nov^{-1/6}.
\end{align*}
By plugging in this upper bound to $f(\cdot)$, it suffices to show that
\begin{align*}
     \frac{2 a}{\alpha \nin} \left(\sqrt{q} - \frac{\sqrt{\nin \alpha + \nov (\nov^{-1/6} - \delta)}}{\sqrt{\nov^{-1/6} - \delta}} \right) <  \sqrt{q} \left(\frac{1}{\alpha} - \frac{1}{q^{-1/6} - \delta}\right),
\end{align*}
where we have the following definitions:
\begin{align*}
    a = \frac{2\nov + \nin}{2}
    \quad \text{and} \quad
    q = \nin + \nov.
\end{align*}
To show this relation, let us first define the following constants:
\begin{align*}
    B=n_{\mathrm{over}}^{-1/6}-\delta \quad  \text{and} \quad D=q^{-1/6}-\delta.
\end{align*}
Then, we focus on simplifying the left-hand side:
\begin{align*}
\frac{\sqrt{n_{\mathrm{ind}}\alpha+n_{\mathrm{over}}B}}{\sqrt{B}}
=
\sqrt{n_{\mathrm{over}}+\frac{n_{\mathrm{ind}}\alpha}{B}},
\end{align*}
and so we obtain
\begin{align*}
    \frac{2a}{\alpha \nin} \cdot \left(\sqrt{q} -  \sqrt{n_{\mathrm{over}}+\frac{n_{\mathrm{ind}}\alpha}{B}}\right) &< \sqrt{q} \left( \frac{1}{\alpha} - \frac{1}{D}\right) \\
    \frac{2a}{ \nin} \cdot \left(\sqrt{q} -  \sqrt{n_{\mathrm{over}}+\frac{n_{\mathrm{ind}}\alpha}{B}}\right) &< \sqrt{q} \left( 1 - \frac{\alpha}{D}\right) \tag{Multiply by $\alpha > 0$} \\
    K\sqrt{q} - K \sqrt{\nov + \frac{\nin\alpha}{B}} &< \sqrt{q} - \frac{\alpha \sqrt{q}}{D}\tag{$K\coloneqq \frac{2\nov + \nin}{\nin}$} \\
    K\sqrt{\nov + \frac{\nin \alpha}{B}} &> (K-1)\sqrt{q} + \frac{\alpha\sqrt{q}}{D} \\
    \sqrt{\nov + \frac{\nin \alpha}{B}} &> \frac{K-1}{K}\sqrt{q} + \frac{1}{KD}\alpha\sqrt{q} \tag{Divide by $K>0$} \\
    \sqrt{\nov + \frac{\nin \alpha}{B}} &> \frac{\nov \sqrt{q}}{a} + \frac{\alpha \nin \sqrt{q}}{2aD} \\
    \nov + \frac{\nin \alpha}{B} &> \left(\frac{\nov \sqrt{q}}{a} + \frac{\alpha \nin \sqrt{q}}{2aD}\right)^2.
\end{align*}
Bringing everything to one side, we have that 
\begin{align}
    \frac{q\,n_{\mathrm{ind}}^{2}}{4a^{2}D^{2}}\,\alpha^{2}
+
n_{\mathrm{ind}}\!\left(\frac{q\,n_{\mathrm{over}}}{a^{2}D}-\frac{1}{B}\right)\alpha
+
\left(\frac{q\,n_{\mathrm{over}}^{2}}{a^{2}}-n_{\mathrm{over}}\right)
<0,
\label{eqn:inequality_condition}
\end{align}
which shows that the condition for this to hold is quadratic in $\alpha$. Notice that 
\begin{align*}
\frac{q\,n_{\mathrm{over}}^{2}}{a^{2}}-n_{\mathrm{over}}
=
-\frac{n_{\mathrm{over}}\,n_{\mathrm{ind}}^{2}}{4a^{2}}
<0.
\end{align*}
Then, the parabola in \Cref{eqn:inequality_condition} opens upward after $\alpha > 0$, implying that there exists one positive root, say $\alpha'$ such that for all $0 < \alpha < \alpha'$, the condition in \Cref{eqn:inequality_condition} holds. In other words, for a sufficiently small initialization scale $\alpha > 0$, task diversity will have a faster growth time, thereby accelerating convergence.

\section{Deferred Proofs from \Cref{sec:ood}}

\subsection{Proof of \Cref{thm:pos_result} and \Cref{thm:mixture_k_subspaces}}

\begin{proof}

We only provide a proof for \Cref{thm:mixture_k_subspaces}, as \Cref{thm:pos_result} is a special case of \Cref{thm:mixture_k_subspaces} when $K = 2$, $\alpha_1 = \sin(\theta)$, and $\alpha_2 = \cos(\theta)$ for any $\theta \in [0, \pi/2]$. 

\bigskip

\noindent For simplicity, let $\widetilde{y} := \widetilde{y}_{m+1}$. By \cite[Lemma 1 and Lemma 4]{kwon2026outofdistribution}, we have
    \begin{align}
        \mbb{E} &\left[ \left(\widetilde{y} - g^\star_{\mc{W}}(\widetilde{\mbf{Z}}) \right)^2 \right] = \Big( \frac{1}{m} \operatorname{Tr}\left( \mbf{A}^\top\mbf{A} \right) + 1 \Big) \Big(\operatorname{Tr}\left(\overline{\mbf{\Sigma}}_t\right) + \sigma^2 \Big) - 2 \operatorname{Tr}\left(\overline{\mbf{\Sigma}}_t \mbf{A} \right) + \frac{m+1}{m} \operatorname{Tr}\left(\mbf{A} \overline{\mbf{\Sigma}}_t \mbf{A}^\top\right), \label{eq:task_test_risk_thm2}
    \end{align}
    where $\mbf{A} = \left( \frac{n+1}{n} \mbf{I}_d + \frac{M_s}{n} \mbf{\Sigma}^{-1} \right) ^{-1}$, $M_s = \tr(\mbf{\Sigma}) + \sigma^2$, and $\mbf{\Sigma} = \sum\limits_{k=1}^K \gamma_k \cdot \mbf{\Sigma}_{s, k}$. 
    \bigskip
    
    \noindent Let $\mbf{U} := \begin{bmatrix}
        \mbf{U}_{s, 1} & \mbf{U}_{s, 2} & \dots & \mbf{U}_{s, K} & \mbf{U}_\perp
    \end{bmatrix}$, where $\mbf{U}_\perp \in \mbb{R}^{d \times (d - Kr)}$ completes the orthonormal basis for $\mbb{R}^d$. By \Cref{lem:I-psd-inv}, 
    \begin{align*}
        \mbf{A} = \left(\frac{n+1}{n} \mbf{I}_d + \frac{M_s}{n} \mbf{\Sigma}^{-1} \right)^{-1} = \mbf{U \Lambda U}^\top,
    \end{align*}
    where 
    \begin{align*}
    \mbf{\Lambda} = \begin{bmatrix}
            \nu_1 \mbf{I}_r & & & \\
            & \ddots & & \\
            & & \nu_K \mbf{I}_r & \\
            & & & \nu_{K+1} \mbf{I}_{d-Kr} 
        \end{bmatrix}
    \end{align*}
    with $\nu_k = \frac{n(\gamma_k + \epsilon)}{(n+1)(\gamma_k + \epsilon) + M_s}$ for all $k \in [K]$, and $\nu_{K+1} = \frac{n \epsilon}{(n+1)\epsilon + r + \epsilon d + \sigma^2}$.
    
    \paragraph{Simplifying $\operatorname{Tr}(\overline{\mbf{\Sigma}}_t)$.} We can write $\operatorname{Tr}(\overline{\mbf{\Sigma}}_t)$ as such:
    \begin{equation*}
        \operatorname{Tr}(\overline{\mbf{\Sigma}}_t) = \operatorname{Tr}(\overline{\mbf{U}}_t \overline{\mbf{U}}_t^\top) + \epsilon \operatorname{Tr}(\mbf{I}_d) = r + \epsilon d.
    \end{equation*}
    
    \paragraph{Simplifying $\operatorname{Tr}(\mbf{A})$ and $\operatorname{Tr}(\mbf{A}^\top \mbf{A})$.} We can write $\operatorname{Tr}(\mbf{A})$ and $\operatorname{Tr}(\mbf{A}^\top \mbf{A})$ as such:
    \begin{align*}
        \tr(\mbf{A}) &= r \sum\limits_{k=1}^K \nu_k + (d-Kr) \nu_{K+1} \quad \text{and} \quad
        \tr(\mbf{A}^\top \mbf{A}) = \tr(\mbf{A}^2) = r\sum\limits_{k=1}^K \nu_k^2 + (d-Kr) \nu_{K+1}^2.
    \end{align*}
    
    \paragraph{Simplifying $\operatorname{Tr}(\overline{\mbf{\Sigma}}_t \mbf{A})$ and $\operatorname{Tr}(\mbf{A} \overline{\mbf{\Sigma}}_t \mbf{A}^\top)$.} Note $\operatorname{Tr}(\mbf{A} \overline{\mbf{\Sigma}}_t \mbf{A}^\top) = \operatorname{Tr}(\overline{\mbf{\Sigma}}_t \mbf{A}^2)$. We first focus on $\operatorname{Tr}(\overline{\mbf{\Sigma}}_t \mbf{A})$:
    \begin{align*}
        &\overline{\mbf{\Sigma}}_t \mbf{A} = \left( \overline{\mbf{U}}_t \overline{\mbf{U}}_t^\top + \epsilon \mbf{I}_d \right) \mbf{U} \mbf{\Lambda} \mbf{U}^\top = \overline{\mbf{U}}_t \overline{\mbf{U}}_t^\top \mbf{U} \mbf{\Lambda} \mbf{U}^\top + \epsilon \mbf{U} \mbf{\Lambda} \mbf{U}^\top \\
        &\implies \operatorname{Tr}\left( \overline{\mbf{\Sigma}}_t \mbf{A} \right) = \operatorname{Tr}\left( \overline{\mbf{U}}_t^\top \mbf{U} \mbf{\Lambda} \mbf{U}^\top \overline{\mbf{U}}_t \right) + \epsilon \operatorname{Tr}(\mbf{A}).
    \end{align*}
    Recall $\overline{\mbf{U}}_t = \sum\limits_{k=1}^K \alpha_k \mbf{U}_{s, k}$ where $\sum\limits_{k=1}^K \alpha_k^2 = 1$, and so we have
    \begin{align*}
        \overline{\mbf{U}}_t^\top \mbf{U} = \left( \sum\limits_{k=1}^K \alpha_k \mbf{U}_k \right)^\top \begin{bmatrix}
            \mbf{U}_{s, 1} & \dots & \mbf{U}_{s,K } & \mbf{U}_{\perp}
        \end{bmatrix} = \begin{bmatrix}
           \alpha_1 \mbf{I}_r & & & \\
           & \ddots & & \\
           & & \alpha_K \mbf{I}_r & \\
           & & & \mbf{0}_{(d - Kr) \times (d - Kr)}
        \end{bmatrix}
    \end{align*}
    Thus,
    \begin{align*}
        &\operatorname{Tr}\left( \overline{\mbf{U}}_t^\top \mbf{U} \mbf{\Lambda} \mbf{U}^\top \overline{\mbf{U}}_t \right) 
        = \operatorname{Tr}\left( \begin{bmatrix}
            \alpha_1^2 \nu_1  \mbf{I}_r & & & \\
            & \ddots & & \\
            & & \alpha_K^2 \nu_K  \mbf{I}_r & \\
            & & & \mbf{0}_{(d - Kr) \times (d - Kr)}
        \end{bmatrix} \right) = r \sum\limits_{k=1}^K \alpha_k^2 \nu_k
    \end{align*}
    Using a similar argument, 
    \begin{equation*}
        \operatorname{Tr}\left( \overline{\mbf{\Sigma}}_t^\top \mbf{A}^2 \right) = r \sum\limits_{k=1}^K \alpha_k^2 \nu_k^2 + \epsilon \operatorname{Tr}\left(\mbf{A}^2\right). 
    \end{equation*}

    \paragraph{Simplifying the test risk.} Substituting the expressions for the $\operatorname{Tr}(\cdot)$ terms into \Cref{eq:task_test_risk_thm2} yields
    \begin{align*}
        \mbb{E}\left[ \left(\widetilde{y} - g^\star_{\mc{W}}(\widetilde{\mbf{Z}})\right)^2 \right] = &\left(\frac{1}{m} \left( r \sum\limits_{k=1}^K \nu_k^2 + (d - Kr) \nu_{K+1}^2 \right) + 1 \right) \big( r + \epsilon d + \sigma^2 \big) \\
        &- 2 \left( r \sum\limits_{k=1}^K \alpha_k^2 \nu_k + \left( r \sum\limits_{k=1}^K \nu_k + (d - Kr) \nu_{K+1} \right)\epsilon  \right) \\
        &+ \frac{m+1}{m} \left( r \sum\limits_{k=1}^K \alpha_k^2 \nu_k^2 + \left( r \sum\limits_{k=1}^K \nu_k^2 + (d - Kr) \nu_{K+1}^2 \right)\epsilon \right).
    \end{align*}
    Taking $\epsilon \to 0$ results in the following expression for the test risk:
    \begin{align*}
        \lim\limits_{\epsilon \to 0} \mbb{E}\left[ \left(\widetilde{y} - g^\star_{\mc{W}}(\widetilde{\mbf{Z}})\right)^2 \right] &= r + \sigma^2 + \frac{(r + \sigma^2) r}{m} \sum\limits_{k=1}^K \left( \frac{\gamma_k n}{\gamma_k(n+1) + r + \sigma^2} \right)^2 \\
        &-2r \sum\limits_{k=1}^K \frac{\alpha_k^2 \gamma_k n}{\gamma_k(n+1) + r + \sigma^2} + \frac{(m+1)r}{m}\sum\limits_{k=1}^K \left(\frac{\alpha_k \gamma_k n}{\gamma_k(n+1) + r + \sigma^2}\right)^2
    \end{align*}
    Substituting $\gamma_k = \frac{1}{K}$ for all $k \in [K]$ and combining like terms yields
    \begin{align*}
        \lim\limits_{\epsilon \to 0} \mbb{E}\left[ \left(\widetilde{y} - g^\star_{\mc{W}}(\widetilde{\mbf{Z}})\right)^2 \right] = r + \sigma^2 + \frac{m + 1 + K(r + \sigma^2)}{m} \cdot \frac{rn^2}{(n + 1 + K(r + \sigma^2))^2} - \frac{2rn}{n + 1 + K(r + \sigma^2)}.
    \end{align*}
    Taking limits, we have
    \begin{align*}
        \lim\limits_{m \to \infty}\lim\limits_{n \to \infty}\lim\limits_{\epsilon \to 0} \mbb{E}\left[ \left(\widetilde{y} - g^\star_{\mc{W}}(\widetilde{\mbf{Z}})\right)^2 \right] = (r+\sigma^2) + r - 2r = \sigma^2, 
    \end{align*}    
 which completes the proof.  
\end{proof}

\subsection{Proof of \Cref{thm:neg_result}}

\begin{proof}
For simplicity, let us denote $\widetilde{y} \coloneqq \widetilde{y}_{m+1}$, and let  $\mbf{U} := \begin{bmatrix}
        \mbf{U}_s & \mbf{U}_{s, \perp} & \mbf{U}_{2q, \perp}
    \end{bmatrix} \in \mbb{R}^{d \times d}$, where $\mbf{U}_s, \mbf{U}_{s, \perp} \in \mbb{R}^{d \times q}$ and $\mbf{U}_{2q, \perp} \in \mbb{R}^{d \times (d - 2q)}$ all have orthonormal columns, while $\mbf{U}_s^\top \mbf{U}_{s, \perp} = \mbf{0}_{q \times q}$ and $ \mbf{U}_s^\top \mbf{U}_{2q, \perp} = \mbf{U}_{s, \perp}^\top \mbf{U}_{2q, \perp} = \mbf{0}_{q \times (d - 2q)}$. We re-write $\mbf{\Sigma}_s$ as such:
    \begin{equation*}
        \mbf{\Sigma}_s = \mbf{U}_s\mbf{U}_s^\top + \epsilon \cdot \mbf{I}_d = \mbf{U} \begin{bmatrix}
            \mbf{I}_q & \ \\
            \ & \mbf{0}_{(d - q) \times (d - q)}
        \end{bmatrix} \mbf{U}^\top + \epsilon \cdot \mbf{I} = \mbf{U} \begin{bmatrix}
            (1 + \epsilon) \mbf{I}_q & \ \\
            \ & \epsilon \mbf{I}_{d - q}
        \end{bmatrix} \mbf{U}^\top.
    \end{equation*}
    Note this is a valid eigendecomposition of $\mbf{\Sigma}_s$. Thus, by Lemma~\ref{lem:I-psd-inv}, we have
    \begin{equation} \label{eq:A-eigen}
        \mbf{A} = \left( \frac{n+1}{n} \mbf{I}_d + \frac{M_s}{n} \mbf{\Sigma}_s^{-1}  \right)^{-1} = \mbf{U} \mbf{\Lambda} \mbf{U}^\top,
    \end{equation}
    where 
    \begin{align*}
        \mbf{\Lambda} = \begin{bmatrix}
            \frac{n(1 + \epsilon)}{(n + 1)(1+\epsilon) + M_s} \cdot \mbf{I}_q & \ \\
            \ & \frac{n \epsilon}{(n+1)\epsilon + M_s} \cdot \mbf{I}_{d - q}
        \end{bmatrix} := \begin{bmatrix}
        \nu_1 \mbf{I}_q & \ \\
        \ & \nu_2 \mbf{I}_{d - q}
        \end{bmatrix}.
    \end{align*}
     and $M_s = \operatorname{Tr}(\mbf{\Sigma}_s) + \sigma^2$. 
    
    \bigskip
    
    \noindent By \cite[Lemma 1]{kwon2026outofdistribution} (and omitting the subscripts in the expectation), 
    \begin{equation} \label{eq:task_test_risk_prop1}
        \mbb{E}\left[ \left(\widetilde{y} - g_{\mc{W}}^\star( \widetilde{\mbf{Z}}) \right)^2 \right] = \left( \frac{1}{m} \operatorname{Tr}\left( \mbf{A}^\top\mbf{A} \right) + 1 \right) \Big(\operatorname{Tr}\left(\mbf{\Sigma}_t\right) + \sigma^2 \Big) - 2 \operatorname{Tr}\left(\mbf{\Sigma}_t \mbf{A} \right) + \frac{m+1}{m} \operatorname{Tr}\left(\mbf{A} \mbf{\Sigma}_t \mbf{A}^\top\right) .
    \end{equation} 
    We simplify the remaining $\operatorname{Tr}(\cdot)$ terms using Equation~(\ref{eq:A-eigen}).

    \paragraph{Simplifying $\operatorname{Tr}(\mbf{A})$ and $\operatorname{Tr}(\mbf{A}^\top \mbf{A})$.} Directly from Equation~(\ref{eq:A-eigen}):
    \begin{align*}
       \operatorname{Tr}(\mbf{A}) = q \cdot \nu_1 + (d - q) \cdot \nu_2 \; \; \text{and} \; \; \operatorname{Tr}(\mbf{A}^\top \mbf{A}) = \operatorname{Tr}(\mbf{A}^2) = q \cdot \nu_1^2 + (d - q) \cdot \nu_2^2,
    \end{align*}
    where $\mbf{A}^2 = \mbf{U} \mbf{\Lambda}^2 \mbf{U}^\top$.

    \paragraph{Simplifying $\operatorname{Tr}(\mbf{\Sigma}_t \mbf{A})$ and $\operatorname{Tr}(\mbf{A} \mbf{\Sigma}_t \mbf{A}^\top)$.} First note $\operatorname{Tr}(\mbf{A} \mbf{\Sigma}_t \mbf{A}^\top) = \operatorname{Tr}(\mbf{\Sigma}_t \mbf{A}^2)$. We first focus on $\operatorname{Tr}(\mbf{\Sigma}_t \mbf{A})$:
    \begin{align*}
        &\mbf{\Sigma}_t \mbf{A} = \left( \mbf{U}_t \mbf{U}_t^\top + \epsilon \mbf{I}_d \right) \mbf{U} \mbf{\Lambda} \mbf{U}^\top = \mbf{U}_t \mbf{U}_t^\top \mbf{U} \mbf{\Lambda} \mbf{U}^\top + \epsilon \mbf{U} \mbf{\Lambda} \mbf{U}^\top \\
        &\implies \operatorname{Tr}\left( \mbf{\Sigma}_t \mbf{A} \right) = \operatorname{Tr}\left( \mbf{U}_t^\top \mbf{U} \mbf{\Lambda} \mbf{U}^\top \mbf{U}_t \right) + \epsilon \operatorname{Tr}(\mbf{A}).
    \end{align*}
    Recall that $\mbf{U}_t$ is defined as follows:
    \begin{equation*}
        \mbf{U}_t = \mbf{U}_s\cos(\mbf{\Theta})   + \mbf{U}_{s,\perp}\sin(\mbf{\Theta}).
    \end{equation*}
    Therefore:
    \begin{align*}
        \mbf{U}_t^\top \mbf{U} = \left(\mbf{U}_s\cos(\mbf{\Theta})   + \mbf{U}_{s,\perp}\sin(\mbf{\Theta}) \right)^\top \begin{bmatrix}
            \mbf{U}_s & \mbf{U}_{s,\perp} & \mbf{U}_{2q, \perp}
        \end{bmatrix} = \begin{bmatrix}
            \cos(\mbf{\Theta})  & \sin(\mbf{\Theta})  & \mbf{0}_{q \times (d-2q)}
        \end{bmatrix},
    \end{align*}
    and thus,
    \begin{align*}
        &\operatorname{Tr}\left( \mbf{U}_t^\top \mbf{U} \mbf{\Lambda} \mbf{U}^\top \mbf{U}_t \right) = \operatorname{Tr} \left( \begin{bmatrix}
            \cos(\mbf{\Theta})  & \sin(\mbf{\Theta})  & \mbf{0}_{q \times (d-2q)}
        \end{bmatrix} \begin{bmatrix}
                \nu_1 \mbf{I}_q & \ & \ \\
                \ & \nu_2 \mbf{I}_q & \ \\
                \ & \ & \nu_2 \mbf{I}_{d - 2q}
            \end{bmatrix} \begin{bmatrix}
            \cos(\mbf{\Theta})  \\ \sin(\mbf{\Theta})  \\ \mbf{0}_{(d-2q) \times q}
        \end{bmatrix} \right) \\
        &= \operatorname{Tr}\left( \begin{bmatrix}
            \nu_1 \cos^2(\mbf{\Theta}) & \ & \ \\
            \ & \nu_2 \sin^2(\mbf{\Theta}) & \ \\
            \ & \ & \mbf{0}_{(d - 2q) \times (d - 2q)}
        \end{bmatrix} \right) = q \cdot \nu_1 \cdot \cos^2(\theta) + q \cdot \nu_2 \cdot \sin^2(\theta),
    \end{align*}
    where we used the fact that the principal angles are all equal to $\theta$.
    Using a similar argument, 
    \begin{equation*}
        \operatorname{Tr}\left( \mbf{\Sigma}_t^\top \mbf{A}^2 \right) = q \cdot \nu_1^2 \cdot \cos^2(\theta) + q \cdot \nu_2^2 \cdot \sin^2(\theta) + \epsilon \operatorname{Tr}(\mbf{A}^2)
    \end{equation*}

    \paragraph{Simplifying the Test Risk.} Substituting the expressions for the $\operatorname{Tr}(\cdot)$ terms into Equation~(\ref{eq:task_test_risk_prop1}) yields
    \begin{align*}
        \mbb{E}\left[ \left(\widetilde{y} - g_{\mc{W}}^\star(\widetilde{\mbf{Z}}) \right)^2 \right] = &\Big( \frac{1}{m} \big( q \nu_1^2 + (d - q) \nu_2^2 \big) + 1 \Big) \big( q + \epsilon d + \sigma^2 \big) \\
        &- 2 \left( q \nu_1 \cos^2(\theta) + q \nu_2 \sin^2(\theta) + \big( q \nu_1 + (d - q) \nu_2 \big)\epsilon  \right) \\
        &+ \frac{m+1}{m} \left( q \nu_1^2 \cos^2(\theta) + q \nu_2^2 \sin^2(\theta) + \big( q \nu_1^2 + (d - q) \nu_2^2 \big) \epsilon \right)
    \end{align*}
    Substituting the expressions for $\nu_1$ and $\nu_2$ and taking $\epsilon \rightarrow 0$ results in the following:
   \begin{align*}
        \lim\limits_{\epsilon \rightarrow 0} \mbb{E}\left[ \left(\widetilde{y} - g_{\mc{W}}^\star(\widetilde{\mbf{Z}}) \right)^2 \right]
        &= \left( \frac{q n^2}{m (n + 1 + q + \sigma^2)^2} + 1 \right)(q + \sigma^2) \\
        &\quad - \frac{2 q n \cos^2(\theta)}{n + 1 + q + \sigma^2} 
        + \frac{(m + 1) q n^2 \cos^2(\theta)}{m (n + 1 + q + \sigma^2)^2}
    \end{align*}  
    Subsequently taking $m, n \rightarrow \infty$ yields
    \begin{align*}
        \lim\limits_{m \rightarrow \infty} \lim\limits_{n \rightarrow \infty} \lim\limits_{\epsilon \rightarrow 0} \mbb{E}&\left[ \left( \widetilde{y} - g_{\mc{W}}^\star(\widetilde{\mbf{Z}}) \right)^2 \right]  = q + \sigma^2 - q \cos^2(\theta) = q \sin^2(\theta) + \sigma^2,
    \end{align*}
    which completes the proof.

\end{proof}

\subsection{Miscellaneous Results}

\begin{lemma} \label{lem:I-psd-inv}
    Let $0 \prec \mbf{\Sigma} \in \mbb{R}^{d \times d}$ and $c, k > 0$ be constants. Then, 
    \begin{equation}
        \left( c \cdot \mbf{I}_d + k \cdot \mbf{\Sigma}^{-1} \right)^{-1} = \mbf{V} \begin{bmatrix}
                \frac{\lambda_1}{c \cdot \lambda_1 + k} & 0 & \dots & 0  \\
                0 & \frac{\lambda_2}{c \cdot \lambda_2 + k} & \dots & 0 \\
                \vdots & \vdots & \ddots & \vdots \\
                0 & 0 & \dots & \frac{\lambda_d}{c \cdot \lambda_d + k}
            \end{bmatrix} \mbf{V}^\top,
    \end{equation}
    where $\mbf{V} \in \mbb{R}^{d \times d}$ is an orthonormal matrix whose columns are eigenvectors of $\mbf{\Sigma}$, and $\lambda_i$ is the $i^{th}$ largest eigenvalue of $\mbf{\Sigma}$.

    \begin{proof}
        Since $\mbf{\Sigma} \succ 0$, there exists an eigendecomposition $\mbf{\Sigma} = \mbf{V} \mbf{\Lambda} \mbf{V}^\top$ such that $\mbf{V}$ is an orthonormal matrix and $\mbf{\Lambda}$ is a diagonal matrix consisting of the real, positive eigenvalues of $\mbf{\Sigma}$, denoted as $\lambda_1, \lambda_2, \dots, \lambda_d$. Thus,
        \begin{align*}
            &\mbf{\Sigma}^{-1} = \mbf{V} \mbf{\Lambda}^{-1} \mbf{V}^\top \implies c \cdot \mbf{I}_d + k \cdot \mbf{\Sigma}^{-1} = \mbf{V}\underbrace{\begin{bmatrix}
                c + \frac{k}{\lambda_1} & 0 & \dots & 0  \\
                0 & c + \frac{k}{\lambda_2} & \dots & 0 \\
                \vdots & \vdots & \ddots & \vdots \\
                0 & 0 & \dots & c + \frac{k}{\lambda_d}
            \end{bmatrix}}_{\widetilde{\mbf{\Lambda}}} \mbf{V}^\top \\
            &\implies \left( c \cdot \mbf{I}_d + k \cdot \mbf{\Sigma}^{-1} \right)^{-1} = \mbf{V} \widetilde{\mbf{\Lambda}}^{-1} \mbf{V}^\top,
        \end{align*}
        which completes the proof.
    \end{proof}
\end{lemma}

\end{document}